\theoremstyle{plain}
\newtheorem{theorem}{Theorem}[section]
\newtheorem{proposition}[theorem]{Proposition}
\newtheorem{lemma}[theorem]{Lemma}
\newtheorem{corollary}[theorem]{Corollary}
\theoremstyle{definition}
\newtheorem{definition}[theorem]{Definition}
\newtheorem{assumption}[theorem]{Assumption}
\theoremstyle{remark}
\newtheorem{remark}[theorem]{Remark}
\newtheorem{example}[theorem]{Example}
\icmltitlerunning{Identifiability and Generalizability in Constrained Inverse Reinforcement Learning}
\newcommand{\bc}[1]{\left\{{#1}\right\}}
\newcommand{\br}[1]{\left({#1}\right)}
\newcommand{\bs}[1]{\left[{#1}\right]}
\newcommand{\abs}[1]{\left| {#1} \right|}
\newcommand{\myvec}[1]{\ensuremath{\begin{bmatrix}#1\end{bmatrix}}}
\newcommand{\f}[1]{\bm{#1}}
\newcommand\numberthis{\addtocounter{equation}{1}\tag{\theequation}}
\newcommand{\norm}[1]{\left\lVert#1\right\rVert}
\newcommand{\R}{\mathbb{R}}
\newcommand{\N}{\mathbb{N}}
\newcommand{\E}{\mathbb{E}}
\newcommand{\Pp}{\f{\mathbbm{P}}}
\newcommand{\defeq}{\vcentcolon=}
\newcommand{\fsigma}{\f{\sigma}}
\newcommand{\fsigmaEhat}{\hat{\fsigma}^{\text{E}}_{\mathcal{D}}}
\newcommand{\fsigmaE}{\fsigma^{\text{E}}}
\newcommand{\fsigmaET}{\fsigma^{\text{E}}_T}
\newcommand{\fnu}{\f{\nu}}
\newcommand{\fmu}{\f{\mu}}
\newcommand{\fmuE}{\fmu^{\text{E}}}
\newcommand{\fmuEhat}{\hat{\fmu}^{\text{E}}_{\mathcal{D}}}
\newcommand{\fmuhat}{\hat{\fmu}}
\newcommand{\fPhi}{\f{\Phi}}
\newcommand{\fPsi}{\f{\Psi}}
\newcommand{\feta}{\f{\eta}}
\newcommand{\fxi}{\f{\xi}}
\newcommand{\fXi}{\f{\Xi}}
\newcommand{\fw}{\f{w}}
\newcommand{\fwhat}{\hat{\f{w}}}
\newcommand{\fd}{\f{d}}
\newcommand{\fp}{\f{p}}
\newcommand{\fq}{\f{q}}
\newcommand{\fb}{\f{b}}
\newcommand{\fx}{\f{x}}
\newcommand{\fy}{\f{y}}
\newcommand{\fz}{\f{z}}
\newcommand{\id}{\f{I}}
\newcommand{\ones}{\f{1}}
\newcommand{\fr}{\f{r}}
\newcommand{\fD}{\f{D}}
\newcommand{\fP}{\f{P}}
\newcommand{\fE}{\f{E}}
\newcommand{\fA}{\f{A}}
\newcommand{\fB}{\f{B}}
\newcommand{\fe}{\f{e}}
\newcommand{\fpi}{\f{\pi}}
\newcommand{\fpihat}{\hat{\fpi}}
\newcommand{\fpiE}{\fpi^{\text{E}}}
\newcommand{\piE}{\pi^{\text{E}}}
\newcommand{\Lhat}{\hat{L}}
\newcommand{\frE}{\fr^{\text{E}}}
\newcommand{\frhat}{\hat{\fr}}
\newcommand{\DKL}{D_{\text{KL}}}
\DeclareMathOperator{\interior}{\mathbf{int}}
\DeclareMathOperator{\relint}{\mathbf{relint}}
\DeclareMathOperator{\relbd}{\mathbf{relbd}}
\DeclareMathOperator{\dom}{\mathbf{dom}}
\DeclareMathOperator{\rank}{\mathbf{rank}}
\DeclareMathOperator*{\argmin}{\arg\!\min}
\DeclareMathOperator*{\argmax}{\arg\!\max}
\DeclareMathOperator{\spn}{\mathbf{span}}
\DeclareMathOperator{\cone}{\mathbf{cone}}
\newcommand{\RL}{\mathsf{RL}_{\mathcal{F}}}
\newcommand{\RLO}{\mathsf{RL}_{\mathcal{M}}}
\newcommand{\IRL}{\mathsf{IRL}_{\mathcal{R}, \mathcal{F}}}
\newcommand{\IRLO}{\mathsf{IRL}_{\mathcal{F}}}
\newcommand{\ceil}[1]{\left \lceil #1 \right \rceil }
\newcommand{\headSpaceBefore}{\vspace{-0.0cm}} 
\newcommand{\headSpaceAfter}{\vspace{-0.0cm}} 
\newcommand{\parSpace}{\vspace{-0.0cm}} 
\begin{document}

\twocolumn[
\icmltitle{Identifiability and Generalizability in \\Constrained Inverse Reinforcement Learning}



\icmlsetsymbol{equal}{*}

\begin{icmlauthorlist}
\icmlauthor{Andreas Schlaginhaufen}{EPFL}
\icmlauthor{Maryam Kamgarpour}{EPFL}
\end{icmlauthorlist}

\icmlaffiliation{EPFL}{SYCAMORE Lab, École Polytechnique Fédérale de Lausanne (EPFL), 1015 Lausanne, Switzerland}

\icmlcorrespondingauthor{Andreas Schlaginhaufen}{andreas.schlaginhaufen@epfl.ch}

\icmlkeywords{Machine Learning, ICML}

\vskip 0.3in
]



\printAffiliationsAndNotice{}  
\begin{abstract}
Two main challenges in Reinforcement Learning (RL) are designing appropriate reward functions and ensuring the safety of the learned policy. To address these challenges, we present a theoretical framework for Inverse Reinforcement Learning (IRL) in constrained Markov decision processes. From a convex-analytic perspective, we extend prior results on reward identifiability and generalizability to both the constrained setting and a more general class of regularizations. In particular, we show that identifiability up to potential shaping \citep{cao2021identifiability} is a consequence of entropy regularization and may generally no longer hold for other regularizations or in the presence of safety constraints. We also show that to ensure generalizability to new transition laws and constraints, the true reward must be identified up to a constant. Additionally, we derive a finite sample guarantee for the suboptimality of the learned rewards, and validate our results in a gridworld environment.
\end{abstract}
\headSpaceBefore\vspace{-0.1cm}
\section{Introduction}
\headSpaceAfter
Reinforcement Learning (RL) has been successfully applied to many artificial intelligence tasks such as playing the games of Chess, Go, and Starcraft \citep{silver2016mastering}, control of humanoid robots \citep{akkaya2019solving}, or fine-tuning of large language models \citep{stiennon2020learning}. However, two of the key challenges in bringing RL to the real world include designing suitable reward functions for the problem at hand and guaranteeing safety of the learned policy.\looseness-1

While a surge of recent work addresses safe RL, past work has focused on the case in which the reward function is known. In many cases, such as autonomous driving or human-robot interactions, the reward functions are a priori unknown and are hard to design. The framework of inverse reinforcement learning (IRL) addresses learning reward functions based on expert demonstrations. However, most past work on IRL is not incorporating safety explicitly. While safety can be learned implicitly by learning a reward that penalizes unsafe behavior, \emph{incorporating safety in IRL is essential in ensuring that the learned rewards can be transferred to new tasks efficiently}. 
\vspace{-0.1cm}
\paragraph{Related Work}
Safe RL for Markov Decision Processes (MDPs) has been extensively studied and various notions of safety have been considered \citep{garcia2015comprehensive}. A well-established framework for safety is Constrained Markov Decision Processes (CMDPs) \citep{altman1999constrained}. In this setting, safety is defined through a set of cost functions on the state and actions, whose expectations should be bounded by a predefined threshold. The CMDP approach has been broadly applied to safe RL \citep{achiam2017constrained, chow2018lyapunov, turchetta2020safe}.

The above safe RL works all assume that the reward is known or can be evaluated along the MDP trajectories. Learning rewards from a data set of expert demonstrations -- i.e. the concept of IRL -- was first introduced by \citet{russell1998learning}. Whereas imitation learning \citep{pomerleau1988alvinn, syed2007game, ho2016generative, garg2021iq} attempts to directly recover the expert's policy, the goal of IRL is to learn the expert's latent reward function. The main motivation for IRL is that the reward, being independent of the transition law, provides the most succinct and transferable description of a task \citep{ng2000algorithms}. 

A fundamental challenge in IRL is that in general many rewards can generate a given optimal behavior, making it difficult to recover the true underlying reward function. In particular, \citet{ng1999policy} show that the set of optimal policies is always invariant under the so-called potential shaping transformations. Additionally, the non-uniqueness of the optimal policy corresponding to some reward can lead to trivial solutions to the IRL problem. For instance, for a constant reward all policies are optimal, but such a reward is most likely not informative for the expert's task. Various methods have been proposed to deal with the above two degeneracies. These include approaches based on margin maximization \citep{abbeel2004apprenticeship, ratliff2006maximum} or Bayesian reasoning \citep{ramachandran2007bayesian, choi2012nonparametric}. One promising approach is Maximum Causal Entropy IRL (MCE-IRL) \citep{ziebart2010modeling, zhou2017infinite}, which ensures uniqueness of the optimal policy via entropy regularization and has led to state-of-the-art imitation learning methods \citep{ho2016generative, garg2021iq}. MCE-IRL provably recovers the expert's true reward up to potential shaping transformations \citet{cao2021identifiability}. Moreover, the MCE-IRL framework has been extended to more general policy regularizations \citet{jeon2020regularized}, but the question of identifiability up to potential shaping transformations remains unanswered in this more general setting. 

While both IRL and safe RL problems have been studied extensively, less work has focused on safety aspects in IRL. Robustness to transition laws has been addressed by \citep{fu2017learning, viano2021robust}. Motivated by safety and risk considerations, \citet{majumdar2017risk} suggest that the expert may not be simply minimizing an expected cumulative cost (corresponding to a constraint), but some general coherent risk measure of that cost. Moreover, \citet{tschiatschek2019learner} provide a learner-aware MCE-IRL framework in which the learner has their own preference constraints, such as safety. Most recently, \citet{malik2021inverse} assume a known reward and address the problem of learning only the constraints from demonstrations, whereas \citet{ding2022x} consider IRL with combinatorial constraints. These recent works incorporate safety in various approaches but do not focus on identifiability and generalizability aspects. 
\vspace{-0.1cm}
\paragraph{Contribution}
Approaching the CMDP problem as a linear program in the so-called occupancy measure, we present a constrained IRL framework for arbitrary convex regularizations of the occupancy measure. In this general setting, we then address the questions of identifiability and generalizability to new transition laws and constraints. In particular, we first show that arbitrary (strictly) convex policy regularizations \citep{geist2019theory} yield a (strictly) convex regularization of the occupancy measure, and are hence naturally incorporated in our framework (Proposition~\ref{prop:convexity}). Our first main result (Theorem~\ref{thm:identifiability}) is a complete characterization of the set of rewards for which a given expert is optimal. Notably, we show that identifiability up to potential shaping transformations in MCE-IRL is a consequence of entropy regularizations and generally no longer holds for other regularizations (such as e.g. sparse Tsallis entropy \citep{lee2018maximum}), nor in the constrained setting. Our second main result (Theorem~\ref{thm:intersection}) shows that generalizability to new transition laws and safety constraints is only possible if the expert's reward is recovered up to a constant. Furthermore, we provide a verifiable sufficient condition for generalizability.  Our proof techniques, based on convex analysis, unify and generalize past work. In Section~\ref{sec:finite_sample}, we address the finite sample setting and provide a novel result for the number of expert demonstrations needed to recover a reward whose optimal policy is close to the expert's policy. In Section~\ref{sec:experiments}, we experimentally verify our results in a gridworld environment.\looseness-1
\headSpaceBefore
\section{Background}\label{sec:background}
\vspace{-0.1cm}
\paragraph{Notation}
We use $\N, \R$, and $\R_+$ to denote the set of natural, real, and non-negative real numbers, respectively. For an arbitrary set $\mathcal{X}$ we denote $2^{\mathcal{X}}$ for the set of all subsets of $\mathcal{X}$, and for a finite set $\mathcal{Y}$ we denote $\Delta_{\mathcal{Y}}$ for the probability simplex over $\mathcal{Y}$. For two vectors $\f{a}, \f{b}\in\R^l$ we notate $\f{a}\leq\f{b}$ for the element-wise comparison. Furthermore, we denote $\id_l$ for the identity matrix in $\R^l$, we let $\ones_l\in\R^l$ be the all-one vector, and $\norm{\cdot}$ indicates a general norm on $\R^l$. We also frequently use $\norm{\cdot}_p$ with $p\in\bc{1,2, \infty}$ for the $p$-norms. Given two matrices $\fA, \fB$ with compatible dimensions we denote $\myvec{\fA & \fB}$ for their concatenation. For a set of vectors $\bc{\f{v}_1, \hdots, \f{v}_q}\subset\R^l$ we denote $\spn(\f{v}_1, \hdots, \f{v}_q)$ for their linear span, and we denote $\spn(\f{A})$ for the column span of some matrix $\f{A}\in\R^{l\times q}$. Similarly, we use $\cone(\f{v}_1, \hdots, \f{v}_q)\defeq\bc{\sum_{i=1}^q c_i \f{v}_i : c_i \geq 0}$ to denote the conic hull of $\bc{\f{v}_1, \hdots, \f{v}_q}$. Moreover, the Minkowski sum of two sets $\mathcal{X}, \mathcal{Y}$ is denoted as $\mathcal{X} + \mathcal{Y}\defeq\bc{x+y:x\in\mathcal{X}, y\in\mathcal{Y}}$ and as $x + \mathcal{Y}$ if $\mathcal{X}=\bc{x}$. It holds $\mathcal{X}+\emptyset = \emptyset$, as well as, $\spn \emptyset = \cone \emptyset = \bm 0$, where $\bm 0$ is the zero vector. For a set-valued mapping $g:\mathcal{X}\to 2^{\mathcal{Y}}$ we denote by $g(A)\defeq\bigcup_{x\in A} g(x)$ the image of $A\subseteq \mathcal{X}$ under $g$. We often encounter functions $\f{h}:\mathcal{S}\times\mathcal{A}\to\R^l$ with finite domain $\mathcal{S}\times\mathcal{A}=\bc{s_1, \hdots, s_n}\times\bc{a_1, \hdots, a_m}$. Here, we use the vector notation\vspace{-0.1cm}
\begin{align}\label{eq:vector_convention}
    \f{h} \defeq &\big[\f{h}(s_1, a_1),\f{h}(s_2, a_1),\hdots \f{h}(s_n, a_1), \\
    &\f{h}(s_1, a_2),\hdots, \f{h}(s_n, a_2), \hdots, \f{h}(s_n, a_m) ]^\top\in\R^{nm\times l}.\nonumber
\end{align}
Similarly, we identify functions $\mathcal{S}\to\R^l$ and $\mathcal{A}\to\R^l$ with matrices in $\R^{n\times l}$ and $\R^{m\times l}$. The interior $\interior \mathcal{X}$, the relative interior $\relint \mathcal{X}$, the relative boundary $\relbd \mathcal{X}$, and the normal cone $N_{\mathcal{X}}(\fx)$ of some set $\mathcal{X}\subseteq\R^l$; as well as the subdifferential $\partial g(\fx)$ of some function $g:\mathcal{X}\to \R$ are for completeness defined in Appendix~\ref{app:sec:notation}.
\vspace{-0.1cm}
\paragraph{Constrained Markov Decision Processes} 
We consider CMDPs \citep{altman1999constrained} defined by a tuple $M=\br{\mathcal{S},\mathcal{A}, \fP, \fnu_0, \fr, \fPsi, \fb, \gamma}$. Here, $\mathcal{S}$ and $\mathcal{A}$, with $\abs{\mathcal{S}}=n$ and $\abs{\mathcal{A}}=m>1$, denote the finite state and action spaces, $\fnu_0\in\Delta_{\mathcal{S}}$ the initial state distribution, $\fP:\mathcal{S}\times\mathcal{A}\to\Delta_{\mathcal{S}}$ a Markovian transition law, $\fr\in\R^{nm}$ a reward, and $\gamma\in(0,1)$ a discount factor. $\fPsi\defeq \myvec{\fPsi_1,\hdots, \fPsi_k}\in\R^{nm\times k}$ is a matrix of safety constraint costs and $\fb\in\R^k$ is the corresponding threshold. Starting from some initial state $s_0\sim\fnu_0$ the agent can at each step in time $t$, choose an action $a_t\in\mathcal{A}$, will arrive in some state $s_{t+1}\sim \fP(\cdot|s_t, a_t)$, and receives reward $\fr(s_t,a_t)$ and safety cost $\fPsi(s_t, a_t)$. The (regularized) CMDP problem is then defined as follows
\begin{alignat}{2}\label{eq:cmdp_policy}
    &\max_{\fpi\in\Pi}\;\; &&(1-\gamma)\E_{\fpi} \bs{\sum_{t=0}^\infty \gamma^t \bs{\fr(s_t,a_t)-\Omega\br{\fpi(\cdot|s_t)}}}\\
    &\text{s.t} &&(1-\gamma)\E_{\fpi} \bs{\sum_{t=0}^\infty \gamma^t \fPsi(s_t, a_t)}\leq \fb.\nonumber\vspace{-0.1cm}
\end{alignat}
Here, $\Pi:=\bc{\tilde{\fpi}:\mathcal{S}\to\Delta_{\mathcal{A}}}$ is the set of Markov policies, $\E_{\fpi}$ denotes the expectation with respect to the probability measure $\f{\Pp}_{\fnu_0}^{\fpi}$, induced by the initial state distribution $\fnu_0$ and the policy $a_t\sim \fpi(\cdot|s_t)$, on the sample space $(\mathcal{S}\times\mathcal{A})^\infty \defeq \bc{(s_0, a_0, s_1, a_1, \hdots): s_i\in\mathcal{S}, a_i\in\mathcal{A},\; i\in\N}$. The factor $(1-\gamma)$ is introduced for convenience. Furthermore, $\Omega:\Delta_{\mathcal{A}}\to\R$ is a convex regularization, which if strictly convex ensures uniqueness of the optimal policy \citep{geist2019theory}.

A widely used regularization is the negative Shannon entropy\footnote{Here, we use the convention $0\log 0 = 0$, which is standard in information theory \citep{cover1999elements} in order to continuously extend $H$ onto the non-negative orthant.} $\Omega = -\beta H$, with $\beta>0$ and
\begin{equation}\label{eq:entropy_reg}\vspace{-0.1cm}
    H:\Delta_{\mathcal{A}}\to\R_+, \; \fd\mapsto H(\fd) = -\sum_a \fd(a)\log\fd(a).\vspace{-0.1cm}
\end{equation}
For entropy regularization, the optimal policy can (under Assumption~\ref{ass:slater}) be shown to be non-vanishing (see Appendix~\ref{app:subsec:entropy_regularization}). Thus, it is often used to foster exploration during optimization \citep{neu2017unified, haarnoja2018soft}. Other regularizations such as the sparse Tsallis entropy \citep{lee2018sparse} lead to more sparse optimal policies. Next, we continue with a convex reformulation of problem~\eqref{eq:cmdp_policy}.\looseness-1
\vspace{-0.2cm}
\section{Convex Viewpoint}\label{sec:convex_viewpoint}
\vspace{-0.1cm}
\paragraph{Convex Reformulation}
While the optimization problem \eqref{eq:cmdp_policy} is in general non-convex \citep{agarwal2019reinforcement}, it admits a convex reformulation. To see this, we introduce the state-action occupancy measure $\fmu^{\fpi}\in\Delta_{\mathcal{S}\times\mathcal{A}}$ defined by\vspace{-0.1cm}
\begin{equation}
    \fmu^{\fpi}(s,a)\defeq (1-\gamma) \bs{\sum_{t=0}^\infty \gamma^t \f{\Pp}_{\fnu_0}^{\fpi}(s_t=s,a_t=a)}.\vspace{-0.1cm}
\end{equation}
For any function $\bm h:\mathcal{S}\times\mathcal{A}\to \R^l$ this allows us to rewrite $(1-\gamma)\E_{\fpi} \bs{\sum_{t=0}^\infty \gamma^t \bm h(s_t,a_t)} = \bm h^\top \fmu^{\fpi}$.
As shown by \citet{puterman1994markov}, the set of valid occupancy measures $\mathcal{M}\defeq\bc{\fmu^{\fpi}:\fpi\in\Pi}\subseteq \Delta_{\mathcal{S}\times\mathcal{A}}$ is characterized by the Bellman flow constraints\vspace{-0.1cm}
\begin{equation}\label{eq:bellman_flow}
    \mathcal{M} =  \bc{\fmu\in\R^{nm}_+ : (\fE  - \gamma \fP)^\top\fmu = (1-\gamma)\fnu_0},\vspace{-0.1cm}
\end{equation}
where $\fE \defeq \myvec{\id_n & \hdots & \id_n}^{\top}\in\R^{nm\times n}$, $\fP\in\R^{nm\times n}$ is the transition law in matrix form following our convention introduced in \eqref{eq:vector_convention}, and $(\fE^\top\fmu)(s) = \sum_{a}\fmu(s,a)$ is the so-called state occupancy measure. We refer to states with zero state occupancy measure as unvisited. 

As stated by \citet{puterman1994markov} there is a one-to-one mapping $T:\mathcal{M}\to\Pi, \fmu\mapsto\fpi^{\fmu}$ defined via\vspace{-0.1cm}
\begin{equation}\label{eq:pi_mu}
    \fpi^{\fmu}(a|s) \defeq \begin{cases} \fmu(s,a)/(\fE^\top\fmu)(s)&, (\fE^\top\fmu)(s)>0\\ 1/|\mathcal{A}|&, \text{ otherwise.} \end{cases}\vspace{-0.1cm}
\end{equation}
The choice $\fpi^{\fmu}(a|s)=1/|\mathcal{A}|$ for unvisited states is arbitrary. Without regularization, the above one-to-one mapping proves equivalence of the CMDP problem~\eqref{eq:cmdp_policy} to a linear program in the occupancy measure. Our next result shows that for arbitrary (strictly) convex regularizations $\Omega$, the objective of \eqref{eq:cmdp_policy} is still (strictly) convex in the occupancy measure.\looseness-1
\begin{proposition}\label{prop:convexity}
    Let $f(\fmu)=\E_{(s,a)\sim\fmu}\bs{\Omega\left(\fpi^{\fmu}(\cdot|s)\right)}$. \vspace{-0.4cm}
    \begin{enumerate}[(a)]
        \item If $\Omega$ is convex, then so is $f$.\vspace{-0.2cm}
        \item If $\Omega$ is strictly convex, then so is $f$.
    \end{enumerate}\vspace{-0.2cm}
\end{proposition}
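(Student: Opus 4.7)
The plan is to rewrite $f$ as a sum of perspective functions of $\Omega$ composed with linear maps in $\fmu$, and to invoke standard convex-analytic facts. Setting $\fmuS\defeq\fE^\top\fmu$ and using $\fpi^{\fmu}(a|s)=\fmu(s,a)/\fmuS(s)$ on visited states, one has
\[
    f(\fmu)=\sum_{s}\fmuS(s)\,\Omega\!\left(\tfrac{\fmu(s,\cdot)}{\fmuS(s)}\right),
\]
with the convention that the summand is $0$ whenever $\fmuS(s)=0$; this is consistent with the closed perspective of $\Omega$ because $\fmu\geq\f{0}$ forces $\fmu(s,\cdot)=\f{0}$ on such states. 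Crucially, the map $\fmu\mapsto(\fmu(s,\cdot),\fmuS(s))$ is linear for each $s$.

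For part (a), I would invoke the standard fact that the closed perspective $(\fx,t)\mapsto t\,\Omega(\fx/t)$ of a convex $\Omega$ is jointly convex on $\R^m_+\times\R_+$. Precomposing with the above linear map preserves convexity in $\fmu$, and summing over $s$ preserves it once more, which yields convexity of $f$ on $\mathcal{M}$.

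For part (b) the perspective is not strictly convex---being positively homogeneous of degree one, it is affine along rays from the origin---so strict convexity cannot be extracted term-by-term. Inspecting the inequalities behind the perspective argument, however, shows that strict convexity of $\Omega$ upgrades the $s$-th inequality to a strict one whenever both $\fmuS^{(1)}(s),\fmuS^{(2)}(s)>0$ and the normalized vectors $\fmu^{(1)}(s,\cdot)/\fmuS^{(1)}(s)$ and $\fmu^{(2)}(s,\cdot)/\fmuS^{(2)}(s)$ differ, i.e., whenever the induced policies $\fpi^{\fmu^{(1)}}$ and $\fpi^{\fmu^{(2)}}$ disagree at a commonly visited state. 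It therefore suffices to show that any two distinct $\fmu^{(1)},\fmu^{(2)}\in\mathcal{M}$ induce policies that disagree at some state visited by both.

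This last claim is the main obstacle, and I would prove it by contradiction using the Bellman flow characterization in \eqref{eq:bellman_flow}. Let $S_i\defeq\bc{s:\fmuS^{(i)}(s)>0}$ and assume the induced policies agree on $S_1\cap S_2$. Since $\fmuS^{(i)}\geq(1-\gamma)\fnu_0$, one has $\supp(\fnu_0)\subseteq S_1\cap S_2$, so starting from $s_0\sim\fnu_0$ and taking the common action $a_0\sim\fpi^{\fmu^{(1)}}(\cdot|s_0)=\fpi^{\fmu^{(2)}}(\cdot|s_0)$ yields the same distribution over $s_1$ under both policies; in particular, $s_1$ cannot lie in the symmetric difference $S_1\triangle S_2$ with positive probability, since that would place positive mass outside the support of one of the occupancy measures. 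Iterating this one-step argument shows that every reachable state lies in $S_1\cap S_2$, so $S_1=S_2$; with matching supports and agreeing policies, \eqref{eq:bellman_flow} uniquely determines the occupancy measure, forcing $\fmu^{(1)}=\fmu^{(2)}$---a contradiction.
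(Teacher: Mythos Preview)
For part (a) your argument and the paper's are the same idea: both use that each summand $\fmuS(s)\,\Omega(\fmu(s,\cdot)/\fmuS(s))$ is the perspective of $\Omega$ evaluated at a linear image of $\fmu$. The paper unwinds the perspective inequality by hand, whereas you invoke it as a standard fact; there is no substantive difference.

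For part (b) you take a genuinely different---and more complete---route. The paper simply decorates the convexity inequality with ``$(<)$'' in the strict case, tacitly assuming that for distinct $\fmu,\bar\fmu$ the arguments $\fmu_s/\ones^\top\fmu_s$ and $\bar\fmu_s/\ones^\top\bar\fmu_s$ differ at some jointly visited state. As you correctly note, the perspective is positively $1$-homogeneous, so this can fail on $\R^{nm}_+$ and even on $\Delta_{\mathcal{S}\times\mathcal{A}}$: take any common policy and two different state marginals. Your additional argument, which uses the Bellman-flow structure of $\mathcal{M}$ to show that distinct occupancy measures must induce different policies at some commonly visited state, is precisely what is needed to upgrade the inequality to a strict one on $\mathcal{M}$. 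The inductive reachability step is sound; to make the inference ``every reachable state lies in $S_1\cap S_2$, so $S_1=S_2$'' airtight, say explicitly that $S_i$ \emph{equals} the set of states reachable under $\fpi^{\fmu^{(i)}}$ from $\supp(\fnu_0)$, so $S_i\subseteq S_1\cap S_2$ for each $i$. The final step then follows from the injectivity of $T:\mathcal{M}\to\Pi$ already recorded in the paper: with $S_1=S_2$ and agreeing policies on that set (and the uniform convention off it), $\fpi^{\fmu^{(1)}}=\fpi^{\fmu^{(2)}}$, hence $\fmu^{(1)}=\fmu^{(2)}$.
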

\vspace{-0.2cm}
For the proof of Proposition~\ref{prop:convexity} we refer to Appendix~\ref{app:subsec:prop:convexity}. To the best of our knowledge, Proposition~\ref{prop:convexity} is a novel result that has previously only been shown for special cases such as the Shannon entropy \citep{ziebart2010modeling}.\footnote{\citet{jeon2020regularized} mention that strict convexity of a policy regularizer is not always guaranteeing strict convexity in the occupancy measure. However, Proposition~\ref{prop:convexity} shows the contrary.}

Due to the one-to-one mapping $T$ and Proposition~\ref{prop:convexity}, the regularized CMDP problem \eqref{eq:cmdp_policy} is equivalent to\vspace{-0.1cm}
\begin{equation}\label{eq:cmdp_occ}\tag{P}
    \max_{\fmu\in\mathcal{F}}  \;\; \fr^\top \fmu - f(\fmu),\vspace{-0.1cm}
\end{equation}
where $f$ is defined as in Proposition~\ref{prop:convexity} and the set of feasible occupancy measures is given by\vspace{-0.1cm}
\begin{equation}\label{eq:feasible_occ}
    \mathcal{F}\defeq\bc{\fmu\in\R^{nm}:\fmu\in\mathcal{M}, \;\fPsi^\top\fmu\leq \fb}\subseteq \Delta_{\mathcal{S}\times\mathcal{A}}.\vspace{-0.1cm}
\end{equation}
For the rest of the paper, we will be focusing on problem~\eqref{eq:cmdp_occ}. We let $f:\mathcal{X}\to \R$ be an arbitrary convex continuous regularization and $\mathcal{X}\subseteq\R^{nm}$ a closed convex set with $\Delta_{\mathcal{S}\times\mathcal{A}}\subseteq \mathcal{X}$. This formulation includes unregularized CMDPs \citep{altman1999constrained}, policy regularization \citep{geist2019theory}, as well as many other regularizations such as entropy regularization in the occupancy measure $f(\fmu) = -\beta H(\fmu)$.

If the feasible set $\mathcal{F}$ is non-empty and $f$ is strictly convex, it follows from compactness of $\mathcal{F}$ that problem \eqref{eq:cmdp_occ} admits a unique optimal solution. However, we will always state explicitly whether $f$ is assumed to be strictly convex or not. For the further analysis, it will be convenient to define the (set-valued) solution map $\RL:\R^{nm}\to 2^{\Delta_{\mathcal{S}\times\mathcal{A}}}$ via \vspace{-0.1cm}
\begin{equation}
    \RL(\fr)\defeq \argmax_{\fmu\in\mathcal{F}} \fr^\top\fmu - f(\fmu),\vspace{-0.1cm}
\end{equation}
and analogously $\RLO$ for the unconstrained MDP problem.
\paragraph{Strong Duality}
Next, we show that for strictly convex regularization, the CMDP problem \eqref{eq:cmdp_occ} is equivalent to an unconstrained MDP problem with a modified reward. To see this, we consider the Lagrangian dual problem of \eqref{eq:cmdp_occ} obtained via relaxation of the safety constraint\vspace{-0.1cm}
\begin{equation}\label{eq:cmdp_occ_dual}\tag{D}
    \min_{\fxi\geq\f{0}} \max_{\fmu\in\mathcal{M}} \fr^\top \fmu - f(\fmu) + \fxi^\top \br{\fb - \fPsi^\top \fmu}.\vspace{0.1cm}
\end{equation}
\begin{assumption}[Slater's condition]\label{ass:slater}
    Let $\relint\mathcal{F}\neq \emptyset$\footnote{Except for Theorem~\ref{thm:intersection}, our results continue to hold for the slightly weaker version of Slater's condition $\relint(\mathcal{X})\cap\mathcal{F}\neq \emptyset$, since the feasible set $\mathcal{F}$ is polyhedral.}.
\end{assumption}
\begin{assumption}[Strict convexity]\label{ass:strict_convexity}
    Let $f:\mathcal{X}\to \R$ be strictly convex.
\end{assumption}
Under Assumption~\ref{ass:slater} above, the optimal values of \eqref{eq:cmdp_occ} and \eqref{eq:cmdp_occ_dual} coincide and the dual optimum is attained \citep{altman1999constrained}. Under the additional assumption of strict convexity, we show that the CMDP problem \eqref{eq:cmdp_occ} is equivalent to an unconstrained MDP problem.
\begin{proposition}[Strong duality]\label{prop:strong_duality}
    If Assumption~\ref{ass:slater} and \ref{ass:strict_convexity} hold, the dual optimum of \eqref{eq:cmdp_occ_dual} is attained for some $\fxi^*\geq\bm 0$, and \eqref{eq:cmdp_occ} is equivalent to an unconstrained MDP problem of reward $\fr-\fPsi\fxi^*$. In other words, it holds
    \begin{equation}\label{eq:equivalence_constrained_unconstrained_MDP}
    \RL(\fr) = \RLO(\fr-\fPsi\fxi^*).
    \end{equation}
\end{proposition}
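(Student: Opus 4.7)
The approach is to combine the strong duality already established in the paragraph preceding the proposition with the uniqueness provided by strict convexity. Slater's condition (Assumption~\ref{ass:slater}) guarantees feasibility of the primal~\eqref{eq:cmdp_occ} and yields a dual optimum $\fxi^*\geq\bm{0}$ with matching optimal values. What remains is to identify the primal optimum with that of the unconstrained MDP associated to the modified reward $\fr - \fPsi\fxi^*$.

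First, I would rewrite the Lagrangian at the dual optimum as
\[L(\fmu,\fxi^*) = (\fr - \fPsi\fxi^*)^\top\fmu - f(\fmu) + \fxi^{*\top}\fb.\]
Since the last summand is independent of $\fmu$, this gives $\argmax_{\fmu\in\mathcal{M}} L(\fmu,\fxi^*) = \RLO(\fr - \fPsi\fxi^*)$. Next, using Assumption~\ref{ass:strict_convexity} together with compactness of $\mathcal{F}$ and $\mathcal{M}$ (both are subsets of $\Delta_{\mathcal{S}\times\mathcal{A}}$), I conclude that $\RL(\fr)$ and $\RLO(\fr - \fPsi\fxi^*)$ are singletons, say $\{\fmu^*\}$ and $\{\tilde{\fmu}\}$ respectively.

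Finally, I would invoke the saddle-point characterization of primal-dual optima for convex programs with strong duality: $\fmu^*$ is primal optimal and $\fxi^*$ is dual optimal if and only if $(\fmu^*,\fxi^*)$ is a saddle point of $L$. The forward implication immediately yields $\fmu^* \in \argmax_{\fmu \in \mathcal{M}} L(\fmu,\fxi^*) = \RLO(\fr - \fPsi\fxi^*) = \{\tilde{\fmu}\}$, so $\fmu^* = \tilde{\fmu}$ and the claimed equality~\eqref{eq:equivalence_constrained_unconstrained_MDP} follows. The main subtlety is the reverse inclusion $\RLO(\fr - \fPsi\fxi^*)\subseteq\RL(\fr)$, which in general requires a KKT complementary-slackness argument and can fail for convex but not strictly convex $f$; here strict convexity collapses each argmax to a single point, so that the single saddle-point inclusion above forces both sets to coincide.
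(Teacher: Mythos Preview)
Your proposal is correct and follows essentially the same approach as the paper: invoke Slater's condition for strong duality and dual attainment, use strict convexity to make both $\RL(\fr)$ and $\RLO(\fr-\fPsi\fxi^*)$ singletons, and then use the saddle-point property of $(\fmu^*,\fxi^*)$ to conclude $\fmu^*\in\argmax_{\fmu\in\mathcal{M}}L(\fmu,\fxi^*)=\RLO(\fr-\fPsi\fxi^*)$. Your explicit remark that the reverse inclusion is what fails without strict convexity is a nice clarification that the paper only makes elsewhere (before the proposition and in Appendix~\ref{app:subsec:remarks_strong_duality}).
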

Note that without strict convexity, we have $\RL(\fr) \subseteq \RLO(\fr-\fPsi\fxi^*)$. The proof of Proposition~\ref{prop:strong_duality} and a simple counterexample of why \eqref{eq:equivalence_constrained_unconstrained_MDP} does not hold in the unregularized case are provided in Appendix~\ref{app:subsec:prop:strong_duality} and \ref{app:subsec:remarks_strong_duality}, respectively. As a consequence of \eqref{eq:equivalence_constrained_unconstrained_MDP}, we may be tempted to ignore safety constraints for IRL and recover the modified reward $\fr-\fPsi\fxi^*$ via standard unconstrained IRL. However, as we discuss before Section~\ref{sec:finite_sample}, such a reward is not guaranteed to generalize to different transition laws and safety constraints.\looseness-1
\headSpaceBefore
\section{Constrained IRL}\label{sec:constrained_IRL}
\headSpaceAfter
\paragraph{Problem Formulation}\label{par:problem_formulation}
Given a CMDP without reward $M\setminus\fr =\br{\mathcal{S},\mathcal{A}, \fP, \fnu_0, \fPsi, \fb, \gamma}$ and a data set of demonstrations $\mathcal{D} = \bc{(s^i_t, a^i_t)_{t=0}^T}_{i=1}^N$ from some expert $\fmuE$, constrained IRL aims to recover a reward from some reward class $\mathcal{R}\subseteq \R^{nm}$ for which the expert is optimal. Unless stated otherwise, we let the reward class $\mathcal{R}$ be an arbitrary convex set. Clearly, the above IRL problem only has a solution under the following realizability assumption.
\begin{assumption}[Realizability] \label{ass:realizability}
Assume the expert is optimal for some $\fr^\text{E}\in \mathcal{R}$ i.e. $\fmuE \in \RL(\fr^\text{E})$. 
\end{assumption}
Next, we show that the constrained IRL problem can be formulated as an optimization problem. We first consider an idealized setting, where we are given access to the expert's true occupancy measure $\fmuE$ rather than to the demonstrations $\mathcal{D}$, and address the finite sample setting in Section~\ref{sec:finite_sample}.
\parSpace
\paragraph{Min-Max Formulation}
It is well-known that in the absence of constraints, the IRL problem can be captured as a min-max optimization problem \citep{ziebart2010modeling, ho2016generative}. In Proposition~\ref{prop:consistency} below we show that the same is true for constrained IRL.
\begin{proposition}\label{prop:consistency}
If Assumption~\ref{ass:realizability} holds, then the rewards optimizing
\begin{equation}\label{eq:cirl_occ_exact}\tag{IRL}
    \min_{\fr\in\mathcal{R}}\max_{\fmu\in\mathcal{F}}\;\;\fr^\top\br{\fmu - \fmuE} - f(\fmu),
\end{equation}
are exactly those rewards in $\mathcal{R}$ for which the expert occupancy measure is optimal in problem \eqref{eq:cmdp_occ}.
\end{proposition}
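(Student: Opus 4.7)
The plan is to decompose the min-max objective and reduce the claim to a simple feasibility observation. Fix $\fr\in\mathcal{R}$ and let
\[
L(\fr)\defeq\max_{\fmu\in\mathcal{F}}\;\fr^\top(\fmu-\fmuE)-f(\fmu)=V(\fr)-\fr^\top\fmuE,
\]
where $V(\fr)\defeq\max_{\fmu\in\mathcal{F}}\fr^\top\fmu-f(\fmu)$ is the optimal value of the forward problem \eqref{eq:cmdp_occ}. So \eqref{eq:cirl_occ_exact} is equivalent to minimizing $\fr\mapsto V(\fr)-\fr^\top\fmuE$ over $\mathcal{R}$.

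Next I would establish a uniform lower bound. By Assumption~\ref{ass:realizability}, there exists $\frE\in\mathcal{R}$ with $\fmuE\in\RL(\frE)\subseteq\mathcal{F}$, so in particular $\fmuE$ is feasible for the inner maximization for every $\fr\in\mathcal{R}$. Plugging $\fmu=\fmuE$ into the definition of $V(\fr)$ gives $V(\fr)\geq\fr^\top\fmuE-f(\fmuE)$, hence
\[
L(\fr)\;\geq\;-f(\fmuE)\qquad\text{for all }\fr\in\mathcal{R},
\]
a bound independent of $\fr$.

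Finally, I would characterize when equality is attained. The chain $V(\fr)\geq\fr^\top\fmuE-f(\fmuE)$ is tight precisely when $\fmuE$ attains the maximum defining $V(\fr)$, i.e.\ exactly when $\fmuE\in\RL(\fr)$. Equivalently, $L(\fr)=-f(\fmuE)$ iff $\fmuE\in\RL(\fr)$. Since realizability guarantees that this lower bound is achieved by at least one $\fr\in\mathcal{R}$ (namely $\frE$), the minimum in \eqref{eq:cirl_occ_exact} equals $-f(\fmuE)$, and its argmin over $\mathcal{R}$ is exactly $\{\fr\in\mathcal{R}:\fmuE\in\RL(\fr)\}$, which is the statement.

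There is no real obstacle here; the argument is purely a rewriting of the objective as $V(\fr)-\fr^\top\fmuE$ and the observation that $\fmuE$ is always a feasible candidate in the inner problem. The only subtlety worth flagging is that the result requires no strict convexity of $f$ and does not use strong duality—only the feasibility of $\fmuE$ guaranteed by realizability—so Proposition~\ref{prop:consistency} holds in the full generality of convex $f$ and convex $\mathcal{R}$ stated in the problem formulation.
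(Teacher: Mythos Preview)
Your proof is correct and follows essentially the same approach as the paper: both arguments plug the feasible point $\fmuE$ into the inner maximization to obtain a reward-independent lower bound, observe that the bound is tight precisely when $\fmuE\in\RL(\fr)$, and invoke realizability to certify that the bound is attained. The only cosmetic difference is that the paper first shifts the objective by the constant $f(\fmuE)$ (writing $L(\fmu,\fr)=J(\fmu,\fr)-J(\fmuE,\fr)$) so that the lower bound becomes $0$ instead of $-f(\fmuE)$.
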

The proof of Proposition~\ref{prop:consistency} can be found in Appendix~\ref{app:subsec:prop:consistency}. The intuition behind \eqref{eq:cirl_occ_exact} is to seek a reward $\fr\in\mathcal{R}$ for which the suboptimality of the expert is minimized. If Assumption~\ref{ass:realizability} fails -- that is, $\mathcal{R}$ does not contain a reward for which the expert is optimal -- then \eqref{eq:cirl_occ_exact} finds a reward for which the expert is least suboptimal. Motivated by Proposition~\ref{prop:consistency}, we define the (set-valued) IRL solution map $\IRL:\R^{nm}\to 2^{\mathcal{R}}$ via
\begin{equation}
    \IRL(\fmuE)\defeq \argmin_{\fr\in\mathcal{R}}\max_{\fmu\in\mathcal{F}}\;\;\fr^\top\br{\fmu - \fmuE} - f(\fmu).
\end{equation}
Additionally, we let $\IRLO \defeq \mathsf{IRL}_{\R^{nm}, \mathcal{F}}$ for the unrestricted reward class $\mathcal{R} = \R^{nm}$. Analogously, $\mathsf{IRL}_{\mathcal{R}, \mathcal{M}}$ and $ \mathsf{IRL}_{\mathcal{M}}$ are the solution maps for unconstrained IRL. Equipped with the above definitions, we can rewrite Proposition~\ref{prop:consistency} as
\begin{equation}
    \IRL(\fmuE) = \bc{\fr\in\mathcal{R}: \fmuE\in\RL(\fr)}.
\end{equation}
Furthermore, two simple consequences of Proposition~\ref{prop:consistency} are summarized in the following corollary.
\begin{corollary}\label{cor:consequences_consistency}
    If Assumption~\ref{ass:realizability} holds, then \vspace{0.3cm}\\
    \,(a) \; $\IRL(\fmuE) = \IRLO(\fmuE)\cap\mathcal{R}.$ \vspace{0.3cm}\\
    If additionally Assumption~\ref{ass:strict_convexity} holds, then \vspace{0.3cm}\\
    \,(b) \; $(\RL\circ\IRL)(\fmuE) = \bc{\fmuE}.$
\end{corollary}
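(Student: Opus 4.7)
Both claims reduce to the set-theoretic characterization supplied by Proposition~\ref{prop:consistency}, namely $\IRL(\fmuE) = \bc{\fr \in \mathcal{R} : \fmuE \in \RL(\fr)}$. My strategy is to unpack this identity for two different choices of reward class in part (a), and to combine it with uniqueness of the $\RL$ solution in part (b). I do not anticipate any genuine obstacle beyond careful bookkeeping, since the heavy lifting is already done by Proposition~\ref{prop:consistency}.

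For part (a), I would first observe that Assumption~\ref{ass:realizability} with reward class $\mathcal{R}$ trivially implies realizability with respect to the larger class $\R^{nm}\supseteq\mathcal{R}$, so that Proposition~\ref{prop:consistency} applies to $\IRLO$ as well and yields $\IRLO(\fmuE) = \bc{\fr\in\R^{nm}: \fmuE\in\RL(\fr)}$. Intersecting this set with $\mathcal{R}$ reproduces verbatim the characterization of $\IRL(\fmuE)$ obtained by applying Proposition~\ref{prop:consistency} to the original class $\mathcal{R}$, which directly delivers $\IRL(\fmuE) = \IRLO(\fmuE)\cap\mathcal{R}$.

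For part (b), I would first use Assumption~\ref{ass:strict_convexity}, compactness of $\mathcal{F}$, and non-emptiness of $\mathcal{F}$ (which is guaranteed because $\fmuE\in\mathcal{F}$ by realizability) to conclude that the strictly concave maximization problem \eqref{eq:cmdp_occ} admits a unique optimizer for every $\fr$, i.e., $\RL(\fr)$ is a singleton. Next, for any $\fr\in\IRL(\fmuE)$, Proposition~\ref{prop:consistency} forces $\fmuE\in\RL(\fr)$, and uniqueness then collapses this to $\RL(\fr) = \bc{\fmuE}$. Since $\fr^\text{E}\in\IRL(\fmuE)$ by Assumption~\ref{ass:realizability}, the set $\IRL(\fmuE)$ is non-empty, and taking the union of $\RL(\fr)$ over $\fr\in\IRL(\fmuE)$ yields $(\RL\circ\IRL)(\fmuE) = \bc{\fmuE}$, as required.
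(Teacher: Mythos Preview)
Your proposal is correct and matches the paper's intended approach: the paper does not spell out a proof but labels the corollary as ``two simple consequences of Proposition~\ref{prop:consistency},'' and your argument is precisely the unpacking of that proposition (applied to both $\mathcal{R}$ and $\R^{nm}$ for part (a), combined with uniqueness under strict convexity for part (b)).
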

Here, (a) states that once we know $\IRLO(\fmuE)$ we can recover $\IRL(\fmuE)$ for any realizable expert via intersection with the reward class $\mathcal{R}$, and (b) shows that if the regularization $f$ is strictly convex, we uniquely recover the expert from the learned reward. In contrast, without strict convexity, there may be trivial solutions to the IRL problem that do not provide any insight into the expert's behavior. For example, let $f=0$ and $\bm 0\in\mathcal{R}$. Then, all occupancy measures are optimal for $\bm 0$ and hence we have $(\RL\circ\IRL)(\fmuE) = \mathcal{F}$ for any expert occupancy measure $\fmuE\in\mathcal{F}$. A popular strictly convex regularization in IRL is the negative Shannon entropy \eqref{eq:entropy_reg} leading to the widely used MCE-IRL algorithm \citep{ziebart2010modeling}. Moreover, other regularizations considered in the IRL literature are sparse Tsallis entropy \citep{lee2018maximum} or exponential policy regularization \citep{jeon2020regularized}. 

Next, we will explicitly characterize the set of rewards $\IRL(\fmuE)$ that can be recovered via constrained IRL. To this end, we will first consider the unrestricted reward class $\mathcal{R}=\R^{nm}$ and recover $\IRL(\fmuE)$ via Corollary~\ref{cor:consequences_consistency}(b).

\parSpace\vspace{-0.2cm}
\paragraph{Identifiability}
 Trivially, the optimal occupancy measure in \eqref{eq:cmdp_occ} is invariant to constant shifts of the reward. Furthermore, \citet{ng1999policy} show that if the reward is allowed to depend on the consecutive state $s'\sim \fP(\cdot|s,a)$, the so-called \emph{potential shaping transformations} $\bar{\fr}(s,a,s')\mapsto\bar{\fr}(s,a,s')+\feta(s) - \gamma \feta(s')$ leave the optimal occupancy measure in \eqref{eq:cmdp_occ} invariant. Using the conversion $\fr(s,a)\defeq \E_{s'\sim \fP(\cdot|s,a)} \bar{\fr}(s,a,s')$, potential shaping reduces in our setting to $\fr\mapsto \fr  + \Delta_{\fr}$ with
\begin{alignat}{3}
    &\Delta_{\fr}\in\mathcal{U} &&\defeq \big\{\Delta_{\fr}\in&&\R^{nm}:\; \Delta_{\fr}(s,a) = \feta(s) \\
    &&&&&- \gamma\E_{s'\sim \fP(\cdot|s,a)} \feta(s'), \;\feta\in\R^n \big\} \nonumber\\
    &&& = \spn(\fE&&-\gamma\fP). \nonumber\vspace{-0.1cm}
\end{alignat}
As a consequence, we expect $\IRLO(\fmuE)\supseteq\frE+\mathcal{U}$. A recent result by \citet{cao2021identifiability} and \citet{skalse2022invariance} shows that for standard unconstrained MCE-IRL it holds
\begin{equation}\label{eq:unconstrained-mce-irl-identifiability}
    \mathsf{IRL}_{\mathcal{M}}(\fmuE) = \frE + \mathcal{U} = \beta\log \fpi^{\fmuE} + \mathcal{U},
\end{equation}
with $\beta(\log \fpi^{\fmuE})(s,a) = \beta\log\fpi^{\fmuE}(a|s) = \nabla f(\fmuE)(s,a)$. In other words, for $\mathcal{R} = \R^{nm}$ the expert's reward can be identified up to potential shaping in MCE-IRL. Example~\ref{ex:1} below shows that for MCE-IRL identifiability up to potential shaping is lost when there are active safety constraints\footnote{We say that an inequality constraint is active if it is satisfied with equality.}.
\vspace{0.2cm}
\begin{example}\label{ex:1}
Consider a single state CMDP with $\mathcal{A}=\bc{a_1, a_2}$ and the constraint $\fmu(a_2) \leq 3/4$ i.e. $\fPsi = \myvec{0, 1}^\top$ and $b=3/4$. In this simplified setting it holds $\fmu^{\fpi}=\fpi$, $\mathcal{M} = \Delta_{\mathcal{A}}$, and $\mathcal{U}=\spn(\ones_2)$. Let the expert $\fmuE$ be optimal for $\frE=[0,2]^\top$ and the entropy regularization $f(\fmu)=\E_{a\sim\fmu}\log \fmu(a)$. Then, by solving the optimality conditions we can show that $\fmuE = [1/4, 3/4]^\top$ and  $\nabla f(\fmuE) \approx [-0.39, 0.71]^\top$. However, as illustrated in Figure~\ref{fig:rev:ex1}, we have $\frE\notin \mathsf{IRL}_{\mathcal{M}}(\fmuE) = \nabla f(\fmuE) + \mathcal{U}$. In fact, starting from any $\fr\in\mathsf{IRL}_{\mathcal{M}}(\fmuE)$ we can -- due to the active safety constraint -- increase $\fr(a_2)$ without affecting optimality of $\fmuE$. Hence, the expert is optimal for all $\fr\in\nabla f(\fmuE) + \mathcal{U} + \cone \fPsi$.
\end{example}
\begin{figure}[h!]
     \centering
     \includegraphics[width=0.48\textwidth]{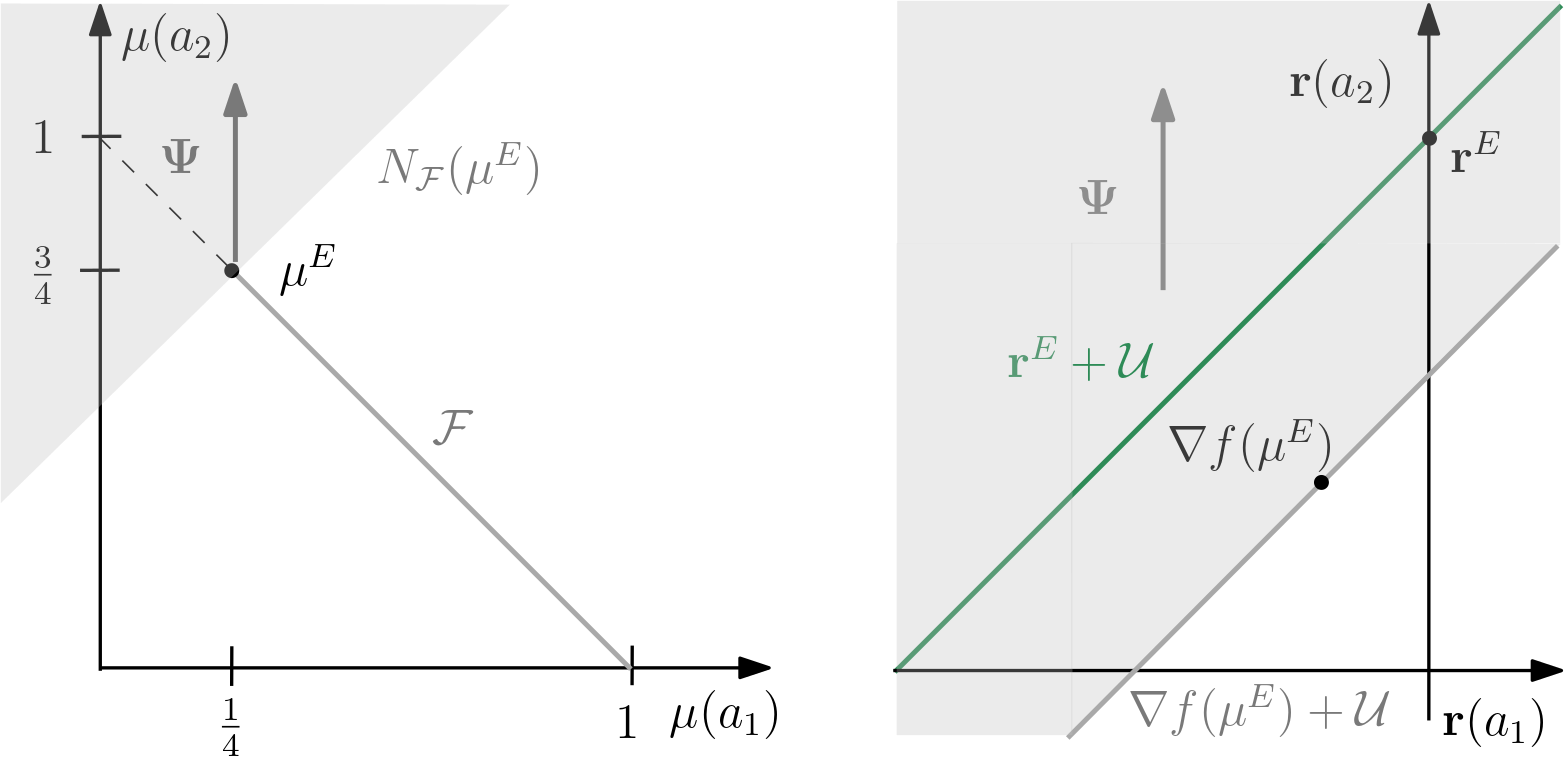}
     \vspace{-0.5cm}
    \begin{flushleft}\hspace{1cm}\footnotesize (a) Feasible set. \hspace{2.4cm}(b) Rewards.\end{flushleft}\vspace{-0.1cm}
     \vspace{-0.2cm}
     \caption{(a) illustrates the feasible set $\mathcal{F}$ and the normal cone $N_{\mathcal{F}}(\fmuE)$. (b) shows $\mathsf{IRL}_{\mathcal{M}}(\fmuE) = \nabla f(\fmuE) + \mathcal{U}$ and $\IRLO(\fmuE)=\nabla f(\fmuE) + \mathcal{U} + \cone \fPsi$.}\vspace{-0.3cm}
     \label{fig:rev:ex1}
\end{figure}
Our main result of this section shows that more generally identifiability up to potential shaping is lost whenever there are active inequality constraints. In particular, under Assumption~\ref{ass:slater} an occupancy measure $\fmu\in\mathcal{F}$ is optimal for some reward if and only if this reward is contained in the Minkowski sum of the subdifferential of $f$ and the normal cone to $\mathcal{F}$ at $\fmu$. Moreover, the normal cone decomposes into the linear subspace of potential shaping transformation $\mathcal{U}$ and additional conic combinations of the gradients of active inequality constraints. The latter become nonzero when $\fmu$ lies on the relative boundary of the feasible set. In this case, we may have $\IRLO(\fmuE)\supset \frE+\mathcal{U}$ and $\frE\notin\mathsf{IRL}_{\mathcal{M}}(\fmuE)$.
\begin{theorem}\label{thm:identifiability}
    Let Assumption~\ref{ass:slater} hold and consider $\fmu\in\mathcal{F}$. Let $\mathcal{I}(\fmu)$ and $\mathcal{J}(\fmu)$ denote the set of indices of active inequality constraints under $\fmu$ i.e. $\fPsi_{i}^\top\fmu=\fb_i$ and $\fmu(s,a) = 0$ if and only if $i\in\mathcal{I}(\fmu)$ and $(s,a)\in\mathcal{J}(\fmu)$. Then,
    \begin{equation}\label{eq:identifiability_general}
        \fmu\in \RL(\fr) \iff \fr\in\partial f(\fmu) + N_{\mathcal{F}}(\fmu),
    \end{equation}
    where $N_{\mathcal{F}}(\fmu) = \mathcal{U} + \mathcal{C}(\fmu) + \mathcal{E}(\fmu)$ with 
    \begin{align}
        \mathcal{C}(\fmu)&\defeq \cone\br{\bc{\fPsi_i}_{i\in \mathcal{I}(\fmu)}},\nonumber\\
        \mathcal{E}(\fmu)&\defeq \cone\br{\bc{-\fe_{s,a}}_{(s,a)\in\mathcal{J}(\fmu)}}.\nonumber
    \end{align}
    Here, $\fe_{s,a}\in\R^{nm}$ denote the standard unit vectors with $\fe_{s,a}(s',a')=1$ if $(s,a)=(s',a')$ and $\fe_{s,a}(s',a')=0$ otherwise.
\end{theorem}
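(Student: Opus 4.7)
My plan is to recognize that $\fmu\in\RL(\fr)$ is exactly the first-order optimality condition for the convex program $\min_{\fmu\in\mathcal{F}} f(\fmu) - \fr^\top\fmu$. Encoding the constraint by the indicator $\delta_{\mathcal{F}}$, $\fmu$ is optimal if and only if $\f{0}\in\partial\bigl(f - \fr^\top\cdot + \delta_{\mathcal{F}}\bigr)(\fmu)$. Provided the subdifferential sum rule applies, this rearranges to $\fr \in \partial f(\fmu) + N_{\mathcal{F}}(\fmu)$, which is precisely \eqref{eq:identifiability_general}. Both directions of the ``$\iff$'' follow from this single characterization, so the theorem reduces to two independent tasks: computing $N_{\mathcal{F}}(\fmu)$ explicitly, and justifying the sum rule.

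For the first task, I would exploit that $\mathcal{F}$ is a polyhedron cut out by the equalities $(\fE-\gamma\fP)^\top\fmu = (1-\gamma)\fnu_0$ together with the inequalities $\fPsi_i^\top\fmu \leq \fb_i$ and $-\fe_{s,a}^\top\fmu \leq 0$. By the classical description of the normal cone to a polyhedron (e.g., Rockafellar, \emph{Convex Analysis}, Thm.~23.7), $N_{\mathcal{F}}(\fmu)$ is the Minkowski sum of the linear span of the equality-constraint gradients and the conic hull of the gradients of the \emph{active} inequalities. The equality gradients are the columns of $\fE-\gamma\fP$, whose span is $\mathcal{U}$; the active rows of $\fPsi^\top$ are $\{\fPsi_i\}_{i\in\mathcal{I}(\fmu)}$, whose conic hull is $\mathcal{C}(\fmu)$; and the active non-negativity gradients are $\{-\fe_{s,a}\}_{(s,a)\in\mathcal{J}(\fmu)}$, whose conic hull is $\mathcal{E}(\fmu)$. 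Summing yields the stated decomposition, and this step needs no constraint qualification beyond polyhedrality itself.

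The piece I expect to need the most care is the sum rule $\partial(f + \delta_{\mathcal{F}})(\fmu) = \partial f(\fmu) + N_{\mathcal{F}}(\fmu)$, which is where Assumption~\ref{ass:slater} enters. Since $f$ is convex and continuous on $\mathcal{X}\supseteq\Delta_{\mathcal{S}\times\mathcal{A}}\supseteq\mathcal{F}$, its effective domain equals $\mathcal{X}$. Because $\mathcal{F}$ is polyhedral, Rockafellar's refined sum rule (Thm.~23.8 and its polyhedral corollary) requires only that $\relint(\dom f)\cap\mathcal{F}\neq\emptyset$; this is supplied by Assumption~\ref{ass:slater} (or the polyhedral relaxation flagged in the footnote). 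With the sum rule in hand, chaining it with the polyhedral normal cone calculation closes both directions of the equivalence, finishing the proof.
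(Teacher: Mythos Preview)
Your proposal is correct and follows essentially the same route as the paper: both reduce optimality to $\fr\in\partial(f+\delta_{\mathcal{F}})(\fmu)$, invoke the Moreau--Rockafellar sum rule under Slater's condition to split this into $\partial f(\fmu)+N_{\mathcal{F}}(\fmu)$, and then read off the polyhedral normal cone decomposition. The only cosmetic difference is that the paper phrases the optimality step through the convex conjugate $g_{\mathcal{F}}^*(\fr)=\max_{\fmu}\fr^\top\fmu-g_{\mathcal{F}}(\fmu)$ and the equivalence $\fr\in\partial g_{\mathcal{F}}(\fmu)$, whereas you state it directly as a Fermat rule $\f{0}\in\partial(f-\fr^\top\cdot+\delta_{\mathcal{F}})(\fmu)$; these are the same condition.
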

\begin{remark}\label{rem:identifiability}
    Note that in the differentiable case, i.e. if $\partial f(\fmu)=\bc{\nabla f(\fmu)}$, the right-hand-side in \eqref{eq:identifiability_general} reduces to the standard first-order optimality condition
    \begin{equation}
        \nabla h(\fmu)^\top\br{\fmu'-\fmu}\leq 0, \forall \fmu' \in \mathcal{F},
    \end{equation}
    for maximization of $h(\fmu)=\fr^\top \fmu - f(\fmu)$ over the set $\mathcal{F}$. Furthermore, for entropy regularization we have $\partial f(\fmu) = \emptyset$ for $\fmu \in\relbd\mathcal{M}$ (see Corollary~\ref{app:cor:essential_smoothness}), which by condition~\eqref{eq:identifiability_general} ensures that the optimal occupancy measure $\fmu$ lies in the relative interior of $\mathcal{M}$ and hence $\mathcal{E}(\fmu)=\bm 0$.\looseness-1
\end{remark}
The proof of Theorem~\ref{thm:identifiability} rests on the optimality conditions for the CMDP problem~\eqref{eq:cmdp_occ} and is provided in Appendix~\ref{app:subsec:thm:identifiability}. Moreover, we provide an extension to state-action-state rewards in Appendix~\ref{app:subsec:state-action-state-rewards}. In light of the above result and Proposition~\ref{prop:consistency}, the rewards recovered via constrained IRL are characterized as follows:\looseness-1
\begin{corollary}\label{cor:identifiability_irl}
    Let Assumption~\ref{ass:slater} and \ref{ass:realizability} hold. Then, 
    \begin{equation}
        \IRLO(\fmuE) = \partial f(\fmuE) + \mathcal{U} + \mathcal{C}(\fmuE) + \mathcal{E}(\fmuE).
    \end{equation}
\end{corollary}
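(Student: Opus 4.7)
The corollary is essentially a direct combination of Proposition~\ref{prop:consistency} with Theorem~\ref{thm:identifiability}, applied to the unrestricted reward class $\mathcal{R} = \R^{nm}$. The plan is as follows.

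First, I would apply Proposition~\ref{prop:consistency} with $\mathcal{R} = \R^{nm}$ to rewrite
\begin{equation*}
    \IRLO(\fmuE) = \bc{\fr\in\R^{nm} : \fmuE \in \RL(\fr)}.
\end{equation*}
Note that Proposition~\ref{prop:consistency} requires realizability (Assumption~\ref{ass:realizability}); since we work with the unrestricted class $\R^{nm}$ and a realizable expert occupancy measure $\fmuE$ is assumed, the premise is satisfied, and in particular $\fmuE\in\mathcal{F}$ as $\fmuE\in\RL(\frE)\subseteq \mathcal{F}$.

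Next, I would invoke Theorem~\ref{thm:identifiability} at the point $\fmu=\fmuE\in\mathcal{F}$, using Slater's condition (Assumption~\ref{ass:slater}), to obtain the equivalence
\begin{equation*}
    \fmuE\in\RL(\fr)\;\iff\; \fr\in \partial f(\fmuE) + \mathcal{U} + \mathcal{C}(\fmuE) + \mathcal{E}(\fmuE).
\end{equation*}
Substituting this characterization into the previous display yields
\begin{equation*}
    \IRLO(\fmuE) = \partial f(\fmuE) + \mathcal{U} + \mathcal{C}(\fmuE) + \mathcal{E}(\fmuE),
\end{equation*}
since the right-hand side is contained in $\R^{nm}$ by construction and is nonempty (it contains $\frE$ by realizability, forcing in particular $\partial f(\fmuE)\neq\emptyset$).

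There is no real obstacle here beyond bookkeeping: the work has already been done in Proposition~\ref{prop:consistency} (turning the IRL definition into a set-membership condition) and Theorem~\ref{thm:identifiability} (the KKT-based decomposition of the normal cone into potential shaping, active safety constraints, and active nonnegativity constraints). The only subtlety worth mentioning explicitly in the proof is the verification that the hypotheses of both results are met at $\fmu = \fmuE$, which follows immediately from Assumption~\ref{ass:realizability}; hence the argument can be kept to essentially two lines once the above cited results are in place.
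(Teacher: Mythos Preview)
Your proposal is correct and mirrors exactly the paper's own argument: the corollary is stated immediately after Theorem~\ref{thm:identifiability} as a direct consequence of combining that theorem with Proposition~\ref{prop:consistency} for the unrestricted reward class, which is precisely the two-step combination you outline.
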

\vspace{-0.2cm}
Corollary~\ref{cor:identifiability_irl} shows that whenever $\fmuE\in\relbd\mathcal{F}$, then $\mathcal{C}(\fmuE)$ or $\mathcal{E}(\fmuE)$ is nonzero. From this, we observe two points. First, in the case of active safety constraint, we lose identifiability up to potential shaping i.e. \eqref{eq:unconstrained-mce-irl-identifiability} no longer holds. Second, in the case in which the feasible set is $\mathcal{M}$ (no safety constraints), the expert's reward is identifiable up to potential shaping if $\fmuE$ lies in $\relint\mathcal{M}$. While the entropy regularization ensures that any optimal occupancy measure lies in $\relint\mathcal{M}$, the following example shows that this is not the case for a $2$-norm regularization.

\begin{example}
    Consider the same MDP as in Example~\ref{ex:1}, but without constraint. Let again $\frE=[0,2]^\top$, and let $\fmuE_1$ be optimal for $\frE$ with regularization $f_1(\fmu)=\E_{a\sim\fmu}\log \fmu(a)$, and $\fmuE_2$ for $f_2(\fmu)=\norm{\fmu}_2^2/2$. Then, it can be shown that $\fmuE_1 = [0.12,0.88]^\top, \fmuE_2 = [0, 1]^\top$ and $\nabla f_1(\fmuE_1) \approx [-1.13, 0.87]^\top, \nabla f_2(\fmuE_2) = [0, 1]^\top$. While for $f_1$ it holds $\mathsf{IRL}_{1,\mathcal{M}}(\fmuE_1)=\frE+\mathcal{U}$, for $f_2$ the active non-negativity constraint $\fmuE_2(a_1)\geq 0$ allows us to decrease $\fr(a_1)$ without affecting optimality of $\fmuE_2$ (see Figure~\ref{fig:rev:ex2}). Moreover, according to Theorem 4.4 we have $\mathsf{IRL}_{2,\mathcal{M}}(\fmuE_2)=\nabla f(\fmuE_2)+\mathcal{U} + \cone(-\fe_{a_1})$.
\end{example}
\begin{figure}[h!]
     \centering
     \includegraphics[width=0.48\textwidth]{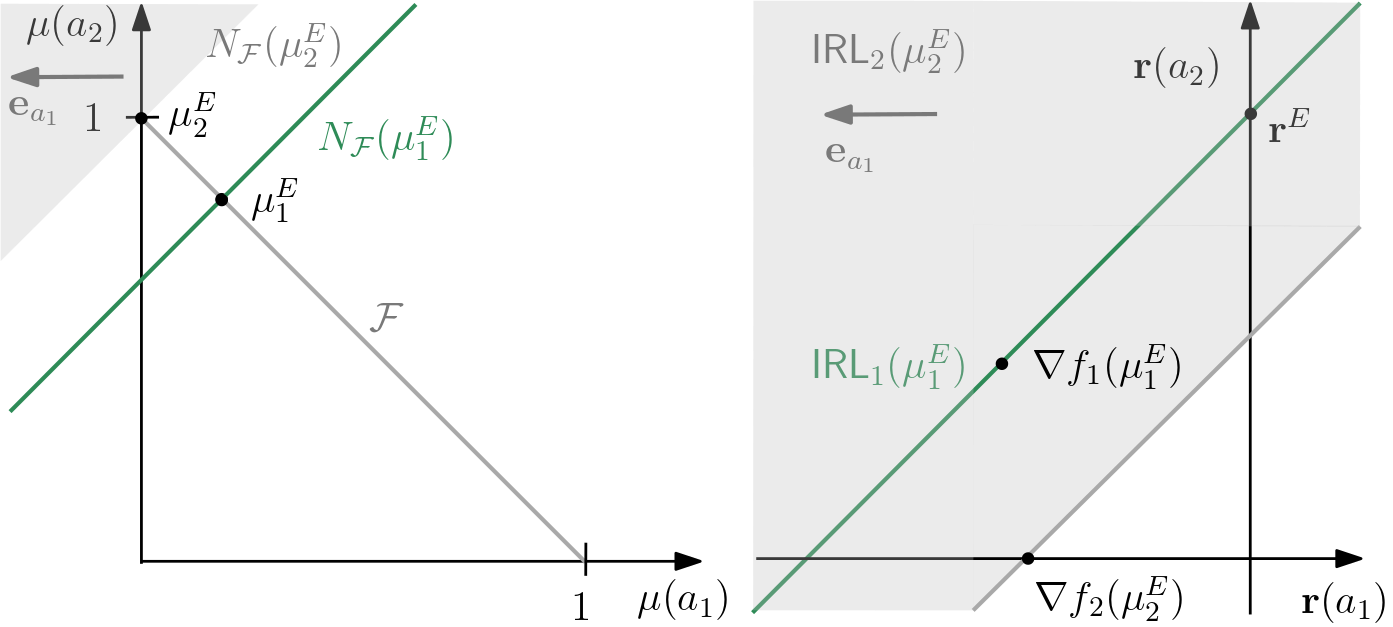}
     \vspace{-0.5cm}
    \begin{flushleft}\hspace{1cm}\footnotesize{(a) Feasible set.} \hspace{2.4cm}\footnotesize{(b) Rewards.}\end{flushleft}\vspace{-0.1cm}
     \vspace{-0.2cm}
     \caption{(a) illustrates the feasible set $\mathcal{F}$ and the normal cones $N_{\mathcal{F}}(\fmuE_1), N_{\mathcal{F}}(\fmuE_2)$. (b) shows $\mathsf{IRL}_{1,\mathcal{M}}(\fmuE_1)$ under $f_1$ (in green), and $\mathsf{IRL}_{2,\mathcal{M}}(\fmuE_2)$ under $f_2$ (in gray).}
     \label{fig:rev:ex2}\vspace{-0.2cm}
\end{figure}
More generally, the optimal occupancy measure is guaranteed to lie in $\relint\mathcal{M}$ if the gradient of the regularization becomes unbounded when approaching $\relbd\mathcal{M}$. This is formalized in Assumption~\ref{ass:essential_smoothness} and Corollary~\ref{cor:essential_smoothness_consequence} below.
\begin{assumption}\label{ass:essential_smoothness}
    Let $f:\mathcal{X}\to \R$ be such that:\vspace{-0.3cm}
\begin{enumerate}[(a)]
    \item $f$ is differentiable throughout $\interior\mathcal{X}$,\vspace{-0.2cm}
    \item $\lim_{k\to \infty} \norm{\nabla f(\fmu_k)}= \infty$ if $\br{\fmu_k}_{k\in\mathbb{N}}$ is a sequence in $\interior\mathcal{X}$ converging to a point $\fmu\in \relbd \mathcal{M}$.
\end{enumerate}
\end{assumption}
\begin{corollary}\label{cor:essential_smoothness_consequence}
    Let Assumptions~\ref{ass:slater}, \ref{ass:realizability}, \ref{ass:essential_smoothness} hold. Then, we have $\RL(\fr) \subset \relint\mathcal{M}$ for any $\fr\in\R^{nm}$ and
    \begin{equation}
        \IRLO(\fmuE) = \nabla f(\fmuE) + \mathcal{U} + \mathcal{C}(\fmuE).
    \end{equation}
\end{corollary}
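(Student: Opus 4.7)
The plan is to prove the corollary in two stages: first establish $\RL(\fr) \subset \relint\mathcal{M}$ for every $\fr \in \R^{nm}$, and then specialize Corollary~\ref{cor:identifiability_irl} to the expert, who by the first stage must lie in $\relint\mathcal{M}$. For the first stage, my strategy is to show that Assumption~\ref{ass:essential_smoothness} forces $\partial f(\fmu) = \emptyset$ whenever $\fmu \in \relbd\mathcal{M}$, and then invoke the identifiability characterization \eqref{eq:identifiability_general} from Theorem~\ref{thm:identifiability}: since $\fmu \in \RL(\fr)$ iff $\fr \in \partial f(\fmu) + N_\mathcal{F}(\fmu)$, and the Minkowski-sum convention $\mathcal{X} + \emptyset = \emptyset$ recalled in Section~\ref{sec:background} makes the right-hand side empty whenever $\partial f(\fmu) = \emptyset$, no reward can render a relative boundary point of $\mathcal{M}$ optimal.

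The key technical ingredient is therefore the lemma that $\partial f(\fmu^*) = \emptyset$ for every $\fmu^* \in \relbd\mathcal{M}$, which is a variant of Rockafellar's classical result that essentially smooth convex functions have empty subdifferential on the boundary of their domain. I would prove it by contradiction: suppose $g \in \partial f(\fmu^*)$. By Assumption~\ref{ass:slater} pick $\fmu_0 \in \relint\mathcal{F}$ and consider the ray $\fmu_t \defeq \fmu^* + t(\fmu_0 - \fmu^*)$, which by the accessibility lemma lies in $\relint\mathcal{M} \subseteq \interior\mathcal{X}$ for $t \in (0, 1]$, so $\norm{\nabla f(\fmu_t)}\to\infty$ by Assumption~\ref{ass:essential_smoothness}(b). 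On the other hand, the subgradient inequality $f(\fmu^* + sw) \ge f(\fmu^*) + s g^\top w$ combined with convexity shows that every inward directional derivative $f'(\fmu^*; w)$ is finite, and applying convexity of $f$ at the differentiable point $\fmu_t$ to $\fmu^* + tw$ yields $\nabla f(\fmu_t)^\top(w - (\fmu_0 - \fmu^*)) \le [f(\fmu^* + tw) - f(\fmu_t)]/t$, whose right-hand side tends to the finite value $f'(\fmu^*; w) - f'(\fmu^*; \fmu_0 - \fmu^*)$. Combined with the bound $g^\top(\fmu_0 - \fmu^*) \le \nabla f(\fmu_t)^\top(\fmu_0 - \fmu^*) \le \nabla f(\fmu_0)^\top(\fmu_0 - \fmu^*)$ coming from monotonicity of convex derivatives and evaluated at a linearly spanning set of inward directions $w$, this forces $\norm{\nabla f(\fmu_t)}$ to stay bounded along the ray, contradicting the blow-up.

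Given this lemma, the second stage is bookkeeping. Realizability (Assumption~\ref{ass:realizability}) produces $\frE$ with $\fmuE \in \RL(\frE)$, so by the first stage $\fmuE \in \relint\mathcal{M}$; consequently every non-negativity constraint is strict at $\fmuE$, so $\mathcal{J}(\fmuE) = \emptyset$ and $\mathcal{E}(\fmuE) = \cone \emptyset = \bm 0$. Moreover, $\fmuE \in \relint\mathcal{M} \subseteq \interior\mathcal{X}$ and $f$ is differentiable there by Assumption~\ref{ass:essential_smoothness}(a), so $\partial f(\fmuE) = \bc{\nabla f(\fmuE)}$. Substituting into Corollary~\ref{cor:identifiability_irl} then yields the claimed formula $\IRLO(\fmuE) = \nabla f(\fmuE) + \mathcal{U} + \mathcal{C}(\fmuE)$.

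The main obstacle is the convex-analytic lemma of the second paragraph. Unlike Rockafellar's classical statement, which assumes gradient blow-up on all of $\relbd\dom f$, Assumption~\ref{ass:essential_smoothness}(b) only asserts it on the (potentially strict) subset $\relbd\mathcal{M}$, so Rockafellar cannot be invoked directly; instead the contradiction must come from bounding $\nabla f(\fmu_t)$ along a single ray using only the finite inward directional derivatives at $\fmu^*$. Handling corners of $\mathcal{X}$ where not every direction admits both $+w$ and $-w$ inward requires some extra care, but because $\fmu^* \in \overline{\interior\mathcal{X}}$ the inward cone still contains a set that spans $\R^{nm}$, which is enough to close the argument.
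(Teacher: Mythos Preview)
Your proof is correct and follows the same route as the paper: show $\partial f(\fmu) = \emptyset$ on $\relbd\mathcal{M}$, combine this with Theorem~\ref{thm:identifiability} and the convention $\emptyset + N_\mathcal{F}(\fmu) = \emptyset$ to obtain $\RL(\fr) \subset \relint\mathcal{M}$, then specialize Corollary~\ref{cor:identifiability_irl} using $\mathcal{E}(\fmuE) = \bm 0$ and differentiability at $\fmuE$. The only difference is that the paper defers the subdifferential-emptiness step to \citep[Theorem 25.6]{rockafellar1970convex}, whereas you---correctly noting that Assumption~\ref{ass:essential_smoothness}(b) concerns only $\relbd\mathcal{M}$ rather than the full boundary of $\dom f$---reproduce the underlying ray argument directly, which makes your version slightly more self-contained.
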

\vspace{-0.2cm}
The proof of Corollary~\ref{cor:essential_smoothness_consequence} is provided in Appendix~\ref{app:subsec:cor:essential_smoothness_consequence}. Assumption~\ref{ass:essential_smoothness} is satisfied for the entropy regularization \eqref{eq:entropy_reg} or relative entropy regularization (see Corollary~\ref{app:cor:essential_smoothness}). However, certainly it is not satisfied for many other choices of regularization such as $f=0$, the sparse Tsallis entropy \citep{lee2018sparse}, or the $2$-norm regularization.
\vspace{-0.2cm}
\parSpace
\paragraph{Generalizability}
As for our initial goal of learning a reward generalizing to new transition laws and constraints, the result of Corollary~\ref{cor:identifiability_irl} is problematic, since the set $\IRLO(\fmuE)$ depends on both the transition law and the constraints. To make this dependency explicit, we will throughout this section denote $\RL^{\fP, \fb}, \IRLO^{\fP, \fb}$ for the solution maps corresponding to the transition law $\fP$ and constraint threshold $\fb$. Additionally, we let $\mathfrak{P}\subset \R^{nm\times n}$ be the set of all transition laws. Moreover, we introduce the following notion of generalizability.
\begin{definition}[Generalizability]\label{def:generalizability}
    Fix some transition law and constraint threshold $(\fP_0, \fb_0)$. We say that IRL generalizes to $\mathcal{P}\subseteq \mathfrak{P}$ and $\mathcal{B}\subseteq \R^k$, if
    \begin{equation}\label{eq:generalizability}
        \RL^{\fP, \fb}(\fr) = \RL^{\fP, \fb}(\fr'),\;\forall\fP\in\mathcal{P}, \forall\fb\in\mathcal{B},
    \end{equation}
    for any pair of rewards $\fr, \fr' \in\IRL^{\fP_0, \fb_0}(\fmuE)$.
\end{definition}
\vspace{-0.2cm}
Definition~\ref{def:generalizability} requires all rewards recovered via IRL to yield the same optimal occupancy measures for all $\fP\in\mathcal{P}$ and $\fb\in\mathcal{B}$. The subsequent result shows that, under Assumption~\ref{ass:essential_smoothness}, generalization to a neighborhood of new transition laws and arbitrary constraint thresholds is possible if and only if the expert's reward is recovered up to a constant. 
\begin{theorem}\label{thm:intersection}
    Let Assumptions~\ref{ass:slater}, \ref{ass:strict_convexity}, \ref{ass:realizability}, \ref{ass:essential_smoothness} hold for $(\fP_0,\fb_0)$ and let $\fmuE\in\RL^{\fP_0,\fb_0}(\frE)$ for some $\frE\in\mathcal{R}$. Consider an arbitrary neighborhood $\mathcal{O}_{\fP_0}\subseteq \R^{nm\times n}$ of $\fP_0$. Then, IRL generalizes to $\mathcal{P} = \mathcal{O}_{\fP_0}\cap\mathfrak{P}$ and $\mathcal{B}=\R^k$ if and only if
    \begin{equation}
        \IRL^{\fP_0, \fb_0}(\fmuE) \subseteq \frE + \spn(\ones_{nm}).
    \end{equation}
\end{theorem}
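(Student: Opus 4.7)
My plan is to use the identifiability result of Corollary~\ref{cor:essential_smoothness_consequence} to reduce the biconditional to the linear-algebraic claim $\bigcap_{\fP \in \mathcal{O}_{\fP_0} \cap \mathfrak{P}} \spn(\fE - \gamma \fP) = \spn(\ones_{nm})$. The direction ($\Leftarrow$) is essentially immediate: since $\mathcal{F}^{\fP, \fb} \subseteq \Delta_{\mathcal{S} \times \mathcal{A}}$ for every $(\fP, \fb)$, any two rewards differing by $c \ones_{nm}$ yield regularized objectives that differ by the constant $c$ on every feasible set, so their argmax sets coincide.

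For ($\Rightarrow$), fix any $\fr \in \IRL^{\fP_0, \fb_0}(\fmuE)$. Choosing each component of $\fb$ larger than $\max_{\fmu \in \Delta_{\mathcal{S}\times\mathcal{A}}} \fPsi_i^\top \fmu$ makes all safety constraints strictly feasible, so that $\mathcal{F}^{\fP, \fb} = \mathcal{M}^{\fP}$ for every $\fP \in \mathcal{P}$. Generalizability then forces $\RL^{\fP, \fb}(\fr) = \RL^{\fP, \fb}(\frE)$, and Assumptions~\ref{ass:strict_convexity} and \ref{ass:essential_smoothness} together with Corollary~\ref{cor:essential_smoothness_consequence} imply that the common maximizer $\fmu_{\fP}^*$ is unique and lies in $\relint \mathcal{M}^{\fP}$. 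The optimality condition from Theorem~\ref{thm:identifiability} (with empty active sets) then yields $\fr, \frE \in \nabla f(\fmu_{\fP}^*) + \mathcal{U}^{\fP}$, so $\fr - \frE \in \bigcap_{\fP \in \mathcal{P}} \mathcal{U}^{\fP}$.

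The crucial remaining step is to show $\bigcap_{\fP \in \mathcal{P}} \mathcal{U}^{\fP} = \spn(\ones_{nm})$. Inclusion $\supseteq$ follows from $(\fE - \gamma \fP)\ones_n = (1-\gamma)\ones_{nm}$. For the reverse, $\fE - \gamma \fP$ has full column rank $n$ for every $\fP \in \mathfrak{P}$ (any $\feta$ in its kernel satisfies $\|\feta\|_\infty \leq \gamma \|\feta\|_\infty$, impossible for $\gamma < 1$), so each $\fx$ in the intersection admits a unique $\feta_{\fP} \in \R^n$ with $\fx = (\fE - \gamma \fP)\feta_{\fP}$, and $\fP \mapsto \feta_{\fP}$ is smooth. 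Differentiating at $t=0$ along admissible curves $\fP(t) = \fP_0 + t\,\delta\fP$ (with $\delta\fP\,\ones_n = 0$ and $\delta\fP(s'|s,a) \geq 0$ wherever $\fP_0(s'|s,a) = 0$) yields $\gamma\,\delta\fP\cdot\feta_{\fP_0} \in \mathcal{U}^{\fP_0}$. Specializing to $\delta\fP$ supported on a single row $(s_0, a_0)$ with $\delta\fP(\cdot|s_0, a_0) = v$ gives $(v^\top \feta_{\fP_0})\,\fe_{s_0, a_0} \in \mathcal{U}^{\fP_0}$. If $\feta_{\fP_0}$ were non-constant, for every $(s_0, a_0)$ I can pick $s'' \in \supp \fP_0(\cdot|s_0, a_0)$ and $s'$ with $\feta_{\fP_0}(s') \neq \feta_{\fP_0}(s'')$; the admissible choice $v = \fe_{s'} - \fe_{s''} \in \R^n$ then gives $v^\top\feta_{\fP_0} \neq 0$, hence $\fe_{s_0, a_0} \in \mathcal{U}^{\fP_0}$ for every $(s_0, a_0)$, contradicting $\dim \mathcal{U}^{\fP_0} = n < nm$ (recall $m > 1$). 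Therefore $\feta_{\fP_0} \in \spn(\ones_n)$ and $\fx \in \spn(\ones_{nm})$. I expect this intersection lemma to be the main obstacle: the argument must accommodate $\fP_0$ on the boundary of $\mathfrak{P}$, where the admissible perturbation cone shrinks with the zero pattern of $\fP_0$, so $s''$ must be chosen inside $\supp \fP_0(\cdot|s_0, a_0)$ rather than arbitrarily.
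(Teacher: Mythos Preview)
Your reduction of the ($\Rightarrow$) direction to the linear-algebraic claim $\bigcap_{\fP\in\mathcal{P}}\mathcal{U}^{\fP}=\spn(\ones_{nm})$ matches the paper's Steps~1--2 (your shortcut of immediately taking $\fb$ large enough that $\mathcal{F}^{\fP,\fb}=\mathcal{M}^{\fP}$ is in fact a bit cleaner than the paper's Slater-perturbation argument in Step~1). Where you genuinely diverge is in proving the intersection lemma itself. The paper proceeds constructively: it exhibits two explicit transition laws $\bar\fP_1,\bar\fP_2\in\mathfrak{P}$ for which $\rank[\fE-\gamma\bar\fP_1,\;\fE-\gamma\bar\fP_2]=2n-1$ (equivalently $\mathcal{U}^{\bar\fP_1}\cap\mathcal{U}^{\bar\fP_2}=\spn(\ones_{nm})$), and then observes that the determinant of a fixed nonsingular $(2n{-}1)$-minor is a nonzero polynomial along the segment from $(\fP_0,\fP_0)$ to $(\bar\fP_1,\bar\fP_2)$, so the rank condition survives at some $\fP_1,\fP_2$ arbitrarily close to $\fP_0$; this route has the side benefit of showing that the ``good'' pairs are dense in $\mathfrak{P}\times\mathfrak{P}$. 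Your route instead differentiates the identity $\fx=(\fE-\gamma\fP)\feta_{\fP}$ along admissible one-sided tangent directions at $\fP_0$, using smoothness of $\fP\mapsto\feta_{\fP}$ via the least-squares inverse. The single-row perturbation $v=\fe_{s'}-\fe_{s''}$ with $s''\in\supp\fP_0(\cdot|s_0,a_0)$ is correctly seen to be admissible even when $\fP_0$ lies on the boundary of $\mathfrak{P}$, and the ensuing contradiction (all $\fe_{s_0,a_0}\in\mathcal{U}^{\fP_0}$ while $\dim\mathcal{U}^{\fP_0}=n<nm$) is valid. This is a correct and more direct, purely local argument that avoids any explicit construction; the trade-off is that it does not yield the density statement the paper records as a remark.
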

\vspace{-0.2cm}
Note that since $\mathcal{B}=\R^k$, Theorem~\ref{thm:intersection} considers generalizability to all possible constraint thresholds -- in particular to the unconstrained setting.\footnote{For the unrestricted reward class $\mathcal{R}=\R^{nm}$, we expect the same result to hold even if $\mathcal{B}$ is only a neighborhood of constraint thresholds. However, a rigorous proof would require continuity of $\fP\mapsto\RL^{\fP}(\fr)$, which we leave open to future work.} The main idea of the proof of Theorem~\ref{thm:intersection} is to show that in any neighborhood of $\fP_0$ we can find $\fP_1, \fP_2\in\mathfrak{P}$ such that only the rewards in $\frE + \spn(\ones_{nm})$ generalize to both $\fP_1$ and $\fP_2$. To the best of our knowledge, this is a novel result -- even in the context of unconstrained IRL. For more details about the proof and a brief discussion about the connection to recent results on identifiability from multiple experts \citep{cao2021identifiability, rolland2022identifiability}, we refer to Appendix~\ref{app:subsec:thm:intersection}. 

In light of Theorem~\ref{thm:intersection}, the following corollary shows that for the unrestricted reward class, IRL is not generalizing to a neighborhood of new transition laws and arbitrary constraints.
\begin{corollary}\label{cor:non-gen}
    Let Assumption~\ref{ass:slater} and \ref{ass:realizability} hold and let $n>1$ and $\mathcal{R}=\R^{nm}$. Moreover, let $\mathcal{B}$ and $\mathcal{P}$ be defined as in Theorem~\ref{thm:intersection}. Then, IRL is not generalizing to $\mathcal{P}$ and $\mathcal{B}$.
\end{corollary}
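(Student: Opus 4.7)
The plan is to exhibit two rewards in $\IRL^{\fP_0,\fb_0}(\fmuE)$ whose difference is not a multiple of $\ones_{nm}$ and then appeal to Theorem~\ref{thm:intersection}. By Assumption~\ref{ass:realizability}, there exists $\frE \in \mathcal{R}$ with $\fmuE \in \RL^{\fP_0,\fb_0}(\frE)$. The classical potential shaping invariance recalled before Theorem~\ref{thm:identifiability} shows that for any $\feta \in \R^n$, the reward $\frE + (\fE - \gamma\fP_0)\feta$ continues to make $\fmuE$ optimal, and since $\mathcal{R} = \R^{nm}$ imposes no restriction, we obtain
\begin{equation*}
    \frE + \mathcal{U} \;\subseteq\; \IRL^{\fP_0,\fb_0}(\fmuE), \qquad \mathcal{U} \defeq \spn(\fE - \gamma\fP_0).
\end{equation*}

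Next I would show $\dim \mathcal{U} = n$ by verifying that $\fE - \gamma\fP_0$ has full column rank. If $(\fE - \gamma\fP_0)\feta = \f{0}$, picking $s^\star \in \argmax_s \feta(s)$ and reading off any row $(s^\star, a)$ gives $\feta(s^\star) = \gamma \sum_{s'} \fP_0(s'|s^\star, a)\feta(s') \leq \gamma\,\feta(s^\star)$, which forces $\feta(s^\star) \leq 0$ because $\gamma<1$; a symmetric argument at $\argmin_s \feta(s)$ gives $\feta \equiv \f{0}$. Using $\fE\ones_n = \fP_0\ones_n = \ones_{nm}$, we have $\ones_{nm} = (1-\gamma)^{-1}(\fE-\gamma\fP_0)\ones_n \in \mathcal{U}$, so $\spn(\ones_{nm})$ is a one-dimensional subspace of the $n$-dimensional space $\mathcal{U}$. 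Since $n>1$, I can pick $\f{\Delta} \in \mathcal{U}\setminus\spn(\ones_{nm})$ and set $\fr_1 \defeq \frE + \f{\Delta}$, so that $\fr_1 \in \IRL^{\fP_0,\fb_0}(\fmuE)$ but $\fr_1 \notin \frE + \spn(\ones_{nm})$; in particular $\IRL^{\fP_0,\fb_0}(\fmuE) \not\subseteq \frE + \spn(\ones_{nm})$.

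Finally, the ``only if'' direction of Theorem~\ref{thm:intersection} then yields non-generalizability of IRL to $\mathcal{P}$ and $\mathcal{B}$. The main obstacle is that Theorem~\ref{thm:intersection} is nominally stated under the extra Assumptions~\ref{ass:strict_convexity} and \ref{ass:essential_smoothness} that are not in force here. The contrapositive direction I invoke, however, proceeds by an explicit construction of a transition law in $\mathcal{O}_{\fP_0}$ that distinguishes any two rewards differing outside $\spn(\ones_{nm})$: choosing $\fb \in \mathcal{B} = \R^k$ large enough to deactivate all safety constraints reduces $\RL^{\fP,\fb}$ to $\RLO$, and a short computation shows that the value of $\f{\Delta} = (\fE - \gamma\fP_0)\feta$ under transitions $\fP$ and policy $\fpi$ equals a $\fpi$-independent constant plus $\gamma\,[(\fP - \fP_0)\feta]^\top \fmu^{\fpi}$, which varies with $\fpi$ for a suitable perturbation $\fP \in \mathcal{O}_{\fP_0}$. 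This part of the argument is independent of Assumptions~\ref{ass:strict_convexity} and \ref{ass:essential_smoothness}, so the contrapositive direction remains available under Assumptions~\ref{ass:slater} and \ref{ass:realizability} alone.
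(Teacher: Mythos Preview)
Your core approach---show $\frE+\mathcal{U}\subseteq\IRL^{\fP_0,\fb_0}(\fmuE)$, verify $\dim\mathcal{U}=n>1$ and $\ones_{nm}\in\mathcal{U}$, conclude $\IRL^{\fP_0,\fb_0}(\fmuE)\not\subseteq\frE+\spn(\ones_{nm})$, and invoke the contrapositive of Theorem~\ref{thm:intersection}---is exactly the paper's one-line argument (``a consequence of $\dim\mathcal{U}=n$''). Your explicit verification that $\fE-\gamma\fP_0$ has trivial kernel and that $\ones_{nm}=(1-\gamma)^{-1}(\fE-\gamma\fP_0)\ones_n$ is correct and fills in details the paper leaves implicit.

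Where your proposal diverges is in the final paragraph, and there it has a genuine gap. You correctly observe that Theorem~\ref{thm:intersection} is stated under Assumptions~\ref{ass:strict_convexity} and~\ref{ass:essential_smoothness}, which the corollary does not list; the paper's own proof sketch simply applies Theorem~\ref{thm:intersection} and does not address this either. However, your claim that ``this part of the argument is independent of Assumptions~\ref{ass:strict_convexity} and~\ref{ass:essential_smoothness}'' is not supported. In the appendix proof of Theorem~\ref{thm:intersection}, strict convexity is used in Step~2 to treat $\RL^{\fP,\fb}$ as single-valued, and Assumption~\ref{ass:essential_smoothness} is used in step~$(v)$ to replace $\partial f$ by $\nabla f$ and to kill $\mathcal{E}(\fmu)$. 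Your own direct sketch does not close the gap either: the computation $\f{\Delta}^\top\fmu^{\fpi}=\text{const}+\gamma[(\fP-\fP_0)\feta]^\top\fmu^{\fpi}$ is correct, but the fact that this value \emph{varies} with $\fpi$ does not by itself force $\RLO^{\fP}(\frE)\neq\RLO^{\fP}(\frE+\f{\Delta})$. If, say, the optimal occupancy measure for $\frE$ sits on $\relbd\mathcal{M}^{\fP}$, the normal cone there strictly contains $\mathcal{U}^{\fP}$ and can absorb a perturbation $\f{\Delta}\notin\mathcal{U}^{\fP}$ without changing the optimizer; without strict convexity the sets $\RLO^{\fP}(\cdot)$ need not even be singletons, and equality of two such sets is not ruled out by a non-constant shift alone.

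The clean fix is simply to carry Assumptions~\ref{ass:strict_convexity} and~\ref{ass:essential_smoothness} into the corollary (which the paper's reference ``as in Theorem~\ref{thm:intersection}'' arguably intends), after which your first two paragraphs constitute a complete proof identical in spirit to the paper's.
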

Corollary~\ref{cor:non-gen} is a consequence of $\dim \mathcal{U} = n$, which implies that $\IRLO^{\fP_0, \fb_0}(\fmuE) \supset \frE + \spn(\ones_{nm})$. Below, we show that the above problem can be resolved by a suitable restriction of the reward class. In particular, if the rank condition in Proposition~\ref{prop:linear_reward_identifiability} holds, then the reward class intersects the space spanned by potential shaping transformations and the safety cost only at the origin, which ensures that the expert's reward can be identified exactly. 
\begin{proposition}\label{prop:linear_reward_identifiability}
     Let Assumptions~\ref{ass:slater}, \ref{ass:strict_convexity}, \ref{ass:realizability}, \ref{ass:essential_smoothness} hold. Moreover, let $\fmuE\in\RL^{\fP_0, \fb_0}(\frE)$ for some $\frE\in\mathcal{R}$ and
    \begin{equation}
        \mathcal{R} \subseteq \bc{\fr_{\fw} = \fPhi \fw: \fPhi \in\R^{mn\times d}, \fw\in\R^d}.
    \end{equation}
    Then, if for $\fXi\defeq\myvec{\fE - \gamma \fP_0, \fPsi}$ it holds that
    \begin{equation}\label{eq:cirl_identifiability_condition}
        \rank \myvec{\fPhi, \fXi} - \br{\rank \fPhi + \rank \fXi} = 0,
    \end{equation}
    then we have $\IRL^{\fP_0, \fb_0}(\fmuE) = \bc{\frE}$.
\end{proposition}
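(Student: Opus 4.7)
The plan is to combine the earlier characterization of $\IRL$ with the intersection interpretation from Corollary~\ref{cor:consequences_consistency}(a), and then translate the rank condition \eqref{eq:cirl_identifiability_condition} into a trivial-intersection statement between $\spn(\fPhi)$ and $\spn(\fXi)$.

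First I would invoke Corollary~\ref{cor:consequences_consistency}(a) and Corollary~\ref{cor:essential_smoothness_consequence} to obtain the explicit description
\begin{equation*}
    \IRL^{\fP_0,\fb_0}(\fmuE)
    = \mathcal{R}\cap\bigl(\nabla f(\fmuE) + \mathcal{U} + \mathcal{C}(\fmuE)\bigr),
\end{equation*}
where $\mathcal{U} = \spn(\fE-\gamma\fP_0)$ and $\mathcal{C}(\fmuE)=\cone(\{\fPsi_i\}_{i\in\mathcal{I}(\fmuE)})\subseteq\spn(\fPsi)$. Since $\frE\in\IRL^{\fP_0,\fb_0}(\fmuE)$ by realizability and strict convexity (which also ensures the realizing reward is uniquely linked to $\fmuE$ through the above set), I can pick any other $\fr'\in\IRL^{\fP_0,\fb_0}(\fmuE)$ and try to show $\fr'=\frE$.

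Next I would look at the difference $\fr'-\frE$. Both rewards lie in the affine set $\nabla f(\fmuE) + \mathcal{U} + \mathcal{C}(\fmuE)$, so the $\nabla f(\fmuE)$ offset cancels and $\fr'-\frE$ belongs to $(\mathcal{U}+\mathcal{C}(\fmuE)) - (\mathcal{U}+\mathcal{C}(\fmuE))$. Because $\mathcal{U}$ is a linear subspace and $\mathcal{C}(\fmuE)\subseteq\spn(\fPsi)$, this difference set is contained in $\mathcal{U} + \spn(\fPsi)=\spn\myvec{\fE-\gamma\fP_0 & \fPsi}=\spn(\fXi)$. On the other hand, the hypothesis $\mathcal{R}\subseteq\{\fPhi\fw:\fw\in\R^d\}=\spn(\fPhi)$ together with $\fr',\frE\in\mathcal{R}$ yields $\fr'-\frE\in\spn(\fPhi)$. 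Therefore
\begin{equation*}
    \fr'-\frE\in\spn(\fPhi)\cap\spn(\fXi).
\end{equation*}

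The last step is purely linear-algebraic: by the dimension formula,
\begin{equation*}
    \dim\bigl(\spn(\fPhi)\cap\spn(\fXi)\bigr)
    = \rank\fPhi + \rank\fXi - \rank\myvec{\fPhi & \fXi},
\end{equation*}
which vanishes precisely under assumption \eqref{eq:cirl_identifiability_condition}. Hence $\fr'-\frE=\bm 0$, giving $\IRL^{\fP_0,\fb_0}(\fmuE)=\{\frE\}$ as desired. No step should be particularly delicate; the one small subtlety that needs to be spelled out is that the offset term $\mathcal{C}(\fmuE)$, although only a cone, drops into the linear span $\spn(\fPsi)$ once we take differences, which is what makes the rank criterion on $\fXi=\myvec{\fE-\gamma\fP_0 & \fPsi}$ (rather than on just $\fE-\gamma\fP_0$, as in the unconstrained case) the right sufficient condition.
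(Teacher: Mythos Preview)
Your proposal is correct and follows essentially the same approach as the paper: both arguments invoke Corollary~\ref{cor:essential_smoothness_consequence} to write $\IRL^{\fP_0,\fb_0}(\fmuE)=\mathcal{R}\cap(\nabla f(\fmuE)+\mathcal{U}+\mathcal{C}(\fmuE))$, enlarge the cone $\mathcal{C}(\fmuE)$ to $\spn(\fPsi)$, and then use the dimension formula to read the rank condition as $\spn(\fPhi)\cap\spn(\fXi)=\bm 0$. The only cosmetic difference is that the paper presents a chain of set inclusions (shifting the affine offset from $\nabla f(\fmuE)$ to $\frE$ and then pulling $\frE$ outside the intersection), whereas you take an arbitrary $\fr'$ and argue with the difference $\fr'-\frE$; these are two phrasings of the same computation. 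One small remark: the parenthetical about strict convexity ensuring ``the realizing reward is uniquely linked to $\fmuE$'' is not what strict convexity gives (it gives uniqueness of $\fmu$ for a fixed $\fr$, not vice versa), but this aside plays no role in your argument and can simply be dropped.
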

The proof of Proposition~\ref{prop:linear_reward_identifiability} is provided in Appendix~\ref{app:subsec:cor:linear_reward_identifiability}. Observe that for a known transition law, condition~\eqref{eq:cirl_identifiability_condition} can easily be verified. 
\vspace{-0.3cm}
\paragraph{Ignoring the Constraints}
As for strictly convex regularization, we have the equivalence $\RL(\fr) = \RLO(\fr-\fPsi\fxi^*)$ (Proposition~\ref{prop:strong_duality}), we may ignore safety constraints in IRL and recover the modified reward $\fr - \fPsi\fxi^*$ via unconstrained IRL. However, this comes with two caveats. First, the reward class needs to be sufficiently expressive to implicitly account for the safety constraints. Second, as illustrated in Example~\ref{ex:1}, $ \mathsf{IRL}_{\mathcal{M}}(\fmuE)$ may not contain the expert's reward in case of active safety constraints and hence fail to generalize to the unconstrained setting.\looseness-1
\vspace{-0.2cm}
\section{Finite Sample Setting}\label{sec:finite_sample}
\vspace{-0.1cm}
\paragraph{Practical Inverse Reinforcement Learning}
In practice, we only have access to a finite data set of demonstrations $\mathcal{D} = \bc{(s^i_t, a^i_t)_{t=0}^T}_{i=1}^N$. In this case,  the expert occupancy measure $\fmuE$ can be estimated via \citep{abbeel2004apprenticeship}
\begin{equation}
    \fmuEhat(s,a)\defeq \dfrac{(1-\gamma)}{N}\sum_{i=1}^N\sum_{t=0}^T \gamma^t \mathbbm{1}(s_t^i=s, a_t^i=a),
\end{equation}
where $\mathbbm{1}$ is an indicator function. Swapping the order of minimization and maximization using Sion's min-max theorem \citep{sion1958general}, we can interpret the resulting min-max problem as the dual of an occupancy measure matching problem \citep{syed2007game, syed2008apprenticeship, ho2016generative}\vspace{-0.1cm}
\begin{align}\label{eq:cirl_occ_emp}
    &\min_{\fr\in\mathcal{R}}\max_{\fmu\in\mathcal{F}}  \;\;\fr^\top\br{\fmu - \fmuEhat} - f(\fmu)\\
    &=-\max_{\fr\in\mathcal{R}}\min_{\fmu\in\mathcal{F}}  \;\; \bs{-\fr^\top\br{\fmu - \fmuEhat} + f(\fmu)}\nonumber\\
    &=-\min_{\fmu\in\mathcal{F}} \bs{\delta_{\mathcal{R}}(\fmu, \fmuEhat) + f(\fmu)}.\nonumber
\end{align}
Here, $\delta_{\mathcal{R}}(\fmu, \fmuEhat) \defeq \max_{\fr\in\mathcal{R}} \fr^\top\br{\fmuEhat-\fmu}$ is an integral probability metric \citep{muller1997integral} measuring the distance from $\fmu$ to the empirical expert occupancy measure. For different choices of $\mathcal{R}$ different distance measures arise. The choice $\mathcal{R}= \R^{nm}$ yields a characteristic function with $\delta_{\mathcal{R}}(\fmu, \fmuEhat)=0$ if $\fmu=\fmuEhat$, and $\delta_{\mathcal{R}}(\fmu, \fmuEhat)=\infty$ otherwise \citep{boyd2004convex}. For the bounded linear feature classes $\mathcal{R}^{\norm{\cdot}}\defeq\bc{\fr_{\fw}=\fPhi \fw: \fPhi\in\R^{nm\times d}, \norm{\fw}\leq 1}$ we get $\delta_{\mathcal{R}}(\fmu, \fmuEhat) = \norm{\fPhi^\top (\fmu-\fmuEhat)}_*$, where $\norm{\cdot}_*$ denotes the dual norm to $\norm{\cdot}$. Thus, for $\norm{\cdot}_{2}$ we recover feature expectation matching in the 2-norm \citep{abbeel2004apprenticeship} and for $\norm{\cdot}_{1}$ in the $\infty$-norm \citep{syed2008apprenticeship}. Other choices of $\mathcal{R}$ lead to other distance measures such as the Wasserstein-1 distance or the maximum mean discrepancy (for an overview see \citep{xiao2019wasserstein, sun2019provably, swamy2021moments}). In the following, we focus on the choice\looseness-1
\begin{align}\label{eq:reward_classes}
    \mathcal{R}^{\norm{\cdot}_1} \defeq\bc{\fr_{\fw}=\fPhi \fw: \fPhi\in\R^{nm\times d}, \norm{\fw}_1\leq 1}.
\end{align}
We will see that bounding the reward class as above enables us to derive a bound for the sample complexity of IRL.
\parSpace\vspace{-0.2cm}
\paragraph{Sample Complexity}
The subsequent result shows that if the reward class is bounded, we can bound the suboptimality of solutions obtained by the finite sample problem \eqref{eq:cirl_occ_emp} with respect to the idealized problem~\eqref{eq:cirl_occ_exact}.
\begin{theorem}\label{thm:sample_complexity}
     Let Assumption~\ref{ass:realizability} hold and $\fmuE\in\RL(\frE)$ for some $\frE\in\mathcal{R}\defeq\mathcal{R}^{\norm{\cdot}_1}$. Let $\fmuhat \in \RL\circ\IRL(\fmuEhat)$ and $R\defeq\max_{s,a}\norm{\fPhi(s,a)}_{\infty}$. Choosing \vspace{-0.1cm}
     \begin{equation}
         N = \ceil{\dfrac{32R^2}{\varepsilon^2}\log\br{\dfrac{2d}{\delta}}} \text{ and } T=\ceil{\log\br{\dfrac{\varepsilon}{8R}}/\log(\gamma)}.
     \end{equation}\vspace{-0.1cm}
     It holds with probability at least $1-\delta$
    \begin{align}\label{eq:value_diff}
        &J(\fmuE, \frE) - J(\fmuhat, \frE) \leq \varepsilon\vspace{-0.1cm},
    \end{align}
    where $J(\fmu,\fr)\defeq \fr^\top\fmu - f(\fmu)$. Moreover, if\vspace{-0.2cm}
    \begin{enumerate}[(a)]
        \item $f$ is $L$-strongly convex with respect to the norm $\norm{\cdot}$, it holds with probability at least $1-\delta$\vspace{-0.1cm}
        \begin{equation}\label{eq:occ_diff}
            \norm{\fmuhat - \fmuE} \leq \sqrt{\dfrac{2\varepsilon}{L}}.
        \end{equation}
        \vspace{-0.2cm}
        \item $f(\fmu)=-\beta \,\E_{(s,a)\sim\fmu}\bs{H\left(\fpi^{\fmu}(\cdot|s)\right)}$ with $\beta>0$, it holds with probability at least $1-\delta$\vspace{-0.1cm}
        \begin{equation}\label{eq:policy_diff}
         \E_{(s,a)\sim\fmuE}\bs{\norm{\fpi^{\fmuhat}(\cdot|s)-\fpiE(\cdot|s)}_1}\leq \sqrt{\dfrac{2\varepsilon}{\beta}}.
        \end{equation}
    \end{enumerate}
\end{theorem}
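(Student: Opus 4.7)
The plan is to first establish the deterministic inequality
\begin{equation*}
    J(\fmuE, \frE) - J(\fmuhat, \frE) \;\leq\; 2\,\norm{\fPhi^\top(\fmuEhat - \fmuE)}_\infty
\end{equation*}
and then control its right-hand side with high probability through a truncation-plus-Hoeffding argument. To obtain the deterministic bound, I would fix some $\frhat\in\IRL(\fmuEhat)$ with $\fmuhat\in\RL(\frhat)$. Since $L(\fr,\fmu)\defeq\fr^\top(\fmu-\fmuEhat)-f(\fmu)$ is linear in $\fr$ and concave in $\fmu$ over the compact convex sets $\mathcal{R}^{\norm{\cdot}_1}$ and $\mathcal{F}$, Sion's min-max theorem makes $(\frhat,\fmuhat)$ a saddle-point. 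I would then combine three facts: (i) optimality of $\fmuhat$ for $\frhat$ gives $f(\fmuhat)-f(\fmuE)\leq \frhat^\top(\fmuhat-\fmuE)$; (ii) writing $J(\fmuE,\frE)-J(\fmuhat,\frE)=\frE^\top(\fmuE-\fmuhat)+f(\fmuhat)-f(\fmuE)$ and substituting (i) upper-bounds the value gap by $(\frhat-\frE)^\top(\fmuhat-\fmuE)$; (iii) the saddle-point inequality $L(\frE,\fmuhat)\geq L(\frhat,\fmuhat)$ yields $(\frhat-\frE)^\top(\fmuhat-\fmuEhat)\leq 0$, so adding and subtracting $\fmuEhat$ replaces $\fmuhat$ by $\fmuEhat$ on the right. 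Since $\frhat-\frE=\fPhi(\fwhat-\f{w}^{\text{E}})$ with $\norm{\fwhat-\f{w}^{\text{E}}}_1\leq 2$, Hölder's inequality closes the deterministic bound.

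To control $\norm{\fPhi^\top(\fmuEhat-\fmuE)}_\infty$ with high probability, I would split $\fmuEhat-\fmuE=(\fmuEhat-\fmubar)+(\fmubar-\fmuE)$, where $\fmubar(s,a)\defeq (1-\gamma)\,\E\bs{\sum_{t=0}^T\gamma^t\mathbbm{1}(s_t=s,a_t=a)}$ is the expected $T$-truncated occupancy. A geometric-tail estimate gives $\norm{\fPhi^\top(\fmubar-\fmuE)}_\infty\leq R\gamma^{T+1}$, at most $\varepsilon/8$ for the prescribed $T$. For the sampling error, each feature $i$ writes $\fPhi_i^\top\fmuEhat$ as an average of $N$ i.i.d.\ trajectory statistics $(1-\gamma)\sum_{t=0}^T\gamma^t\fPhi_i(s^j_t,a^j_t)$, each bounded in absolute value by $R$. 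Hoeffding's inequality combined with a union bound over the $d$ columns of $\fPhi$ then gives $\norm{\fPhi^\top(\fmuEhat-\fmubar)}_\infty\leq \varepsilon/4$ with probability at least $1-\delta$ for the chosen $N$. Combining the two estimates delivers $2\norm{\fPhi^\top(\fmuEhat-\fmuE)}_\infty\leq 3\varepsilon/4\leq \varepsilon$ and proves~\eqref{eq:value_diff}.

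For part~(a), $L$-strong convexity of $f$ makes the map $\fmu\mapsto\frE^\top\fmu-f(\fmu)$ $L$-strongly concave, so variational optimality of $\fmuE$ combined with quadratic growth yields $J(\fmuE,\frE)-J(\fmuhat,\frE)\geq\tfrac{L}{2}\norm{\fmuhat-\fmuE}^2$; rearranging with the first part closes~\eqref{eq:occ_diff}. For part~(b), Corollary~\ref{cor:essential_smoothness_consequence} places the entropy-regularized optimizers in $\relint\mathcal{M}$, where $f$ is differentiable. I would combine the two-sided identity
\begin{equation*}
    D_f(\fmuE,\fmuhat)+D_f(\fmuhat,\fmuE)=[\nabla f(\fmuE)-\nabla f(\fmuhat)]^\top(\fmuE-\fmuhat)
\end{equation*}
with the normal-cone decompositions $\nabla f(\fmuE)=\frE-\f{g}^{\text{E}}$ and $\nabla f(\fmuhat)=\frhat-\hat{\f{g}}$ from Theorem~\ref{thm:identifiability}, and drop the two resulting nonpositive normal-cone terms to obtain $D_f(\fmuE,\fmuhat)+D_f(\fmuhat,\fmuE)\leq (\frE-\frhat)^\top(\fmuE-\fmuhat)$. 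The right-hand side is again bounded by $\varepsilon$ by the same saddle-point-plus-Hoeffding argument, so nonnegativity of Bregman divergences gives $D_f(\fmuE,\fmuhat)\leq \varepsilon$. A direct calculation for conditional entropy identifies $D_f(\fmuE,\fmuhat)=\beta\,\E_{s\sim\fmuE}\bs{\DKL(\fpiE(\cdot|s)\,\|\,\fpi^{\fmuhat}(\cdot|s))}$, and applying Pinsker's inequality state-wise followed by Jensen produces the $\ell_1$-policy bound $\sqrt{2\varepsilon/\beta}$.

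The main technical subtlety is part~(b): the value gap naturally controls the ``wrong-direction'' Bregman $D_f(\fmuhat,\fmuE)$, whereas the policy claim requires $D_f(\fmuE,\fmuhat)$ so that the expectation in the final bound is taken with respect to $\fmuE$. The Bregman-symmetrization trick of combining both optimality conditions resolves this, but the sign-tracking of the normal-cone residuals is where care is needed. The remaining ingredients---saddle-point manipulation for the value gap, Hoeffding with geometric truncation for the concentration, and strong-concavity quadratic growth for (a)---are relatively routine.
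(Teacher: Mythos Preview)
Your proof is correct and reaches the same conclusions, but the route differs from the paper's in two places. For the value gap~\eqref{eq:value_diff}, the paper works with the two min--max objectives $L(\fmu,\fw)=\fr_{\fw}^\top(\fmu-\fmuE)-f(\fmu)$ and $\hat L(\fmu,\fw)=\fr_{\fw}^\top(\fmu-\fmuEhat)-f(\fmu)$, first establishes the uniform bound $\max_{\fmu,\fw}|L-\hat L|\leq\varepsilon/2$ via the same truncation-plus-Hoeffding argument you use, and then chains three inequalities through the saddle values $L(\fmuE,\fw^*)$ and $\hat L(\fmuhat,\fwhat)$. Your approach instead isolates the deterministic inequality $J(\fmuE,\frE)-J(\fmuhat,\frE)\leq(\frhat-\frE)^\top(\fmuhat-\fmuE)\leq 2\norm{\fPhi^\top(\fmuEhat-\fmuE)}_\infty$ before concentrating; this is a bit more direct and avoids passing through the saddle values. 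For part~(b), the paper does not use Bregman symmetrization: it first proves the companion bound $J(\fmuhat,\frhat)-J(\fmuE,\frhat)\leq\varepsilon$ (same chain with the roles of $L$ and $\hat L$ exchanged) and then invokes the soft-suboptimality identity of \citet{mei2020global}, namely $J(\fmu^*,\fr)-J(\fmu,\fr)=\beta\sum_s\fnu(s)\DKL(\fpi^{\fmu}(\cdot|s)\|\fpi^{\fmu^*}(\cdot|s))$, applied with $\fmu^*=\fmuhat$ and $\fmu=\fmuE$, followed by Pinsker and Jensen. Your Bregman route recovers exactly the same KL expression after computing $D_f(\fmuE,\fmuhat)$ for the conditional entropy, so the two arguments meet at the same point; yours is self-contained but leans on Theorem~\ref{thm:identifiability} and Corollary~\ref{cor:essential_smoothness_consequence} (hence implicitly on Slater's condition, which the theorem statement does not list), whereas the paper's route needs the extra companion value inequality and the external lemma.
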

\begin{figure*}[h!]
    \centering
    \includegraphics[width=1.\textwidth]{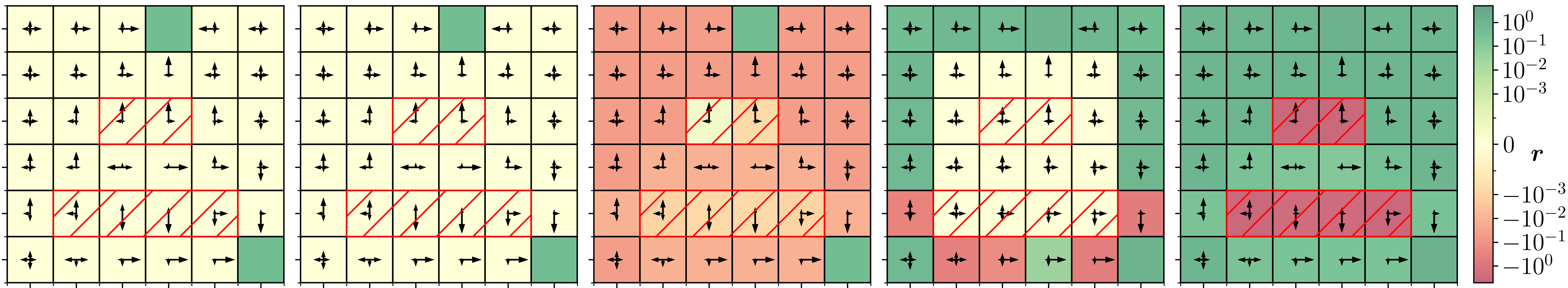}
    \vspace{-0.5cm}
    \begin{flushleft}\hspace{0.9cm}(\textit{a}) Expert \hspace{1.5cm}(\textit{b}) $\mathsf{IRL}_{\mathcal{R}_1, \mathcal{F}}$\hspace{1.5cm}(\textit{c}) $\mathsf{IRL}_{\mathcal{R}_2, \mathcal{F}}$\hspace{1.5cm}(\textit{d}) $\mathsf{IRL}_{\mathcal{R}_1, \mathcal{M}}$\hspace{1.5cm}(\textit{e}) $\mathsf{IRL}_{\mathcal{R}_2, \mathcal{M}}$\end{flushleft}\vspace{-0.1cm}
    \caption{Comparing rewards and policies recovered via constrained and unconstrained IRL for the two reward classes $\mathcal{R}_1$ and $\mathcal{R}_2$. The color indicates the reward, arrows the policies, and the two red hatched rectangles the constrained states. (\textit{a}) depicts $\frE$ and $\fpiE$, (\textit{b})-(\textit{c}) the rewards and policies learned from constrained IRL, and (\textit{d})-(\textit{e}) the rewards and policies learned from unconstrained IRL.\label{fig:rewards}\looseness-1}
\end{figure*}
The first result of Theorem~\ref{thm:sample_complexity} shows that by collecting enough expert trajectories with large enough time horizon, constrained IRL recovers with high probability an occupancy measure which is only $\varepsilon$-suboptimal under the expert's reward. A similar result has been shown by \citet{syed2007game} for unconstrained unregularized IRL. Second, we show that under strong convexity the recovered occupancy measure is close to the expert's one. Moreover, although the regularization is not strongly convex in MCE-IRL\footnote{While the entropy itself is $1$-strongly convex in the $1$-norm, the resulting regularization $f$ is in general not satisfying this property. However, it is still strictly convex as shown in Proposition~\ref{prop:convexity}.}, we are still recovering a policy that is close to the expert's policy -- at least under the support of the expert occupancy measure. To the best of our knowledge, closeness to the expert occupancy measure (or policy) is novel and only holds in the regularized setting. 

\headSpaceBefore\vspace{-0.2cm}
\section{Experimental Results}\label{sec:experiments}
\paragraph{Setup}
To validate our results, we consider a gridworld environment \citep{sutton2018reinforcement} with $36$ states (the grid cells) and 4 actions (up, down, left, right).\footnote{The code to all our experiments is available at: \\\url{https://github.com/andrschl/cirl}} The agent has a 90\% chance of reaching the desired location when taking an action and a 10\% chance of ending up in a random neighboring grid cell. We choose the entropy regularization $f(\fmu) = -\E_{(s,a)\sim\fmu}H(\fpi^{\fmu})$, and consider rewards that are only state-dependent, namely, two linear reward classes \vspace{-0.1cm}
\begin{equation}
        \mathcal{R}_1 \defeq \bc{\fPhi_1 \fw: \fw\in\R^{20}} \text{ and } \mathcal{R}_2 \defeq \bc{\fPhi_2 \fw: \fw\in\R^{36}}.
\end{equation}\vspace{-0.0cm}
$\mathcal{R}_1$ has a single reward feature for every state on the boundary, and $\mathcal{R}_2$ has reward features for all states. That is, ${\fPhi_1} = [\fE_{i_1}, \hdots, \fE_{i_{20}}]$ and ${\fPhi_2} = \fE$ and, where $i_1,\hdots,i_{20}$ are the indices corresponding to states on the boundary of the gridworld and $\fE$ is the matrix as defined in \eqref{eq:bellman_flow}. The rank condition of Corollary~\ref{cor:identifiability_irl} is satisfied for the smaller reward class $\mathcal{R}_1$, but not for $\mathcal{R}_2$. The expert's reward $\frE$ is depicted in Figure~\ref{fig:rewards}(a). It is zero everywhere except for the two green grid cells where $\frE(s,\cdot)=0.5$. Furthermore, there are two safety constraints indicated by the red-hatched rectangles. The two rectangular constraints are enforced separately via $\fPsi_1, \fPsi_2$ which are one on the constrained cells and zero everywhere else. The constraint threshold is $\fb_0 = 0.02\cdot\ones_2$, where feasibility is checked via the LP solver \verb|linprog| provided by \citep{2020SciPy-NMeth}.
\vspace{-0.2cm}
\paragraph{Algorithm}
As an algorithm for the min-max problem \eqref{eq:cirl_occ_exact} we use a primal-dual gradient-descent-ascent method \citep{daskalakis2018limit} in the policy space (instead of occupancy measure space). In particular, we update the reward parameters and dual variables for the constraints via a (projected) gradient descent step, and the policy via an entropy-regularized natural policy gradient \citep{cen2022fast} step. The algorithm is provided in Appendix~\ref{app:sec:algorithm}.
\vspace{-0.2cm}
\paragraph{Generalizability}
Learning from the \emph{true expert occupancy measure}, we compare the rewards recovered for constrained vs. unconstrained IRL and $\mathcal{R}_1$ vs. $\mathcal{R}_2$. Figure~\ref{fig:rewards} illustrates the rewards and policies recovered during IRL. Furthermore, Table~\ref{tab:irl_rewards} above summarizes occupancy measure errors and suboptimality for generalization to the same constrained setting as in training ($\fb_0$) and an unconstrained test setting (with $\fb_1\gg\fb_0$). 
\begin{table}[t]
\caption{Comparing generalization of the learned rewards for different constraint thresholds. \emph{Train} indicates the constrained setting with threshold $\fb_0$ (as used in training), and \emph{test} the generalization to the unconstrained setting (by setting $\fb_1$ large).}\vspace{-0.3cm}
\label{tab:irl_rewards}
\vskip 0.15in
\begin{center}
\begin{footnotesize}
\begin{sc}
\begin{tabular}{l|cc|cc}
\toprule
Method & \multicolumn{2}{l}{Train ($\fb_0$)} & \multicolumn{2}{l}{Test ($\fb_1\gg\fb_0$)}\\
&$\Delta \fmu$ & $\Delta J$ & $\Delta \fmu$ & $\Delta J$ \\
\midrule
$\mathsf{IRL}_{\mathcal{R}_1, \mathcal{F}}$&9.6e-9&5.3e-15&1.3e-7&9.2e-14\\
$\mathsf{IRL}_{\mathcal{R}_2, \mathcal{F}}$&1.7e-6&2.1e-10&2.1e-2& 1.7e-3 \\
$\mathsf{IRL}_{\mathcal{R}_1, \mathcal{M}}$&9.5e-2&1.0e-1&2.4e-1 & 2.9e-1 \\
$\mathsf{IRL}_{\mathcal{R}_2, \mathcal{M}}$&9.3e-3& 1.0e-2&2.8e-1& 5.9e-1 \\
\bottomrule
\end{tabular}
\end{sc}
\end{footnotesize}
\end{center}
\vskip -0.2in
\end{table}
Here, the occupancy measure error and suboptimality are defined via $\Delta\fmu=||\fmu^{\text{E}, \fb} - \fmu^{\fb}||_1$ and $\Delta J = J(\fmu^{\text{E}, \fb}, \frE)-J(\fmu^{\fb}, \frE)$, with $\fmu^{\text{E}, \fb}\in\RL^{\fb}(\frE)$ and $\fmu^{\fb}\in\RL^{\fb}(\frhat)$, where $\frhat$ indicates the reward recovered via IRL and $\fb\in\bc{\fb_1,\fb_2}$.
\begin{figure}[h!]
	\centering
	\includegraphics[width=0.5\textwidth]{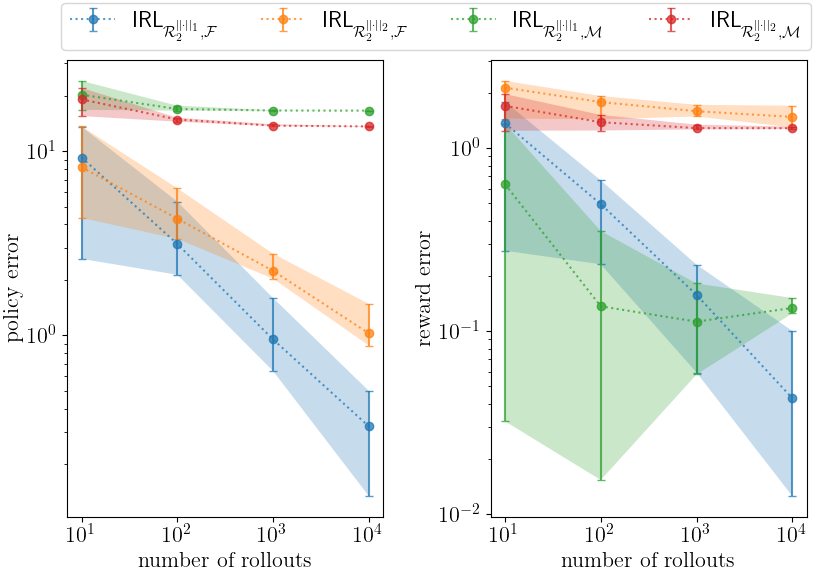}
	\caption{Compare constrained and unconstrained IRL for the two reward classes $\mathcal{R}_2^{\norm{\cdot}_1}$ and $\mathcal{R}_2^{\norm{\cdot}_2}$ and learning from different amount of expert data. The circles indicate the median, and the shaded areas the 0.1 and 0.9 quantiles for 10 independent realizations of the expert data.}
 \label{fig:finite_sample}\vspace{-0.2cm}
\end{figure}
As depicted in Figures~\ref{fig:rewards}(b)-(c) the reward is almost perfectly identified for constrained IRL with the reward class $\mathcal{R}_1$, whereas there is a small mismatch to the expert's reward for $\mathcal{R}_2$. This is in line with our identifiability result in Corollary~\ref{cor:identifiability_irl}, since only $\mathcal{R}_1$ satisfies the rank condition \eqref{eq:cirl_identifiability_condition}. In contrast, the rewards recovered from unconstrained IRL, as shown in Figures~\ref{fig:rewards}(d)-(e), substantially deviate from the expert's reward, since they need to implicitly account for the safety constraints. Table~\ref{tab:irl_rewards} shows that constrained IRL with $\mathcal{R}_1$ clearly outperforms the other methods -- especially in terms of generalization to the unconstrained setting.
\vspace{-0.2cm}
\paragraph{Learning from Expert Data}
To verify the sample complexity result of Theorem~\ref{thm:sample_complexity}, we compare constrained and unconstrained IRL for the two reward classes $\mathcal{R}_2^{\norm{\cdot}_1}$ and \footnote{Since in our experiments the recovered reward was always located on the boundary, we choose the bound $1/\sqrt{2}$ here.}
\begin{equation}
    \mathcal{R}_2^{\norm{\cdot}_2} \defeq\bc{\fr_{\fw}=\fPhi \fw: \fPhi\in\R^{nm\times d}, \norm{\fw}_2\leq 1/\sqrt{2}}.\nonumber
\end{equation}
To this end, we solve the min-max problem \eqref{eq:cirl_occ_emp} for the above reward classes and the feasible sets $\mathcal{F}$ and $\mathcal{M}$. Figure~\ref{fig:finite_sample} shows the policy and reward errors for 10 independent realizations of the expert demonstrations containing $N\in\bc{10,100,1000,10000}$ trajectories of length $T=10000$. 
Here, the policy error is $\norm{\fpiE-\fpihat}_1$, where $\fpihat$ is the policy recovered via IRL, and the reward error is defined as the distance of the recovered reward $\frhat$ to the line $\frE + \spn(\ones_{nm})$. As predicted by Theorem~\ref{thm:sample_complexity}, the policy error is converging towards zero for an increasing number of trajectories. Moreover, the policy error is quite large for unconstrained IRL, which we expect to be due to non-realizability of the expert in this setting (as we need to implicitly account for the constraints). On the other hand, the recovered rewards are -- for both constrained and unconstrained IRL -- much closer to the expert's reward for the reward class $\mathcal{R}_2^{\norm{\cdot}_1}$. We expect the reason for this to be the sparsity induced by the projection onto the 1-norm ball \citep{tibshirani1996regression}, which helps to recover the expert's true reward in this setting. This showcases the importance of the choice of norm when using a bounded linear reward class.
\vspace{-0.2cm}
\headSpaceBefore 
\section{Limitations and Future Work}
For ease of exposition, we limit the scope of this paper to discrete state and action spaces. However, an interesting direction for future research would be to extend our results to the continuous setting, where the CMDP problem can be formulated as an infinite-dimensional convex optimization problem involving the occupancy measure \citep{altman1999constrained}. Furthermore, our results are based on optimal solutions and rewards, but in practical settings, we hardly ever obtain optimal solutions and approximately optimal solutions are the norm. Hence, examining identifiability and generalizability in an approximate setting would be valuable for practical applications and may reveal valuable insights on how to choose the regularization $f$. Finally, Proposition~\ref{prop:linear_reward_identifiability} provides a sufficient condition for generalizability, but checking the rank condition \eqref{eq:cirl_identifiability_condition} requires knowledge of the transition law and the constraints. To alleviate this, it may be helpful to learn a reward from multiple experts with different transition laws and constraints.
\vspace{-0.2cm}
\section{Summary}
In this paper, we present a constrained IRL framework for CMDPs with arbitrary convex regularizations of the occupancy measure. From a convex-analytic viewpoint, we address identifiability and generalizability to new transition laws and constraints. Our results indicate that identifiability of rewards up to potential shaping is contingent on the use of entropy regularizations and that generalizability to new transition laws and constraints is only possible when the expert's reward is identified up to a constant. Based on these insights, we provide a sufficient condition for identifiability and generalizability. Furthermore, we show a novel result on the number of expert trajectories required to recover a reward whose optimal policy is close to the expert's policy. Lastly, we showcase the applicability of our results in a gridworld experiment.
\headSpaceAfter\vspace{-0.2cm}
\section*{Acknowledgements} Andreas Schlaginhaufen is funded by a PhD fellowship from the Swiss Data Science Center.
\newpage
\bibliography{refs}
\bibliographystyle{icml2023}

\newpage
\appendix
\onecolumn
\section{Notation}\label{app:sec:notation}
In the following, we briefly recall a few basic definitions from convex analysis \citep{rockafellar1970convex, boyd2004convex}. To this end, we denote $B(\fx,r)\defeq \bc{\fx\in \R^l:\norm{\fx}_2 <r}$ for an open ball of radius $r$ and center $\fx$.
\begin{definition}[Interior]
The interior of a set $\mathcal{X}\subseteq\R^l$ is defined as
\begin{equation}
    \interior\mathcal{X} \defeq \bc{\fx\in\mathcal{X}: B(\fx,r)\subseteq \mathcal{X} \text{ for some } r>0}.
\end{equation}
\end{definition}
\begin{definition}[Affine hull]
The affine hull of a set $\mathcal{X}\subseteq\R^l$ is defined as
\begin{equation}
    \mathbf{aff}\mathcal{X} \defeq \bc{\theta_1 \fx_1 + \hdots + \theta_k \fx_k:\fx_1,\hdots,\fx_k\in\mathcal{X}, \theta_1+ \hdots + \theta_k=1}.
\end{equation}
\end{definition}
\begin{definition}[Relative interior]
The relative interior of a set $\mathcal{X}\subseteq\R^l$ is defined as
\begin{equation}
    \relint\mathcal{X} \defeq \bc{\fx\in\mathcal{X}: B(\fx,r)\cap \mathbf{aff}\mathcal{X}\subseteq \mathcal{X} \text{ for some } r>0}.
\end{equation}
\end{definition}
\begin{definition}[Relative boundary]
The relative boundary of a closed set $\mathcal{X}\subseteq\R^l$ is defined as
\begin{equation}
    \relbd\mathcal{X} \defeq \mathcal{X} \setminus \relint \mathcal{X}.
\end{equation}
\end{definition}
\begin{definition}[Subdifferential] 
A subgradient of a convex function $f:\mathcal{X}\to \R$ with $\mathcal{X}\subseteq\R^l$ at some point $\fx\in\mathcal{X}$ is a vector $\bm{g}\in\R^{l}$ such that $f(\Tilde{\fx})\geq f(\fx) + \bm{g}^\top\br{\Tilde{\fx} - \fx}$ for all $\tilde{\fx}\in\mathcal{X}$. The subdifferential $\partial f (\fx)$ at $\fx\in\mathcal{X}$ is the set of all subgradients at $\fx$.\looseness-1
\end{definition}
\begin{definition}[Normal cone] 
The normal cone $N_{\mathcal{X}}(\fx)$ of a convex set $\mathcal{X}\subseteq\R^l$ at some point $\fx\in\mathcal{X}$ is the set of all $\bm{h}\in\R^{l}$ such that $\bm{h}^\top\br{\Tilde{\fx} - \fx}\leq 0$ for all $\tilde{\fx}\in\mathcal{X}$.
\end{definition}

\section{Proofs and Comments for Section~\ref{sec:background} and \ref{sec:convex_viewpoint}}\label{app:sec:strong_duality}
\subsection{Entropy Regularization}\label{app:subsec:entropy_regularization}
In their work on regularized MDPs \citet{geist2019theory} consider a family of regularized MDPs with the objective
\begin{equation}
    \max_{\fpi\in\Pi} J(\fpi, \fr),
\end{equation}
where $J(\fpi, \fr) \defeq \E_{(s,a)\sim\fmu^{\fpi}} \bs{\fr(s,a) - \Omega(\fpi(\cdot|s))}$ and $\Omega:\Delta_{\mathcal{A}}\to\R$ is strongly convex. Defining the optimal value and q-value function
\begin{align}
    \f{v}^*(s) &\defeq \max_{\fpi\in\Pi} \E_{\fpi}\bs{\sum_{t=0}^{\infty} \gamma^t\bs{ \fr(s_t, a_t) - \Omega(\fpi(\cdot|s_t))}\bigg|s_0 = s}\\
    \f{q}^*(s,a) &\defeq \fr(s,a) + \gamma \E_{s'\sim \fP(\cdot|s,a)}\f{v}^*(s'),
\end{align}
the optimal policy can be shown to be $\fpi^*(\cdot|s)=\nabla\Omega^*(\f{q}^*(s,\cdot))$, where $\nabla\Omega^*$ is the gradient of the convex conjugate $\Omega^*(\f{q}^*(s,\cdot))\defeq \max_{\fd\in\Delta_{\mathcal{A}}}{\f{q}^*(s,\cdot)}^\top\fd - \Omega(\fd)$. For the entropy regularization $\Omega(\fd) = -\beta H(\fd)$ with $\beta>0$, the optimal policy can be shown to have the soft-max form
\begin{equation}\label{app:eq:soft_opt_policy}
    \fpi^*(a|s) = \dfrac{\exp\br{\f{q}^*(s,a)/\beta}}{\sum_{a'}\exp\br{\f{q}^*(s,a')/\beta}}.
\end{equation}
Accordingly, entropy regularization forces the optimal policy to always assign a non-zero probability to each action regularizing the optimal policy towards the uniform distribution. Similar to unregularized MDPs, the optimal policy in entropy regularized MDPs can be computed via value or policy iteration \citep{ziebart2010modeling, haarnoja2018soft}. 

In the occupancy measure, entropy regularization in the policy takes the form
\begin{equation}\label{app:eq:entropy_regularization}
    f(\fmu) = -\beta \,\E_{(s,a)\sim\fmu}\bs{H\left(\fpi^{\fmu}(\cdot|s)\right)} = \beta \sum_{s,a} \fmu(s,a) \log\br{\dfrac{\fmu(s,a)}{\sum_{a'}\fmu(s,a')}}.
\end{equation}
In order to incorporate prior knowledge about the expert's policy, we may also consider the relative entropy regularization 
\begin{equation}\label{app:eq:relative_entropy_regularization}
    f(\fmu) = \beta \,\E_{(s,a)\sim\fmu}\bs{\DKL\left(\fpi^{\fmu}(\cdot|s)||\fpi_0(\cdot|s)\right)},
\end{equation}
where $\DKL:\Delta_{\mathcal{A}}\times\Delta_{\mathcal{A}}\to\R_+$ with $\DKL\left(\fp||\fq\right) = \sum_a \fp(a) \log (\fp(a)/\fq(a))$ is the KL divergence or relative entropy and $\fpi_0\in\Pi$ is some some reference policy. The following corollary shows that, under Slater's condition, entropy and relative entropy regularization in the policy are indeed satisfying Assumption~\ref{ass:essential_smoothness}. By Corollary~\ref{cor:essential_smoothness_consequence} this implies that the optimal occupancy measure lies in the relative interior of $\mathcal{M}$.
\begin{corollary}\label{app:cor:essential_smoothness}
    Let Assumption~\ref{ass:slater} hold. Then, the regularizations 
    \begin{align}
        f_1(\fmu) &= -\beta \,\E_{(s,a)\sim\fmu}\bs{H\left(\fpi^{\fmu}(\cdot|s)\right)},\\
        f_2(\fmu) &= \beta \,\E_{(s,a)\sim\fmu}\bs{\DKL\left(\fpi^{\fmu}(\cdot|s)||\fpi_0(\cdot|s)\right)}\nonumber,
    \end{align}
    both satisfy Assumption~\ref{ass:essential_smoothness}.
\end{corollary}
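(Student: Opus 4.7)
The plan is to verify Assumption~\ref{ass:essential_smoothness}(a) and (b) by direct computation, taking as the domain $\mathcal{X} = \R^{nm}_+$ (with the convention $0\log 0 = 0$) so that $\interior\mathcal{X} = \R^{nm}_{++}$. Introducing the shorthand $\rho \defeq \fE^\top\fmu \in \R^n$ for the state occupancy measure, a short manipulation gives
\[
f_1(\fmu) \,=\, \beta\!\sum_{s,a}\fmu(s,a)\log\fmu(s,a) \,-\, \beta\sum_s \rho(s)\log\rho(s),
\]
and $f_2(\fmu) = f_1(\fmu) - \beta\sum_{s,a}\fmu(s,a)\log\fpi_0(a|s)$. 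Part~(a) is then immediate, and direct differentiation yields
\[
\nabla f_1(\fmu)(s,a) \,=\, \beta\log\fpi^{\fmu}(a|s),\qquad \nabla f_2(\fmu)(s,a) \,=\, \beta\bigl[\log\fpi^{\fmu}(a|s) - \log\fpi_0(a|s)\bigr]
\]
on $\interior\mathcal{X}$. Since $\fpi_0$ is taken strictly positive everywhere (required for $\DKL(\fpi^{\fmu}(\cdot|s)||\fpi_0(\cdot|s))$ to be finite on $\Delta_{\mathcal{A}}$), the additive $\log\fpi_0$ contribution is bounded, so it suffices to prove (b) for $f_1$ and transfer the conclusion to $f_2$.

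The heart of the argument is the following \emph{visited-boundary lemma}: if $\fmu\in\relbd\mathcal{M}$, then there exists a pair $(s^\sharp, a^\sharp)$ with $\fmu(s^\sharp, a^\sharp) = 0$ \emph{and} $\rho(s^\sharp) > 0$. Granted this, any sequence $(\fmu_k)\subset\interior\mathcal{X}$ with $\fmu_k\to\fmu$ satisfies $\fmu_k(s^\sharp,a^\sharp)\to 0$ while $\rho_k(s^\sharp)\to\rho(s^\sharp)>0$, so $\fpi^{\fmu_k}(a^\sharp|s^\sharp)\to 0$ and
\[
\norm{\nabla f_1(\fmu_k)} \,\geq\, \bigl|\beta\log\fpi^{\fmu_k}(a^\sharp|s^\sharp)\bigr| \,\to\, \infty
\]
in any norm on $\R^{nm}$, which is precisely Assumption~\ref{ass:essential_smoothness}(b).

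The lemma is the main technical obstacle, because a priori $\fmu\in\relbd\mathcal{M}$ only guarantees $\fmu(s^*,a^*) = 0$ for some pair with $s^*$ reachable in the MDP (after the standard WLOG reduction of dropping a priori unreachable states, which impose only implicit equalities and so contribute nothing to $\relbd\mathcal{M}$), yet $s^*$ itself may well satisfy $\rho(s^*) = 0$. I would resolve this by backward propagation along the Bellman flow. Let $\mathcal{S}_+ \defeq \bc{s : \rho(s) > 0}$; then $\supp\fnu_0\subseteq\mathcal{S}_+$, and the flow identity
\[
\rho(s') \,=\, (1-\gamma)\fnu_0(s') \,+\, \gamma\sum_{s,a}\fP(s'|s,a)\fmu(s,a)
\]
forces $\fmu(s,a) = 0$ whenever $\fP(s'|s,a) > 0$ with $s'\notin\mathcal{S}_+$. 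Taking any reachability path in the MDP from $\supp\fnu_0$ to $s^*$ and inspecting the first transition that exits $\mathcal{S}_+$ then produces the required $(s^\sharp,a^\sharp)$ with $s^\sharp\in\mathcal{S}_+$ and $\fmu(s^\sharp,a^\sharp) = 0$. Slater's assumption enters only to ensure the setting is nonvacuous, so that this characterization of $\relbd\mathcal{M}$ on the reduced, all-reachable state space is clean.
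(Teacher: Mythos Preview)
Your proposal is correct and follows the same architecture as the paper's proof: compute $\nabla f_i$ explicitly as $\beta\log\fpi^{\fmu}$ (plus the bounded $\log\fpi_0$ correction for $f_2$), then show that any $\fmu\in\relbd\mathcal{M}$ admits a pair $(s^\sharp,a^\sharp)$ with $\fmu(s^\sharp,a^\sharp)=0$ and positive state occupancy at $s^\sharp$, from which the gradient blow-up is immediate. The only substantive difference lies in the proof of this key lemma. The paper isolates it as a separate proposition (Proposition~\ref{app:prop:vanishing_policy}) and argues the contrapositive probabilistically: if $\fpi^{\fmu}>\bm 0$ everywhere, Slater furnishes a strictly positive reference $\bar{\fmu}\in\mathcal{M}$, hence a positive-probability path to every state under $\fpi^{\bar{\fmu}}$, and this path retains positive probability under any nonvanishing policy. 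You instead argue directly from the linear Bellman-flow identity, back-propagating the vanishing of $\rho$ along a reachability path to locate a transition exiting $\mathcal{S}_+$. Both arguments rest on the same reachability-path idea; yours is slightly more self-contained in that it never invokes the probabilistic representation of the occupancy measure, and it is more explicit about the role of a priori unreachable states, which the paper handles only implicitly.
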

To prove Corollary~\ref{app:cor:essential_smoothness} we first provide a formula for the gradients in Proposition~\ref{app:prop:entropy_regularization} below.
\begin{proposition}\label{app:prop:entropy_regularization} 
Consider a differentiable policy regularization $\Omega_s:\Delta_{\mathcal{A}}\to\R$ that is additionally allowed to depend on the state $s$. Let $\fmu\in\relint\Delta_{\mathcal{S}\times\mathcal{A}}$. For $n>1$ and $f(\fmu) = \E_{(s,a)\sim\fmu}\bs{\Omega_s\br{\fpi^{\fmu}(\cdot|s)}}$ we have
\begin{equation}\label{app:eq:reg_grad}
    \dfrac{\partial f(\fmu)}{\partial \fmu(s', a')} = \Omega_{s'}\br{\fpi^{\fmu}(\cdot|s')} + \nabla \Omega_{s'}\br{\fpi^{\fmu}(\cdot|s')}(a') - \sum_a \fpi^{\fmu}(a|s')\nabla \Omega_{s'}\br{\fpi^{\fmu}(\cdot|s')}(a).
\end{equation}
In particular, for $f_1(\fmu) = -\beta \,\E_{(s,a)\sim\fmu}\bs{H\left(\fpi^{\fmu}(\cdot|s)\right)}$ and $f_2(\fmu) = \beta \,\E_{(s,a)\sim\fmu}\bs{\DKL\left(\fpi^{\fmu}(\cdot|s)||\fpi_0(\cdot|s)\right)}$ with $\beta>0$ and $\fpi_0>\bm 0$, we get the following gradients:
    \begin{enumerate}[(a)]
        \item For $n=1$, we have $\nabla f_1(\fmu) = \beta\br{\log \fmu + \ones_{m}}$ and $\nabla f_2(\fmu) = \beta\br{\log \frac{\fmu}{\fpi_0} + \ones_{m}}$.
        \item For $n>1$, we have $\nabla f_1(\fmu) = \beta\log \fpi^{\fmu}$ and $\nabla f_2(\fmu) = \beta\log\frac{\fpi^{\fmu}}{\fpi_0}$, where $\frac{\fpi^{\fmu}}{\fpi_0}\defeq \myvec{\frac{\fpi^{\fmu}(a_1|s_1)}{\fpi_0(a_1|s_1)},\hdots, \frac{\fpi^{\fmu}(a_m|s_n)}{\fpi_0(a_m|s_n)}}^\top$.
    \end{enumerate}
\end{proposition}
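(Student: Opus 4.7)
The plan is to establish the general formula \eqref{app:eq:reg_grad} by a careful application of the chain rule, and then specialize it to entropy and relative entropy regularization. The key observation is that although $f$ is written as a sum over state-action pairs, the regularizer $\Omega_s(\fpi^{\fmu}(\cdot|s))$ actually only depends on $s$ (not on $a$), so we can first rewrite
\begin{equation*}
    f(\fmu) \;=\; \sum_{s,a} \fmu(s,a)\,\Omega_s(\fpi^{\fmu}(\cdot|s))\;=\;\sum_s \rho(s)\,\Omega_s(\fpi^{\fmu}(\cdot|s)),
\end{equation*}
where $\rho(s)\defeq(\fE^\top \fmu)(s)=\sum_a \fmu(s,a)$. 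Since $\fmu\in\relint\Delta_{\mathcal{S}\times\mathcal{A}}$, we have $\rho(s)>0$ for every $s$, so the map $\fmu\mapsto \fpi^{\fmu}$ is differentiable at $\fmu$ in the sense of \eqref{eq:pi_mu}.

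Next, I would compute the two basic partial derivatives. Clearly $\partial \rho(s)/\partial \fmu(s',a') = \mathbbm{1}(s=s')$, and a direct application of the quotient rule to $\fpi^{\fmu}(a|s)=\fmu(s,a)/\rho(s)$ gives
\begin{equation*}
    \frac{\partial \fpi^{\fmu}(a|s)}{\partial \fmu(s',a')}\;=\;\frac{\mathbbm{1}(s=s')}{\rho(s)}\bigl[\mathbbm{1}(a=a')-\fpi^{\fmu}(a|s)\bigr].
\end{equation*}
Applying the chain rule to $f(\fmu)=\sum_s \rho(s)\,\Omega_s(\fpi^{\fmu}(\cdot|s))$ then yields
\begin{equation*}
    \frac{\partial f(\fmu)}{\partial \fmu(s',a')} = \Omega_{s'}(\fpi^{\fmu}(\cdot|s')) + \sum_a \nabla\Omega_{s'}(\fpi^{\fmu}(\cdot|s'))(a)\bigl[\mathbbm{1}(a=a')-\fpi^{\fmu}(a|s')\bigr],
\end{equation*}
where the factor $\rho(s')$ from the outer sum cancels with the $1/\rho(s')$ from the derivative of $\fpi^{\fmu}$. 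Separating the indicator from the sum gives precisely \eqref{app:eq:reg_grad}.

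For the special cases, part (a) is essentially immediate: when $n=1$ the simplex constraint forces $\rho\equiv 1$, so $\fpi^{\fmu}=\fmu$ and both $f_1$ and $f_2$ reduce to the standard (relative) entropy on $\Delta_{\mathcal{A}}$, whose gradients $\beta(\log\fmu+\ones_m)$ and $\beta(\log(\fmu/\fpi_0)+\ones_m)$ are textbook. For part (b), I would substitute $\nabla\Omega_s(\fd)(a)=\beta(\log\fd(a)+1)$ for $f_1$ (resp.\ $\beta(\log(\fd(a)/\fpi_0(a|s))+1)$ for $f_2$) into \eqref{app:eq:reg_grad}. The only thing to check is that $\Omega_{s'}(\fpi^{\fmu}(\cdot|s'))$ cancels against $\sum_a \fpi^{\fmu}(a|s')\nabla\Omega_{s'}(\fpi^{\fmu}(\cdot|s'))(a)$ up to a constant $\beta$: indeed, $\sum_a \fpi^{\fmu}(a|s')\cdot\beta(\log \fpi^{\fmu}(a|s')+1)=\Omega_{s'}(\fpi^{\fmu}(\cdot|s'))+\beta$ because $\sum_a\fpi^{\fmu}(a|s')=1$, and similarly for the KL case. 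What remains is then $\nabla\Omega_{s'}(\fpi^{\fmu}(\cdot|s'))(a')-\beta$, which simplifies to $\beta\log\fpi^{\fmu}(a'|s')$ for $f_1$ and $\beta\log(\fpi^{\fmu}(a'|s')/\fpi_0(a'|s'))$ for $f_2$. The only mild subtlety—hardly an obstacle—is bookkeeping the cancellation of the additive constants coming from $\log+1$ against the simplex sum; everything else is routine chain-rule computation.
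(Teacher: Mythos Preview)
Your proposal is correct and follows essentially the same approach as the paper: both rewrite $f$ as $\sum_s \rho(s)\,\Omega_s(\fpi^{\fmu}(\cdot|s))$ (the paper uses $\fnu$ for your $\rho$), compute $\partial \fpi^{\fmu}(a|s)/\partial\fmu(s',a')$ via the quotient rule, apply the chain rule so that the $\rho(s')$ factors cancel, and then specialize to $\Omega_1=-\beta H$ and $\Omega_{2,s}=\beta\DKL(\cdot\|\fpi_0(\cdot|s))$ with the same cancellation of the additive constants against $\sum_a\fpi^{\fmu}(a|s')=1$. Your write-up is slightly more explicit about the cancellation in part~(b), but there is no substantive difference.
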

\begin{proof}[Proof of Proposition~\ref{app:prop:entropy_regularization}]
    We define the state occupancy measure $\fnu(s) \defeq \sum_{a}\fmu(s,a)$. The result then follows by naive differentiation. In particular, by the product rule we have
    \begin{align}\label{app:eq:reg_grad1}
        \dfrac{\partial f(\fmu)}{\partial \fmu(s', a')} = \Omega_{s'}\br{\fpi^{\fmu}(\cdot|s')}(a') + \fnu(s') \dfrac{\partial\Omega_{s'}\br{\fpi^{\fmu}(\cdot|s')}}{\partial \fmu(s', a')}.
    \end{align}
    Furthermore, it holds
    \begin{align}\label{app:eq:reg_grad2}
        \dfrac{\partial\Omega_{s'}\br{\fpi^{\fmu}(\cdot|s')}}{\partial \fmu(s', a')} &= \nabla\Omega_{s'}\br{\fpi^{\fmu}(\cdot|s')}^\top \dfrac{\partial\fpi^{\fmu}(\cdot|s')}{\partial \fmu(s', a')}\\
        &\stackrel{}{=} \sum_a \nabla\Omega_{s'}\br{\fpi^{\fmu}(\cdot|s')}(a) \dfrac{\delta_{a, a'} - \fpi^{\fmu}(a|s')}{\fnu(s')}\nonumber\\
        &\stackrel{}{=} \dfrac{1}{\fnu(s')}\br{\nabla\Omega_{s'}\br{\fpi^{\fmu}(\cdot|s')}(a') - \sum_a  \fpi^{\fmu}(a|s')\nabla\Omega_{s'}\br{\fpi^{\fmu}(\cdot|s')}(a) },\nonumber
    \end{align}
    where we used that $\fpi^{\fmu}(a|s) = \fmu(s, a)/\fnu(s)$ and $\delta_{a, a'}$ denotes the Kronecker delta with $\delta_{a, a'}=1$ if $a=a'$ and $\delta_{a, a'}=0$ otherwise. Hence,
    \begin{equation}
        \dfrac{\partial\fpi^{\fmu}(\cdot|s')}{\partial \fmu(s', a')}(a) =  \dfrac{\fnu(s')\delta_{a, a'} - \fmu(s',a)}{\fnu(s')^2} = \dfrac{\delta_{a, a'} - \fpi^{\fmu}(a|s')}{\fnu(s')}.
    \end{equation}
    Plugging \eqref{app:eq:reg_grad2} back into \eqref{app:eq:reg_grad1} yields
    \begin{equation}
        \dfrac{\partial f(\fmu)}{\partial \fmu(s', a')} = \Omega_{s'}\br{\fpi^{\fmu}(\cdot|s')}(a') + \nabla\Omega_{s'}\br{\fpi^{\fmu}(\cdot|s')}(a') - \sum_a  \fpi^{\fmu}(a|s')\nabla\Omega_{s'}\br{\fpi^{\fmu}(\cdot|s')}(a).
    \end{equation}
    Now, for the special cases $f_1$ and $f_2$ we have $f_1(\fmu) = \E_{(s,a)\sim\fmu}\bs{\Omega_1\br{\fpi^{\fmu}(\cdot|s)}}$ and $f_2(\fmu) = \E_{(s,a)\sim\fmu}\bs{\Omega_{2,s}\br{\fpi^{\fmu}(\cdot|s)}}$ for $\Omega_1(\fd) = -\beta H(\fd)$ and $\Omega_{2,s}(\fd) = \beta \DKL(\fd||\fpi_0(\cdot|s))$, respectively. Moreover, $\nabla \Omega_1(\fd) = \beta\br{\log \fd + \ones_{m}}$ and $\nabla \Omega_{2,s}(\fd) = \beta\br{\log \frac{\fd}{\fpi_0(\cdot|s)} + \ones_{m}}$. This proves $(a)$, since for $n=1$ we have $\fmu = \fpi^{\fmu}$ and $f_i = \Omega_i$ for $i=1,2$. Moreover, to show $(b)$ we plug the above gradients of the policy regularizations back into the formula \eqref{app:eq:reg_grad} which yields
    \begin{align}
        \dfrac{\partial f_1(\fmu)}{\partial \fmu(s', a')} &= \Omega_1\br{\fpi^{\fmu}(\cdot|s')} + \nabla \Omega_1\br{\fpi^{\fmu}(\cdot|s')}(a') - \sum_a \fpi^{\fmu}(a|s')\nabla \Omega_1\br{\fpi^{\fmu}(\cdot|s')}(a)\\
        &=-\beta H(\fpi^{\fmu}(\cdot|s')) + \beta \br{\log \fpi^{\fmu}(a'|s') + 1} - \sum_a \fpi^{\fmu}(a|s')\beta \br{\log \fpi^{\fmu}(a|s') + 1} \nonumber\\
        &= \beta \log \fpi^{\fmu}(a'|s'),\nonumber
    \end{align}
    and
    \begin{align}
        \dfrac{\partial f_2(\fmu)}{\partial \fmu(s', a')} &= \Omega_{2,s'}\br{\fpi^{\fmu}(\cdot|s')} + \nabla \Omega_{2,s'}\br{\fpi^{\fmu}(\cdot|s')}(a') - \sum_a \fpi^{\fmu}(a|s')\nabla \Omega_{2,s'}\br{\fpi^{\fmu}(\cdot|s')}(a)\\
        &=\beta \DKL(\fpi^{\fmu}(\cdot|s')||\fpi_0(\cdot|s')) + \beta \br{\log \dfrac{\fpi^{\fmu}(a'|s')}{\fpi_0(a'|s')} + 1} - \sum_a \fpi^{\fmu}(a|s')\beta \br{\log \dfrac{\fpi^{\fmu}(a|s')}{\fpi_0(a|s')} + 1} \nonumber\\
        &= \beta \log \dfrac{\fpi^{\fmu}(a'|s')}{\fpi_0(a'|s')},\nonumber
    \end{align}
    as desired.
\end{proof}
Before we can proceed with the proof of Corollary~\ref{app:cor:essential_smoothness}, we need to prove the following proposition showing that under Slater's condition the state occupancy measure can only be zero if the policy assigns zero probability to some state action pair.\looseness-1
\begin{proposition}\label{app:prop:vanishing_policy}
Let Assumption~\ref{ass:slater} hold. If $\fnu(s)\defeq\sum_a \fmu(s,a) = 0$ for some $s\in\mathcal{S}$, then $\fpi^{\fmu}(a'|s') = 0$ for some $(s',a')\in\mathcal{S}\times \mathcal{A}$.
\end{proposition}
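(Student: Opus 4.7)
\textbf{Proof plan for Proposition~\ref{app:prop:vanishing_policy}.} The plan is to argue by contrapositive: assume $\fpi^{\fmu}(a\mid s) > 0$ for every $(s,a)\in\mathcal{S}\times\mathcal{A}$, and deduce that $\fnu(s) > 0$ for every $s\in\mathcal{S}$. Define the ``unvisited'' set $\mathcal{Z}\defeq\{s\in\mathcal{S}:\fnu(s)=0\}$ and its complement $\mathcal{V}\defeq \mathcal{S}\setminus\mathcal{Z}$. The goal is to show $\mathcal{Z}=\emptyset$, which by contradiction would establish the claim.

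The first step is to exploit the Bellman flow constraint $(\fE-\gamma\fP)^\top\fmu=(1-\gamma)\fnu_0$ evaluated at any $s\in\mathcal{Z}$: this forces both $\fnu_0(s)=0$ and $\fP(s\mid s',a')\,\fmu(s',a')=0$ for every $(s',a')$. Under the standing assumption that $\fpi^{\fmu}>\bm 0$ everywhere, I have $\fmu(s',a')=\fpi^{\fmu}(a'\mid s')\fnu(s')>0$ for all $s'\in\mathcal{V}$ and all $a'$. Hence
\begin{equation}
\fP(s\mid s',a') = 0 \quad \text{for all } s\in\mathcal{Z},\; s'\in\mathcal{V},\; a'\in\mathcal{A},
\end{equation}
meaning the set $\mathcal{Z}$ is unreachable from $\mathcal{V}$ under every action.

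The second step invokes Slater's condition (Assumption~\ref{ass:slater}) to obtain some $\tilde{\fmu}\in\relint\mathcal{F}$, which in particular satisfies $\tilde{\fmu}>\bm 0$ componentwise and lies in $\mathcal{M}$. Writing the Bellman flow for $\tilde{\fmu}$ at an arbitrary $s\in\mathcal{Z}$, using the invariance property just derived together with $\fnu_0(s)=0$, I get
\begin{equation}
\tilde{\fnu}(s) \;=\; \gamma\sum_{s'\in\mathcal{Z},\,a'}\fP(s\mid s',a')\,\tilde{\fmu}(s',a').
\end{equation}
Summing over $s\in\mathcal{Z}$ and swapping the order of summation, the inner sum $\sum_{s\in\mathcal{Z}}\fP(s\mid s',a')$ is bounded above by $1$, yielding $\sum_{s\in\mathcal{Z}}\tilde{\fnu}(s)\leq\gamma\sum_{s'\in\mathcal{Z}}\tilde{\fnu}(s')$. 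Since $\gamma\in(0,1)$ this forces $\sum_{s\in\mathcal{Z}}\tilde{\fnu}(s)=0$, contradicting $\tilde{\fmu}>\bm 0$ provided $\mathcal{Z}\neq\emptyset$. Therefore $\mathcal{Z}=\emptyset$, completing the contrapositive.

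The only subtle point is the first step: one must notice that the convention $\fpi^{\fmu}(a\mid s)=1/|\mathcal{A}|$ for unvisited $s$ does not by itself yield a zero entry in $\fpi^{\fmu}$, so it is essential to phrase the argument in terms of $\fmu$ and the Bellman flow rather than directly in terms of $\fpi^{\fmu}$. The main obstacle is getting the ``invariance of $\mathcal{Z}^c$'' cleanly, and then correctly using the strict inequality $\gamma<1$ together with strict positivity of $\tilde{\fmu}$ from Slater to close the contradiction.
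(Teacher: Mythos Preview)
Your proof is correct but follows a genuinely different route from the paper's argument. The paper also argues by contraposition, but its proof is \emph{probabilistic/path-based}: it takes the Slater point $\bar{\fmu}>\bm 0$, uses the occupancy-measure identity $\bar{\fnu}(s)=(1-\gamma)\sum_{t\geq 0}\gamma^t\Pp_{\fnu_0}^{\fpi^{\bar{\fmu}}}(s_t=s)$ to find a finite time $T$ with $\Pp_{\fnu_0}^{\fpi^{\bar{\fmu}}}(s_T=s)>0$, extracts a concrete path $(s_0,a_0,\dots,s_T)$ with positive probability, and then observes that the same path keeps positive probability under any non-vanishing policy, in particular under $\fpi^{\fmu}$. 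Your proof is instead \emph{algebraic/flow-based}: you work directly with the Bellman flow equation to show that the unvisited set $\mathcal{Z}$ is unreachable from its complement and has zero initial mass, and then apply a neat contraction argument ($\sum_{s\in\mathcal{Z}}\tilde{\fnu}(s)\leq\gamma\sum_{s\in\mathcal{Z}}\tilde{\fnu}(s)$ with $\gamma<1$) at the Slater point to force $\mathcal{Z}=\emptyset$. Your approach avoids invoking the series representation of the occupancy measure and the existence of a positive-probability path, at the price of being slightly less intuitive from the MDP-dynamics viewpoint; the paper's approach makes the reachability intuition explicit but relies on the probabilistic characterization of $\fmu$. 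Both arguments use Slater only to obtain a strictly positive occupancy measure, and both hinge on the assumption $\fpi^{\fmu}>\bm 0$ solely at visited states, which you correctly flag as the subtle point.
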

\begin{proof}
    We show the contraposition: if $\fpi^{\fmu}>\bm 0$, then $\fnu>\bm 0$. To this end, let $\fpi^{\fmu}>\bm 0$ and note that due to Assumption~\ref{ass:slater} (Slater's condition) there is some $\bar{\fmu}\in\mathcal{M}$ such that $\bar{\fmu}>\bm 0$. Hence for any $s\in\mathcal{S}$ it holds that
    \begin{equation}
        \bar{\fnu}(s) = \sum_a \bar{\fmu}(s,a) = (1-\gamma) \sum_{t=0}^{\infty} \gamma^t\Pp_{\fnu_0}^{\fpi^{\bar{\fmu}}}(s_t = s) > 0.
    \end{equation}
    Therefore, there must exist some $T\in\N$ such that $\Pp_{\fnu_0}^{\fpi^{\bar{\fmu}}}(s_T = s)>0$. Let us fix such a $T$. 
    
    If $T=0$, then $\Pp_{\fnu_0}^{\fpi^{\bar{\fmu}}}(s_T = s)=\fnu_0(s)>0$. In particular, this implies that $\fnu(s)>0$.
    
    If $T>0$, we have
    \begin{equation}
        \Pp_{\fnu_0}^{\fpi^{\bar{\fmu}}}(s_T = s) = \sum_{\substack{s_0,\hdots, s_{T-1}\\a_0,\hdots, a_{T-1}}}\fnu_0(s_0) \prod_{t=1}^T \fpi^{\bar{\fmu}}(a_{t-1}|s_{t-1})\fP(s_t|s_{t-1}, a_{t-1})>0.
    \end{equation}
    This implies that there is at least one path $(s_0, a_0, \hdots, s_T)$ with non-zero probability under $\Pp_{\fnu_0}^{\fpi^{\bar{\fmu}}}$ i.e.
    \begin{equation}
        \fnu_0(s_0) \prod_{t=1}^T \fpi^{\bar{\fmu}}(a_{t-1}|s_{t-1})\fP(s_t|s_{t-1}, a_{t-1})>0.
    \end{equation}
    Moreover, the above product remains positive under each non-vanishing policy. This implies that $\Pp_{\fnu_0}^{\fpi^{\fmu}}(s_T = s)>0$ and thus $\fnu(s)>0$. Since the above proof holds for each $s\in\mathcal{S}$, we have proven that $\fnu>\bm 0$.
\end{proof}
Now, we are ready to prove Corollary~\ref{app:cor:essential_smoothness}.
\begin{proof}[Proof of Corollary~\ref{app:cor:essential_smoothness}]
    Both regularizations are differentiable in the relative interior of their domain $\R^{nm}_+$ (with the gradients provided in Proposition~\ref{app:prop:entropy_regularization}). Now, let $\br{\fmu_k}_{k\in\N}$ be a sequence in $\relint \R^{nm}_+$ converging to some occupancy measure $\fmu\in\relbd\mathcal{M}$ i.e. we have $\fmu_k(s,a)\to\fmu(s,a) = 0$ for some $(s,a)\in\mathcal{S}\times\mathcal{A}$. In case $\fnu(s)=\sum_a\fmu(s,a)>0$, this implies that $\fpi^{\fmu}(a|s) = 0$. Moreover, in case $\fnu(s)= 0$, the result of Proposition~\ref{app:prop:vanishing_policy} implies that we have $\fpi^{\fmu}(a'|s') = 0$ for some other $(s',a')\in\mathcal{S}\times\mathcal{A}$. Now, since the mapping
    \begin{equation}
    \fmu\mapsto\fpi^{\fmu}(a|s) = \begin{cases} \fmu(s,a)/\fnu(s)&, \fnu(s)>0\\ 1/|\mathcal{A}|&, \text{ otherwise,} \end{cases}
    \end{equation}
    is for all state-action pairs continuous on $\bc{\fmu\in\mathcal{M}: \fpi^{\fmu}(a|s)=0}$, convergence in occupancy measure $\fmu_k(s,a)\to\fmu(s,a) = 0$ implies $\fpi^{\fmu_k}(a'|s')\to\fpi^{\fmu}(a'|s') = 0$ for some $(s',a')\in\mathcal{S}\times\mathcal{A}$. Therefore, since $|\log(x)|\to \infty$ as $x\to 0$, we have $\lim_{k\to\infty} \norm{\nabla f_i(\fmu_k)} = \infty$ for $i=1,2$.
\end{proof}

Next, we provide the proof of Proposition~\ref{prop:convexity} showing that policy regularization is a special case of occupancy measure regularization.

\subsection{Proof of Proposition~\ref{prop:convexity}}\label{app:subsec:prop:convexity}
\textbf{Proposition~\ref{prop:convexity}}
\textit{
    Let $f(\fmu)=\E_{(s,a)\sim\fmu}\bs{\Omega\left(\fpi^{\fmu}(\cdot|s)\right)}$. \vspace{-0.3cm}
    \begin{enumerate}[(a)]
        \item If $\Omega$ is convex, then so is $f$.\vspace{-0.1cm}
        \item If $\Omega$ is strictly convex, then so is $f$.
    \end{enumerate}\vspace{-0.2cm}
}
\begin{proof} 
Defining $\fmu_s\defeq \myvec{\fmu(s,a_1),\hdots,\fmu(s,a_m)}^\top$ and denoting the all-one vector in $\R^m$ by $\f{1}$ we can rewrite
\begin{equation}
	f(\fmu)=\sum_{s:\f{1}^\top\fmu_s>0} \f{1}^\top\fmu_s\Omega\br{\dfrac{\fmu_s}{\f{1}^\top\fmu_s}}.
\end{equation}
To prove (strict) convexity consider $\fmu,\bar{\fmu}\in \R^{nm}_+$ with $\fmu\neq\bar{\fmu}$. It will be convenient to define the sets $\mathcal{V}\defeq\bc{s\in\mathcal{S}: \f{1}^\top\fmu_s>0}$ and $\mathcal{W}\defeq\bc{s\in\mathcal{S}: \f{1}^\top\bar{\fmu}_s > 0}$. Let $\alpha\in(0,1)$ and $\bar{\alpha} \defeq 1-\alpha$, then it follows from $\mathcal{V}\cup\mathcal{W} = (\mathcal{V}\cap\mathcal{W})\cup (\mathcal{V}\setminus\mathcal{W})\cup(\mathcal{W}\setminus\mathcal{V})$ that
\begin{align}\label{app:eq:convexity_proof}
	&f(\alpha\fmu+ \bar{\alpha}\bar{\fmu})\\
	&= \sum_{s \in\mathcal{V}\cup\mathcal{W}} \f{1}^\top\br{\alpha\fmu_s + \bar{\alpha}\bar{\fmu}_s}\Omega\br{\dfrac{\alpha\fmu_s + \bar{\alpha}\bar{\fmu}_s}{\f{1}^\top\br{\alpha\fmu_s + \bar{\alpha}\bar{\fmu}_s}}}\nonumber\\
	&= \underbrace{\sum_{s\in\mathcal{V}\cap\mathcal{W}} \f{1}^\top\br{\alpha\fmu_s + \bar{\alpha}\bar{\fmu}_s}\Omega\br{\dfrac{\alpha\fmu_s + \bar{\alpha}\bar{\fmu}_s}{\f{1}^\top\br{\alpha\fmu_s + \bar{\alpha}\bar{\fmu}_s}}}}_{(\Delta)} + \sum_{s\in\mathcal{V}\setminus\mathcal{W}} \alpha\f{1}^\top\fmu_s \Omega\br{\dfrac{\alpha\fmu_s}{\alpha\f{1}^\top\fmu_s}}\nonumber\\
	&+ \sum_{s\in\mathcal{W}\setminus\mathcal{V}} \bar{\alpha}\f{1}^\top\bar{\fmu}_s \Omega\br{\dfrac{\bar{\alpha}\bar{\fmu}_s}{\bar{\alpha}\f{1}^\top\bar{\fmu}_s}}.\nonumber
\end{align}
From here on we can use (strict) convexity of $\Omega$ to bound $(\Delta)$ as follows
\begin{align}
    (\Delta) &= \sum_{s\in\mathcal{V}\cap\mathcal{W}} \f{1}^\top\br{\alpha\fmu_s + \bar{\alpha}\bar{\fmu}_s}\Omega\br{\dfrac{\alpha\f{1}^\top\fmu_s}{\f{1}^\top\br{\alpha\fmu_s + \bar{\alpha}\bar{\fmu}_s}}\dfrac{\fmu_s}{\f{1}^\top\fmu_s}+\dfrac{\bar{\alpha}\f{1}^\top\bar{\fmu}_s}{\f{1}^\top\br{\alpha\fmu_s + \bar{\alpha}\bar{\fmu}_s}}\dfrac{\bar{\fmu}_s}{\f{1}^\top\bar{\fmu}_s}}\\
	&\stackrel{(<)}{\leq} \sum_{s\in\mathcal{V}\cap\mathcal{W}} \f{1}^\top\br{\alpha\fmu_s + \bar{\alpha}\bar{\fmu}_s}\br{\dfrac{\alpha\f{1}^\top\fmu_s}{\f{1}^\top\br{\alpha\fmu_s + \bar{\alpha}\bar{\fmu}_s}}\Omega\br{\dfrac{\fmu_s}{\f{1}^\top\fmu_s}}+\dfrac{\bar{\alpha}\f{1}^\top\bar{\fmu}_s}{\f{1}^\top\br{\alpha\fmu_s + \bar{\alpha}\bar{\fmu}_s}}\Omega\br{\dfrac{\bar{\fmu}_s}{\f{1}^\top\bar{\fmu}_s}}}\nonumber\\
	&= \sum_{s\in\mathcal{V}\cap\mathcal{W}} \br{\alpha\f{1}^\top\fmu_s\Omega\br{\dfrac{\fmu_s}{\f{1}^\top\fmu_s}}+\bar{\alpha}\f{1}^\top\bar{\fmu}_s\Omega\br{\dfrac{\bar{\fmu}_s}{\f{1}^\top\bar{\fmu}_s}}}\nonumber.
\end{align}
Plugging this back into \eqref{app:eq:convexity_proof} yields (strict) convexity as desired
\begin{align}
    &f(\alpha\fmu+ \bar{\alpha}\bar{\fmu})\\
    &\stackrel{(<)}{\leq} \sum_{s\in\mathcal{V}\cap\mathcal{W}} \br{\alpha\f{1}^\top\fmu_s\Omega\br{\dfrac{\fmu_s}{\f{1}^\top\fmu_s}}+\bar{\alpha}\f{1}^\top\bar{\fmu}_s\Omega\br{\dfrac{\bar{\fmu}_s}{\f{1}^\top\bar{\fmu}_s}}}+\sum_{s\in\mathcal{V}\setminus\mathcal{W}} \alpha\f{1}^\top\fmu_s \Omega\br{\dfrac{\fmu_s}{\f{1}^\top\fmu_s}}\nonumber\\
	&+ \sum_{s\in\mathcal{W}\setminus\mathcal{V}} \bar{\alpha}\f{1}^\top\bar{\fmu}_s \Omega\br{\dfrac{\bar{\fmu}_s}{\f{1}^\top\bar{\fmu}_s}}\nonumber\\
    &= \alpha f(\fmu) + \bar{\alpha} f(\bar{\fmu}),\nonumber
\end{align}
where we used $\mathcal{V} = (\mathcal{V}\cap\mathcal{W})\cup(\mathcal{V}\setminus\mathcal{W})$ and $\mathcal{W} = (\mathcal{V}\cap\mathcal{W})\cup(\mathcal{W}\setminus\mathcal{V})$ in the last equality.
\end{proof}

\subsection{Proof of Proposition~\ref{prop:strong_duality}}\label{app:subsec:prop:strong_duality}
\textbf{Proposition~\ref{prop:strong_duality}}
\textit{
If Assumption~\ref{ass:slater} and \ref{ass:strict_convexity} hold, the dual optimum of \eqref{eq:cmdp_occ_dual} is attained for some $\fxi^*\geq\bm 0$, and \eqref{eq:cmdp_occ} is equivalent to an unconstrained MDP problem of reward $\fr-\fPsi\fxi^*$. In other words, it holds
    \begin{equation}\label{app:eq:equivalence_constrained_unconstrained_MDP}
    \RL(\fr) = \RLO(\fr-\fPsi\fxi^*).
    \end{equation}
}
\begin{proof}
The proof is based on standard Lagrangian duality theory. First, we note that the CMDP problem \eqref{eq:cmdp_occ} is a convex optimization problem. Its primal optimum is finite, as the feasible set $\mathcal{F}\subseteq\Delta_{\mathcal{S}\times\mathcal{A}}$ is bounded and the objective is upper bounded by a linear function (since $f$ is convex). From Slater's condition it follows that strong duality holds and the dual optimum is attained by some not necessarily unique $\fxi^*\f{\geq 0}$ \citep{boyd2004convex}. 

To show equation \eqref{app:eq:equivalence_constrained_unconstrained_MDP}, note that the primal optimum $\RL(\fr) = \bc{\fmu^*}$ is unique due to strict convexity of $f$. Moreover, for each pair $\br{\fmu^*, \fxi^*}$ of primal and dual optimal solutions the Lagrangian 
\begin{equation}
    L(\fmu, \fxi) = \fr^\top \fmu - f(\fmu) + \fxi^\top\br{\fb-\fPsi^\top\fmu},
\end{equation}
has a saddle point at $\br{\fmu^*, \fxi^*}$ i.e.
\begin{equation}
    L(\fmu, \fxi^*) \leq L(\fmu^*, \fxi^*) \leq L(\fmu^*, \fxi), \quad \forall \fmu\in\mathcal{M}, \fxi\geq \f{0}.
\end{equation}
We then have $\RLO(\fr-\fPsi\fxi^*) = \argmax_{\fmu\in\mathcal{M}} L(\fmu, \fxi^*) = \bc{\fmu^*}$ where we again used strict convexity of $f$ for the last equality.
\end{proof}

\subsection{Remarks on Strong Duality}\label{app:subsec:remarks_strong_duality}
Whereas strong duality holds also for unregularized CMDPs \citet{altman1999constrained}, unique recovery of the optimal occupancy measure from an unconstrained RL problem \eqref{app:eq:equivalence_constrained_unconstrained_MDP} is a consequence of the strictly convex regularization. To illustrate this, consider the following simple example.
\begin{example}\label{ex:running_example}
    Consider a single state MDP with $\mathcal{A}=\bc{a_1, a_2}$. The reward is defined via $\fr=\myvec{\fr(a_1), \fr(a_2)}^\top = \myvec{0, 1}^\top$ and there is no regularization. Furthermore, the agent needs to respect the constraints $\fPsi^\top\fmu = \myvec{0, 1}^\top\fmu = \fmu(a_2) \leq 3/4$. In this single state setting $\fmu^{\fpi}(a)=\fpi(a)$ and $\mathcal{M} = \Delta_{\mathcal{A}}$. Clearly, the unique primal optimal solution is $\fmu^*(a_1) = 1/4$ and $\fmu^*(a_2) = 3/4$. This is a key difference to the unconstrained setting where always a deterministic optimal policy exists. Thus, $\fmu^*$ cannot be realized as the unique optimum of an unconstrained, unregularized MDP, but only as the convex combination of multiple deterministic solutions. Indeed relaxing the safety constraint yields the Lagrangian $L(\fmu, \xi) = (\fr - \fPsi\xi)^\top \fmu + \xi b = (1-\xi)\fmu(a_2) + 3\xi/4 $ and the dual function
\begin{equation}
    g(\xi) = \max_{\fmu\in\Delta_{\mathcal{A}}} L(\fmu, \xi) = \begin{cases} 1 - \xi/4 \quad &,\xi \leq 1\\
    3\xi/4 \quad &,\xi>1.\end{cases}
\end{equation}
Thus, there is a unique dual optimum $\xi^* = \argmin_{\xi\geq 0} g(\xi) = 1$, leading to the dual optimal value $1/2$, which is equal to the primal optimum due to strong duality. However, for the reward $\fr-\fPsi\xi = \myvec{0, 0}^\top$ not only $\fmu^*$, but all $\fmu\in\Delta_{\mathcal{A}}$ are optimal in the unconstrained problem -- even those with $\fmu(a_2) > 3/4$ that are primal infeasible.
\end{example}

\section{Proofs and Comments of Section~\ref{sec:constrained_IRL}}\label{app:sec:identifiability}
\subsection{Preliminaries from Convex Analysis}
Throughout this section, we introduce a few additional tools from convex analysis which turn out to be useful for the proof of Theorem~\ref{thm:identifiability}. In convex analysis it is standard to extend convex functions over the entire space by setting their value to $+\infty$ outside of their domain. This leads us to extended real value functions $h:\R^{n}\to[-\infty, \infty]$. Their effective domain is defined as $\dom h \defeq \bc{\fx : h(\fx)< \infty}$, and a convex function $h$ is said to be proper if $h>-\infty$ and $\dom h \neq \emptyset$. Furthermore, $h$ is referred to as closed if its epigraph $\bc{\br{\fx, y}:\fx\in\dom h, y\geq h(\fx)}$ is a closed set. For instance, $h$ is closed if it is continuous and $\dom h$ is a closed set \citep{boyd2004convex}. Next, we introduce the two key tools needed for the proof of Theorem~\ref{thm:identifiability} -- convex conjugates and the Moreau-Rockafeller theorem.

\paragraph{Convex Conjugate} The convex conjugate $h^*:\R^{n}\to[-\infty, \infty]$ of $h$ is defined as
\begin{equation}
    h^*(\fy) \defeq \sup_{\fx\in\R^{n}} \fy^\top \fx - h(\fx).
\end{equation}
If $h$ is closed proper convex, it holds $h^{**}=h$. Moreover, the following optimality conditions hold.
\begin{theorem}[\citet{rockafellar1970convex}]\label{app:thm:convex_conjugate}
    For any proper convex function $h^*:\R^{n}\to[-\infty, \infty]$ it holds
    \begin{equation}
        h^*(\fy) = \fy^\top \fx - h(\fx) \quad \iff \quad \fy \in \partial h(\fx).
    \end{equation}
    If additionally $h$ is closed, then
    \begin{equation}
        h^*(\fy) = \fy^\top \fx - h(\fx) \quad \iff \quad \fy \in \partial h(\fx) \quad \iff \quad \fx \in \partial h^*(\fy).
    \end{equation}
\end{theorem}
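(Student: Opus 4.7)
The plan is to prove the Fenchel--Young equality characterization of the subdifferential directly from the definitions of the convex conjugate and the subdifferential, and then to bootstrap the second biconditional via the Fenchel--Moreau biconjugacy theorem. For the first biconditional, I would unpack the equality $h^*(\fy) = \fy^\top \fx - h(\fx)$ using the definition $h^*(\fy) \defeq \sup_{\fz \in \R^n} [\fy^\top \fz - h(\fz)]$. The equality is equivalent to saying that the supremum in the definition of $h^*(\fy)$ is attained at $\fz = \fx$, i.e., $\fy^\top \fz - h(\fz) \leq \fy^\top \fx - h(\fx)$ for every $\fz \in \R^n$. Rearranging yields the subgradient inequality $h(\fz) \geq h(\fx) + \fy^\top(\fz - \fx)$ for all $\fz$, which is precisely the defining condition of $\fy \in \partial h(\fx)$. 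Hence the first biconditional reduces to reading the two definitions in parallel, with no inequality slack to control.

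For the second biconditional under the additional hypothesis that $h$ is closed, I would invoke the Fenchel--Moreau biconjugacy theorem to conclude $h^{**} = h$. Since $h^*$ is always closed convex, and properness of $h^*$ is inherited from $h$ being closed proper convex, I may then apply the already-established first biconditional with the roles of $h$ and $h^*$ swapped. This gives $h^{**}(\fx) = \fx^\top \fy - h^*(\fy) \iff \fx \in \partial h^*(\fy)$. Substituting $h^{**}(\fx) = h(\fx)$ on the left-hand side and rearranging yields $h^*(\fy) = \fy^\top \fx - h(\fx) \iff \fx \in \partial h^*(\fy)$, which combined with the first biconditional closes the three-way equivalence.

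The main obstacles are largely technical and concern the extended-real arithmetic inherent to the setup. I would be careful to verify that $h$ being proper ensures $h^*$ is nowhere $-\infty$, so that the Fenchel--Young inequality $h^*(\fy) + h(\fx) \geq \fy^\top \fx$ is meaningful and the equality case is nontrivial. Closedness is essential for the second biconditional because without it $h^{**}$ only equals the lower semi-continuous convex envelope of $h$ and may strictly undercut $h$, in which case $\fx \in \partial h^*(\fy)$ need not imply $\fy \in \partial h(\fx)$. Apart from tracking these hypotheses, the result is a direct consequence of the definitions together with the classical biconjugacy theorem from \citet{rockafellar1970convex}, which I would invoke rather than reprove.
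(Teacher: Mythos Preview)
Your proposal is correct and is the standard argument for this classical result. Note, however, that the paper does not actually prove this theorem: it is stated in the appendix as a preliminary tool and attributed to \citet{rockafellar1970convex} without proof, then invoked as a black box in the proof of Theorem~\ref{thm:identifiability}. So there is no ``paper's own proof'' to compare against; your argument is precisely the textbook derivation one would find in Rockafellar (Theorem~23.5), unpacking the Fenchel--Young equality from the definitions and then using biconjugacy $h^{**}=h$ under closedness to obtain the symmetric statement.
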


\paragraph{Moreau-Rockafeller Theorem} The following theorem gives sufficient conditions under which the sum of subdifferentials of two functions is equal to the subdifferential of the sum of the two functions.
\begin{theorem}[\citet{rockafellar1970convex}]\label{app:thm:moreau-rockafeller}
    Let $h_1, h_2$ be proper convex functions on $\R^n$ and let $h\defeq h_1 + h_2$. If $\relint (\dom h_1)$ and $\relint (\dom h_2)$, have a point in common then
    \begin{equation}
        \partial h(\fx) = \partial h_1(\fx) + \partial h_2(\fx),\, \forall \fx.
    \end{equation}
    If $h_1$ is polyhedral (i.e. its epigraph is polyhedral) then it is enough if the sets $\dom h_1$ and $\relint (\dom h_2)$ have a point in common.
\end{theorem}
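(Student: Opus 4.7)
The plan is to prove the equality via two separate inclusions. The inclusion $\partial h_1(\fx) + \partial h_2(\fx) \subseteq \partial h(\fx)$ holds unconditionally and follows immediately by adding the two defining subgradient inequalities: if $\fy_i \in \partial h_i(\fx)$ for $i=1,2$, then for every $\fz \in \R^n$ we have $h_i(\fz) \geq h_i(\fx) + \fy_i^\top(\fz-\fx)$, and summing the two bounds yields $\fy_1+\fy_2 \in \partial h(\fx)$. No regularity hypothesis is used for this direction.

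For the reverse inclusion I would dualize, reducing the problem to the \emph{conjugate sum formula}
\[
(h_1+h_2)^*(\fy) \;=\; \min_{\fy_1+\fy_2=\fy}\bc{h_1^*(\fy_1) + h_2^*(\fy_2)},
\]
where the minimum is \emph{attained}. Granting this formula, fix any $\fy \in \partial h(\fx)$. By Theorem~\ref{app:thm:convex_conjugate} this is equivalent to $h(\fx) + (h_1+h_2)^*(\fy) = \fy^\top\fx$. Picking $\fy_1, \fy_2$ that attain the minimum and splitting the right-hand side as $\fy^\top\fx = \fy_1^\top\fx + \fy_2^\top\fx$, the Fenchel--Young inequalities $h_i(\fx) + h_i^*(\fy_i) \geq \fy_i^\top\fx$ must then both hold with equality. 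Reading Theorem~\ref{app:thm:convex_conjugate} backwards yields $\fy_i \in \partial h_i(\fx)$ for each $i$, producing the desired decomposition of $\fy$.

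It remains to establish the attainment of the infimum in the conjugate sum formula, which is the substantive step. After translating and absorbing a linear term, I may assume $\fy = 0$, reducing the task to showing that if $\alpha \defeq -\inf_{\fz}(h_1+h_2)(\fz)$ is finite, then some $\fy^* \in \R^n$ achieves $h_1^*(-\fy^*) + h_2^*(\fy^*) = \alpha$. I would separate the two convex subsets of $\R^n \times \R$,
\[
C_1 \;=\; \bc{(\fz,t) : t > h_1(\fz)}, \qquad C_2 \;=\; \bc{(\fz,t) : t \leq \alpha - h_2(\fz)},
\]
which are disjoint by the definition of $\alpha$. A supporting-hyperplane argument produces a separating affine functional whose $\fz$-component, after normalizing the $t$-slope to one, is exactly the required $\fy^*$.

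The main obstacle is located here: a generic separation theorem could return a degenerate \emph{vertical} hyperplane (one with zero $t$-slope), which would fail to yield any usable $\fy^* \in \R^n$. The hypothesis $\relint(\dom h_1)\cap \relint(\dom h_2)\neq\emptyset$ is precisely what rules this out, because it forces the relative interiors of $C_1$ and $C_2$ to meet the ``vertical'' axis in a way that is incompatible with a purely vertical separator, so the separating hyperplane must have a nonzero $t$-component. For the polyhedral strengthening I would invoke the sharper polyhedral separation principle, which permits proper separation of a polyhedral convex set from an arbitrary disjoint convex set under a weaker relative-interior condition; this is exactly what allows the hypothesis to be relaxed to $\dom h_1 \cap \relint(\dom h_2) \neq \emptyset$ when $h_1$ is polyhedral.
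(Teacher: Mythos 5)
The paper offers no proof of this statement: it is imported verbatim from \citet{rockafellar1970convex} (Theorem 23.8 there), so your proposal can only be measured against the standard textbook argument --- which is essentially what you reconstruct. The unconditional inclusion $\partial h_1(\fx)+\partial h_2(\fx)\subseteq\partial h(\fx)$ by summing subgradient inequalities, the reduction of the reverse inclusion to the exact conjugate-sum (infimal convolution) formula with attainment via the Fenchel--Young equality cases, the separation of an epigraph from a hypograph to produce the attaining $\fy^*$, the role of the relative-interior hypothesis in excluding a vertical separating hyperplane, and the polyhedral separation theorem for the refinement: this is Rockafellar's own route, and you have correctly located where each hypothesis is used.

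There is, however, one concrete slip in the separation step. With $\alpha\defeq-\inf_{\fz}(h_1+h_2)(\fz)$, the sets $C_1=\bc{(\fz,t):t>h_1(\fz)}$ and $C_2=\bc{(\fz,t):t\leq\alpha-h_2(\fz)}$ are \emph{not} disjoint in general: a point of $C_1\cap C_2$ exists precisely when $h_1(\fz)+h_2(\fz)<\alpha$ for some $\fz$, i.e.\ when $\inf(h_1+h_2)<-\inf(h_1+h_2)$, which happens whenever the infimum is negative. The hypograph must be taken at height $\inf(h_1+h_2)=-\alpha$, i.e.\ $C_2=\bc{(\fz,t):t\leq-\alpha-h_2(\fz)}$; then $C_1\cap C_2\neq\emptyset$ would force $h_1(\fz)+h_2(\fz)<\inf(h_1+h_2)$, a contradiction. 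With this correction the normalized separator $(\fy^*,1)$ yields $h_1^*(-\fy^*)\leq-c$ and $h_2^*(\fy^*)\leq c+\alpha$, hence $h_1^*(-\fy^*)+h_2^*(\fy^*)\leq\alpha$, which matches the trivial lower bound $h_1^*(-\fy^*)+h_2^*(\fy^*)\geq\sup_{\fz}\bs{-h_1(\fz)-h_2(\fz)}=\alpha$ and gives attainment. Beyond that sign error, the only soft spot is that the non-verticality step is asserted rather than argued: one should invoke \emph{proper} separation (available because the relative interiors of the two sets are disjoint) and check that a vertical separator would have to contain both $\dom h_1$ and $\dom h_2$ in its boundary hyperplane once they share a relative-interior point, contradicting properness. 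Both issues are local and fixable; the overall architecture is sound.
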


\subsection{Proof of Proposition~\ref{prop:consistency}}\label{app:subsec:prop:consistency}
\textbf{Proposition~\ref{prop:consistency}}
\textit{
If Assumption~\ref{ass:realizability} holds, then the rewards optimizing
\begin{equation}\label{app:eq:cirl_occ_exact}\tag{IRL}
    \min_{\fr\in\mathcal{R}}\max_{\fmu\in\mathcal{F}}\;\;\fr^\top\br{\fmu - \fmuE} - f(\fmu),
\end{equation}
are exactly those rewards in $\mathcal{R}$ for which the expert occupancy measure is optimal in problem \eqref{eq:cmdp_occ}.
}
\begin{proof}
We can rewrite problem \eqref{app:eq:cirl_occ_exact} equivalently as $\min_{\fr\in\mathcal{R}}\max_{\fmu\in \mathcal{F}}L(\fmu,\fr)$, where $L(\fmu,\fr)\defeq J(\fmu,\fr) - J(\fmuE,\fr)$ and $J(\fmu,\fr) \defeq \fr^\top \fmu- f(\fmu)$. For a fixed $\fr$ it clearly holds $\argmax_{\fmu\in \mathcal{F}}L(\fmu,\fr) =  \RL(\fr)$. Also, we always get the lower bound $\max_{\fmu\in \mathcal{F}}L(\fmu,\fr)\geq 0$. This lower bound is achieved if and only if $\fmuE\in\argmax_{\fmu\in \mathcal{F}}L(\fmu,\fr) =  \RL(\fr)$. By Assumption~\ref{ass:realizability}, there is indeed $\frE\in\mathcal{R}$ such that $\fmuE\in\RL(\frE)$, and thus $\max_{\fmu\in \mathcal{F}}L(\fmu, \frE)= 0$. Therefore, any optimal $r^*\in\mathcal{R}$ must achieve $\max_{\fmu\in \mathcal{F}}L(\fmu, \fr^*)= 0$, which implies $\fmuE\in\RL(\fr^*)$. Moreover, for any $\fr^*\in\mathcal{R}$ with $\fmuE\in\RL(\fr^*)$ it needs to hold $\max_{\fmu\in \mathcal{F}}L(\fmu, \fr^*)= 0$, which proves optimality of $\fr^*$.
\end{proof}
\subsection{Proof of Theorem~\ref{thm:identifiability}}\label{app:subsec:thm:identifiability}
\textbf{Theorem~\ref{thm:identifiability}}
\textit{
    Let Assumption~\ref{ass:slater} hold and consider $\fmu\in\mathcal{F}$. Let $\mathcal{I}(\fmu)$ and $\mathcal{J}(\fmu)$ denote the set of indices of active inequality constraints under $\fmu$ i.e. $\fPsi_{i}^\top\fmu=\fb_i$ and $\fmu(s,a) = 0$ if and only if $i\in\mathcal{I}(\fmu)$ and $(s,a)\in\mathcal{J}(\fmu)$. Then,
    \begin{equation}\label{app:eq:identifiability_general}
        \fmu\in \RL(\fr) \iff \fr\in\partial f(\fmu) + N_{\mathcal{F}}(\fmu),
    \end{equation}
    where $N_{\mathcal{F}}(\fmu) = \mathcal{U} + \mathcal{C}(\fmu) + \mathcal{E}(\fmu)$ with 
    \begin{align}
        \mathcal{C}(\fmu)&\defeq \cone\br{\bc{\fPsi_i}_{i\in \mathcal{I}(\fmu)}},\nonumber\\
        \mathcal{E}(\fmu)&\defeq \cone\br{\bc{-\fe_{s,a}}_{(s,a)\in\mathcal{J}(\fmu)}}.\nonumber
    \end{align}
    Here, $\fe_{s,a}\in\R^{nm}$ denote the standard unit vectors with $\fe_{s,a}(s',a')=1$ if $(s,a)=(s',a')$ and $\fe_{s,a}(s',a')=0$ otherwise.
}
\begin{proof}
    The main idea of the proof is to use Theorem~\ref{app:thm:convex_conjugate} and \ref{app:thm:moreau-rockafeller} to prove that $\fmu\in\mathcal{F}$ is optimal for some $\fr$ if and only if $\fr\in\partial f(\fmu) + N_{\mathcal{F}}(\fmu)$. In order to apply Theorem~\ref{app:thm:convex_conjugate} and \ref{app:thm:moreau-rockafeller} to the constrained MDP problem, we recall that $f:\mathcal{X}\to\R$ is by definition a continuous convex function with $\mathcal{X}\subseteq\R^{nm}$ closed convex. We define the extended real value functions
\begin{equation}
    \bar{f}:\R^{nm}\to [-\infty, \infty], \; \fmu\mapsto \bar{f}(\fmu) \defeq \begin{cases}
        f(\fmu), &\fmu\in\mathcal{X},\\
        \infty, &\fmu\notin\mathcal{X},
    \end{cases}
\end{equation}
and 
\begin{equation}
    g_{\mathcal{F}}:\R^{nm}\to [-\infty, \infty], \; \fmu\mapsto g_{\mathcal{F}}(\fmu)\defeq\bar{f}(\fmu) + \delta_{\mathcal{F}}(\fmu),
\end{equation}
where $\delta_{\mathcal{F}}$ is the characteristic function
\begin{equation}
    \delta_{\mathcal{F}} (\fmu) \defeq \begin{cases}
        0 \quad &, \fmu\in\mathcal{F}\\
        \infty \quad &, \fmu\notin\mathcal{F}.
    \end{cases}
\end{equation}
Note that since $f$ is continuous and $\mathcal{F}$ closed, $g_{\mathcal{F}}$ is a closed proper convex function. Now, we can rewrite the CMDP problem \eqref{eq:cmdp_occ} as 
\begin{equation}
    \max_{\fmu\in\mathcal{F}}  \fr^\top \fmu - \Bar{f}(\fmu) =\max_{\fmu\in\R^{nm}} \fr^\top \fmu - g_{\mathcal{F}}(\fmu)= g_{\mathcal{F}}^*(\fr),
\end{equation}
which is exactly taking the form of the convex conjugate of $g_{\mathcal{F}}$.\footnote{Since the maximum is achieved here, we can replace the supremum with the maximum.} Therefore, Theorem~\ref{app:thm:convex_conjugate} yields
\begin{equation}\label{app:eq:opt_conditions_convex_conjugate}
    \fmu \in \RL(\fr) = \argmax_{\fmu\in\mathcal{F}}\fr^\top \fmu - g_{\mathcal{F}}(\fmu) \iff \fr \in \partial g_{\mathcal{F}}(\fmu).
\end{equation}

Since Slater's condition is satisfied we have $\relint(\dom \delta_{\mathcal{F}})\cap\relint(\dom \Bar{f}) = \relint\mathcal{F}\cap \relint\mathcal{X} = \relint\mathcal{F}\neq \emptyset$. Hence, the conditions of Theorem~\ref{app:thm:moreau-rockafeller} are satisfied and we get
\begin{equation}
    \partial g_{\mathcal{F}} (\fmu) = \partial \Bar{f}(\fmu) + \partial \delta_{\mathcal{F}} (\fmu).
\end{equation}
Using that $\partial \Bar{f} = \partial f$ and $\partial \delta_{\mathcal{F}}(\fmu) = N_{\mathcal{F}}(\fmu)$ \citep{rockafellar1970convex} we arrive at
\begin{equation}
    \fr\in\IRLO(\fmu) \iff  \fr\in \partial f(\fmu) + N_{\mathcal{F}}(\fmu).
\end{equation}
To finish the proof we note that for the polyhedron $\mathcal{F}$ the normal cone takes the form 
\begin{equation}
     N_{\mathcal{F}}(\fmu) = \spn\br{\fE-\gamma\fP} + \cone\br{\bc{\fPsi_i}_{i\in \mathcal{I}(\fmu)}} + \cone\br{\bc{-\fe_{s,a}}_{(s,a)\in\mathcal{J}(\fmu)}},
\end{equation}
where $\mathcal{I}(\fmu)$ and $\mathcal{J}(\fmu)$ are the sets of active safety and non-negativity constraints, respectively \citep{rockafellar2009variational}.
\end{proof}
\begin{remark}
    Note that as a consequence of \eqref{app:eq:opt_conditions_convex_conjugate} it holds $\IRLO(\fmu) = \partial g_{\mathcal{F}}(\fmu)$ and since $g_{\mathcal{F}}$ is closed proper convex Theorem~\ref{app:thm:convex_conjugate} also implies $\RL(\fr) = \partial g_{\mathcal{F}}^*(\fr)$.
\end{remark}

\subsection{Identifiability for State-Action-State Rewards}\label{app:subsec:state-action-state-rewards}
Throughout this paper our focus lies on state-action rewards $\fr:\mathcal{S}\times\mathcal{A}\to \R, (s,a)\mapsto \fr(s,a)$ that are naturally arising in the convex analytic approach to CMDPs (see \eqref{eq:cmdp_occ} and \citep{altman1999constrained}), and as the dual variables to the occupancy measure matching problem (see \eqref{eq:cirl_occ_emp} and \citep{ho2016generative}). However, some authors \citep{ng1999policy,sutton2018reinforcement, skalse2022invariance} also consider state-action-state rewards $\bar{\fr}:\mathcal{S}\times\mathcal{A}\times \mathcal{S}\to \R, (s,a,s')\mapsto \bar{\fr}(s,a,s')$ that are allowed to depend on the consecutive state $s'\sim \fP(\cdot|s,a)$. As mentioned in Section~\ref{sec:constrained_IRL}, this adds no generality to the forward CMDP problem, since a CMDP problem with state-action-state reward $\bar{\fr}$ is equivalent to a CMDP problem with the state-action reward $\fr(s,a)\defeq \E_{s'\sim \fP(\cdot|s,a)} \bar{\fr}(s,a,s')$. Nevertheless, in practice the transition law is typically unknown and it may in certain cases be easier to specify a state-action-state reward. To relate our identifiability results to this setting, we make use of the following vector notation for a state-action-state reward $\bar{\fr}(s,a,s')$
\begin{equation}
    \bar{\fr} = \myvec{\bar{\fr}_{s'_1}^\top,\hdots, \bar{\fr}_{s'_n}^\top}^\top\in\R^{n^2m}, \; \text{ with } \bar{\fr}_{s'} = \bar{\fr}(\cdot, \cdot, s')\in \R^{nm},
\end{equation}
and define the linear mapping $\mathbf{A}:\R^{n^2m} \to \R^{nm}$ via $(\bm A \bar{\fr})(s,a) \defeq \E_{s'\sim \fP(\cdot|s,a)} \bar{\fr}(s,a,s')$. Furthermore, we denote $\overline{\mathsf{RL}}_{\mathcal{F}}(\Bar{\fr}) \defeq \RL(\bm A \Bar{\fr})$ for the CMDP solution map for state-action-state rewards. The following corollary shows that identifiability of state-action-state rewards can be reduced to identifiability of state-action rewards -- and hence to the result of Theorem~\ref{thm:identifiability}.
\begin{corollary}\label{app:cor:sas_rewards}
    Let Assumption~\ref{ass:slater} hold and consider $\fmu\in\mathcal{F}$. Then,
    \begin{equation}
        \fmu\in \overline{\mathsf{RL}}_{\mathcal{F}}(\bar{\fr}) \iff \bar{\fr}\in \bc{\myvec{\fr\\\vdots\\ \fr} \bigg\vert \fr\in \partial f(\fmu) + \mathcal{U} + \mathcal{C}(\fmu) + \mathcal{E}(\fmu)} + \mathcal{V},
    \end{equation}
    where $\mathcal{V} = \ker\bm A$ with $\dim \mathcal{V} =n(nm-1)$.
\end{corollary}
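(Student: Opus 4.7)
The plan is to reduce the statement to Theorem~\ref{thm:identifiability} via the definition $\overline{\mathsf{RL}}_{\mathcal{F}}(\bar{\fr}) \defeq \RL(\bm A \bar{\fr})$. Applying Theorem~\ref{thm:identifiability} directly, the condition $\fmu \in \overline{\mathsf{RL}}_{\mathcal{F}}(\bar{\fr})$ is equivalent to $\bm A \bar{\fr} \in S$, where
\begin{equation}
S \defeq \partial f(\fmu) + \mathcal{U} + \mathcal{C}(\fmu) + \mathcal{E}(\fmu).
\end{equation}
Consequently, the set of state-action-state rewards rendering $\fmu$ optimal is exactly the preimage $\bm A^{-1}(S)$, so it remains to show that $\bm A^{-1}(S)$ coincides with the block-repeated lift of $S$ modulo $\ker \bm A$.

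First, I would verify that $\bm A$ is surjective by exhibiting a natural right-inverse supported on the \emph{constant-in-$s'$} subspace
\begin{equation}
W \defeq \bc{\myvec{\fr^\top,\hdots,\fr^\top}^\top : \fr \in \R^{nm}}.
\end{equation}
For any $\fr \in \R^{nm}$, the block-repeated vector $\bar{\fr}(s,a,s') \defeq \fr(s,a)$ satisfies $(\bm A \bar{\fr})(s,a) = \sum_{s'} \fP(s'|s,a)\fr(s,a) = \fr(s,a)$, since $\fP(\cdot|s,a)$ sums to one. Thus $\bm A|_W : W \to \R^{nm}$ is a linear isomorphism, and a simple dimension count then gives $\R^{n^2m} = W \oplus \ker \bm A$.

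Next, I would leverage this direct-sum decomposition to characterize the preimage. Any $\bar{\fr} \in \R^{n^2m}$ decomposes uniquely as $\bar{\fr} = \bar{\fr}_W + \bar{\fr}_K$ with $\bar{\fr}_W \in W$ and $\bar{\fr}_K \in \ker \bm A$, and $\bm A \bar{\fr} = \bm A \bar{\fr}_W$. Hence $\bm A \bar{\fr} \in S$ if and only if $\bar{\fr}_W$ is the block-repetition of some $\fr \in S$, giving the claimed characterization. The dimension of $\mathcal{V} = \ker \bm A$ then follows from the rank-nullity theorem applied to the surjection $\bm A : \R^{n^2m} \to \R^{nm}$.

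There is no serious obstacle here, as the statement is essentially a pull-back of Theorem~\ref{thm:identifiability} through the linear map $\bm A$. The only point requiring a little care is establishing the direct-sum decomposition $\R^{n^2m} = W \oplus \ker \bm A$, which is immediate once surjectivity of $\bm A|_W$ is noted.
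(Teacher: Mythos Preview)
Your proposal is correct and follows essentially the same route as the paper: both reduce to Theorem~\ref{thm:identifiability} via $\overline{\mathsf{RL}}_{\mathcal{F}}(\bar{\fr}) = \RL(\bm A\bar{\fr})$, observe that the block-repeated lift $[\fr,\hdots,\fr]^\top$ satisfies $\bm A[\fr,\hdots,\fr]^\top=\fr$ (yielding surjectivity of $\bm A$), and then characterize $\bm A^{-1}(S)$ as the block-repeated lift of $S$ plus $\ker\bm A$. Your explicit framing via the direct-sum decomposition $\R^{n^2m}=W\oplus\ker\bm A$ is a slightly more structured packaging of the same argument the paper gives element-wise.
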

\begin{proof}
    By Theorem~\ref{thm:identifiability}, we have $\fmu\in \overline{\mathsf{RL}}_{\mathcal{F}}(\bar{\fr})=\RL(\bm A \bar{\fr})$ if and only if $\bm A \bar{\fr}\in\partial f(\fmu) + \mathcal{U} + \mathcal{C}(\fmu) + \mathcal{E}(\fmu)$. It therefore suffices to show that $\bm A \bar{\fr} = \fr$ if and only if $\bar{\fr} = \bar{\fr}' + \bar{\fr}''$ with $\bar{\fr}'=[\fr, \hdots, \fr]^\top$ and $\bar{\fr}''\in \ker \bm A$. If $\bar{\fr} = \bar{\fr}' + \bar{\fr}''$ with $\bar{\fr}'=[\fr, \hdots, \fr]^\top$ and $\bar{\fr}''\in \ker \bm A$, then it follows from $\bm A \bar{\fr}' = \fr$ that $\bm A \bar{\fr} = \fr + \bm 0$. Conversely, if $\bm A \bar{\fr} = \fr$, then $\bar{\fr}'=[\fr, \hdots, \fr]^\top$ also satisfies $\bm A \bar{\fr}' = \fr$, and thus $\bar{\fr}-\bar{\fr}'\in\ker \bm A$.

    Finally, since for $\bar{\fr} = [\fr, \hdots, \fr]^\top$ we have $\bm A \bar{\fr} = \fr$, the mapping $\bm A$ is surjective and thus $\dim \mathcal{V} =n(nm-1)$.
\end{proof}
From a more abstract perspective, Corollary~\ref{app:cor:sas_rewards} makes use of the fact that the image $\textbf{im} \bm A = \R^{nm}$ of $\bm A$ is isomorphic to the quotient space $\R^{n^2m}/ \ker \bm A$ \citep{halmos2017finite}, where $\R^{n^2m}/ \ker \bm A$ is the set of equivalence classes $[\bar{\fr}'] \defeq \bc{\bar{\fr}\in\R^{n^2m}: \bar{\fr} = \bar{\fr}' + \bar{\fr}'',  \bar{\fr}''\in \ker \bm A}$.

For unconstrained MDPs \citet{skalse2022invariance} discuss invariances of optimal policies to reward transformation for state-action-state rewards. They introduce the additional invariances along the linear subspace $\mathcal{V}=\ker \mathbf{A}$ as \emph{$s'$-redistribution}. Moreover, they show that identifying a state-action-state reward up to $s'$-redistribution is not sufficient for generalizability to new environments. In contrast, our result in Theorem~\ref{thm:intersection} shows that for state-action rewards identifying the rewards up to potential shaping is not enough for generalizability and that we instead need to recover the expert's reward up to a constant. However, this result is not extending immediately to the state-action-state setting, and one would need to modify the proof of Theorem~\ref{thm:intersection} to additionally account for the space $\mathcal{V}$ in order to make a statement about generalizability.

\subsection{Proof of Corollary~\ref{cor:essential_smoothness_consequence}}\label{app:subsec:cor:essential_smoothness_consequence}
\textbf{Assumption~\ref{ass:essential_smoothness}}
    Let $f:\mathcal{X}\to \R$ be such that:\vspace{-0.3cm}
\begin{enumerate}[(a)]
    \item $f$ is differentiable throughout $\interior\mathcal{X}$,
    \item $\lim_{k\to \infty} \norm{\nabla f(\fmu_k)}= \infty$ if $\br{\fmu_k}_{k\in\mathbb{N}}$ is a sequence in $\interior\mathcal{X}$ converging to a point $\fmu\in \relbd \mathcal{M}$.
\end{enumerate}
\textbf{Corollary~\ref{cor:essential_smoothness_consequence}}
\textit{
    Let Assumptions~\ref{ass:slater}, \ref{ass:realizability}, \ref{ass:essential_smoothness} hold. Then, we have $\RL(\fr) \subset \relint\mathcal{M}$ for any $\fr\in\R^{nm}$ and
    \begin{equation}
        \IRLO(\fmuE) = \nabla f(\fmuE) + \mathcal{U} + \mathcal{C}(\fmuE).
    \end{equation}
}
\begin{proof}
    Under Assumption~\ref{ass:slater}, Theorem~\ref{thm:identifiability} states that
    \begin{equation}\label{app:eq:id_essential_smoothness}
        \fmu \in \RL(\fr) \iff \fr \in \partial f(\fmu) + N_{\mathcal{F}}(\fmu).
    \end{equation}
    However, Assumption~\ref{ass:essential_smoothness} ensures that $f(\fmu)=\bc{\nabla f(\fmu)}$ for $\fmu\in\relint\mathcal{M}$ and $\partial f(\fmu) = \emptyset$ for $\fmu\in\relbd\mathcal{M}$ (see \citep[Theorem 25.6.]{rockafellar1970convex}). Hence,
    \begin{equation}
        \fmu \in \RL(\fr)\subset \relbd\mathcal{M} \iff \fr \in \emptyset + N_{\mathcal{F}}(\fmu) = \emptyset.
    \end{equation}
    Furthermore, under Assumption~\ref{ass:realizability}, it follows from differentiability in $\relint{\mathcal{M}}$ and Corollary~\ref{cor:identifiability_irl} that
    \begin{equation}
        \IRLO(\fmuE) = \nabla f(\fmuE) + \mathcal{U} + \mathcal{C}(\fmuE).
    \end{equation}
\end{proof}

\subsection{Proof of Theorem~\ref{thm:intersection}}\label{app:subsec:thm:intersection}
\textbf{Theorem~\ref{thm:intersection}}
\textit{
        Let Assumption~\ref{ass:slater}, \ref{ass:strict_convexity}, \ref{ass:realizability}, \ref{ass:essential_smoothness} be satisfied for $(\fP_0,\fb_0)$ and let $\fmuE\in\RL^{\fP_0,\fb_0}(\frE)$ for some $\frE\in\mathcal{R}$. Consider an arbitrary neighborhood $\mathcal{O}_{\fP_0}\subseteq \R^{nm\times n}$ of $\fP_0$. Then, IRL generalizes to $\mathcal{P} = \mathcal{O}_{\fP_0}\cap\mathfrak{P}$ and $\mathcal{B}=\R^k$ if and only if
    \begin{equation}
        \IRL^{\fP_0, \fb_0}(\fmuE) \subseteq \frE + \spn(\ones_{nm}).
    \end{equation}
}

\begin{proof}
    The \emph{if} direction is trivial, since addition of a constant is not changing the set of optimal occupancy measures. Hence, if $\IRL^{\fP_0, \fb_0}(\fmuE) \subseteq \frE + \spn(\ones_{nm})$, then IRL generalizes to any arbitrary set of transition laws and constraint thresholds.

    To prove the \emph{only if} direction, we proceed in the following steps:
    \begin{enumerate}
    \item Show that Slater's condition is still satisfied in a sufficiently small neighborhood of $\fP_0$.
    \item Apply Theorem~\ref{thm:identifiability} to rewrite generalizability as a condition on the rewards.
    \item Construct $\bar{\fP}_1, \bar{\fP}_2\in\mathfrak{P}$ such that only $\frE+ \spn\br{\mathbf{1}_{nm}}$ generalize to $\bar{\fP}_1, \bar{\fP}_2$.
    \item Use $\bar{\fP}_1, \bar{\fP}_2$ to construct $\fP_1, \fP_2\in\mathcal{P}$ such that only $\frE+ \spn\br{\mathbf{1}_{nm}}$ generalize to $\fP_1, \fP_2$.
\end{enumerate}

    \textbf{Step 1:}\\
    By Assumption~\ref{ass:slater} (Slater's condition) there is some occupancy measure $\bar{\fmu}\in\relint \mathcal{F}^{\fP_0, \fb_0}$ where
    \begin{equation}
        \mathcal{F}^{\fP_0, \fb_0} \defeq \bc{\fmu\in\R^{nm}: \fmu\geq \bm 0, (\fE-\gamma\fP_0)^\top\fmu = (1-\gamma)\fnu_0, \fPsi^\top \fmu \leq \fb_0}.
    \end{equation}
    Note that $\mathcal{F}^{\fP_0, \fb_0}\subseteq\mathcal{F}^{\fP_0, \fb}$ for $\fb\geq \fb_0$. Thus, Slater's condition remains to hold when the constraint threshold is relaxed. Furthermore, for $\fP\in\mathfrak{P}$ consider the orthogonal projection
    \begin{equation}
        \operatorname{Proj}^{\fP}(\bar{\fmu}) \defeq \argmin_{\fmu: (\fE  - \gamma \fP)^\top\fmu = (1-\gamma)\fnu_0} \norm{\fmu - \bar{\fmu}}_2,
    \end{equation}
    of $\bar{\fmu}$ onto the affine hull of $\mathcal{F}^{\fP, \fb_0}$. Since $(\fE  - \gamma \fP)$ has full rank, the projection $\operatorname{Proj}^{\fP}(\bar{\fmu})$ is continuous in $\fP$ \citep{penrose1955generalized, ding1993perturbation}. Therefore, if $\fP$ is sufficiently close to $\fP_0$, we have $\operatorname{Proj}^{\fP}(\bar{\fmu})>\bm 0$ and $\fPsi^\top \operatorname{Proj}^{\fP}(\bar{\fmu})< \fb_0$ i.e. $\operatorname{Proj}^{\fP}(\bar{\fmu})\in \relint\mathcal{F}^{\fP, \fb_0}$. Hence, there exists some neighborhood $\mathcal{O}'_{\fP_0}\subseteq\mathcal{O}_{\fP_0}$ of $\fP_0$ such that $\relint\mathcal{F}^{\fP, \fb_0}\neq \emptyset$ for all $\fP\in\mathcal{O}'_{\fP_0}\cap\mathfrak{P}$.

    \textbf{Step 2:}\\
    Since Assumption~\ref{ass:strict_convexity} (strict convexity) holds, all sets $\RL^{\fP, \fb}(\fr)$ are singleton for any $\fr, \fP, \fb$. Throughout this proof we therefore interpret $\RL^{\fP, \fb}$ as a single-valued mapping and write $\fmu = \RL^{\fP, \fb}(\fr)$ instead of $\bc{\fmu}=\RL^{\fP, \fb}(\fr)$. 

    Now, let $\mathcal{P}'\defeq \mathcal{O}'_{\fP_0}\cap\mathfrak{P}\subseteq\mathcal{P}$ and $\mathcal{B}'\defeq\bc{\fb\in\mathcal{B}: \fb \geq \fb_0}\subseteq\mathcal{B}$. Due to Assumption~\ref{ass:realizability} (realizability) and Proposition~\ref{prop:consistency} we have $\frE\in \IRL^{\fP_0, \fb_0}(\fmuE)$. Hence, the following chain of implications holds:
    \begin{align}\label{app:eq:gen_reward_intersection}
        &\RL^{\fP, \fb}(\fr) = \RL^{\fP, \fb}(\fr'), \;\forall \fr,\fr'\in \IRL^{\fP_0, \fb_0}(\fmuE), \forall\fP\in\mathcal{P}, \forall\fb\in\mathcal{B}\\
        \stackrel{(i)}{\iff} &\RL^{\fP, \fb}(\fr) = \RL^{\fP, \fb}(\frE), \;\forall \fr\in \IRL^{\fP_0, \fb_0}(\fmuE), \forall\fP\in\mathcal{P}, \forall\fb\in\mathcal{B}\nonumber\\
        \stackrel{(ii)}{\implies} &\RL^{\fP, \fb}(\fr) = \RL^{\fP, \fb}(\frE), \;\forall \fr\in \IRL^{\fP_0, \fb_0}(\fmuE), \forall\fP\in\mathcal{P}', \forall\fb\in\mathcal{B}'\nonumber\\
        \stackrel{(iii)}{\iff} &\IRL^{\fP_0, \fb_0}(\fmuE)\subseteq \bs{\partial f(\fmu) + \mathcal{U}^{\fP} + \mathcal{C}^{\fb}(\fmu) + \mathcal{E}(\fmu)}_{\fmu = \RL^{\fP, \fb}(\frE)}, \;\forall\fP\in\mathcal{P}', \forall\fb\in\mathcal{B}'\nonumber\\
        \stackrel{(iv)}{\iff} &\IRL^{\fP_0, \fb_0}(\fmuE)\subseteq\bigcap_{\fP\in\mathcal{P}'} \bigcap_{\fb\in \mathcal{B}'}  \bs{\partial f(\fmu) + \mathcal{U}^{\fP} + \mathcal{C}^{\fb}(\fmu) + \mathcal{E}(\fmu)}_{\fmu = \RL^{\fP, \fb}(\frE)}\nonumber\\
        \stackrel{(v)}{\iff} &\IRL^{\fP_0, \fb_0}(\fmuE)\subseteq\bigcap_{\fP\in\mathcal{P}'} \bigcap_{\fb\in \mathcal{B}'}  \bs{\nabla f(\fmu) + \mathcal{U}^{\fP} + \mathcal{C}^{\fb}(\fmu)}_{\fmu = \RL^{\fP, \fb}(\frE)}.\nonumber
    \end{align}
    Here, $(i)$ holds since $\frE\in \IRL^{\fP_0, \fb_0}(\fmuE)$, $(ii)$ follows from $\mathcal{P}'\subseteq\mathcal{P}$ and $\mathcal{B}'\subseteq\mathcal{B}$, and $(iii)$ is a consequence of Theorem~\ref{thm:identifiability} which applies since Assumption~\ref{ass:slater} (Slater's condition) is satisfied for all $\fP\in\mathcal{P}', \fb\in\mathcal{B}'$. Moreover, $(iv)$ follows from the definition of the intersection, and $(v)$ from differentiability and Assumption~\ref{ass:essential_smoothness} which ensures $\mathcal{E}(\fmu) = \bm 0$. 
    
    Next, we recall that $\mathcal{B} = \R^k$. Thus, we may choose a large enough $\bar{\fb}\in\mathcal{B}'$ such that the set of active safety constraints is empty for any $\fmu$ and hence $\mathcal{C}^{\bar{\fb}}(\fmu) = \bm 0$. This allows us to further simplify \eqref{app:eq:gen_reward_intersection} to:
    \begin{align}
        &\IRL^{\fP_0, \fb_0}(\fmuE)\subseteq\bigcap_{\fP\in\mathcal{P}'} \bigcap_{\fb\in \mathcal{B}'}  \bs{\nabla f(\fmu) + \mathcal{U}^{\fP} + \mathcal{C}^{\fb}(\fmu)}_{\fmu = \RL^{\fP, \fb}(\frE)}\\
        \iff &\IRL^{\fP_0, \fb_0}(\fmuE)\subseteq\bigcap_{\fP\in\mathcal{P}'}  \bs{\nabla f(\fmu) + \mathcal{U}^{\fP}}_{\fmu = \RL^{\fP, \bar{\fb}}(\frE)}\nonumber\\
        \iff &\IRL^{\fP_0, \fb_0}(\fmuE)\subseteq\bigcap_{\fP\in\mathcal{P}'}  \bs{\frE + \mathcal{U}^{\fP}} = \frE + \bigcap_{\fP\in\mathcal{P}'}  \mathcal{U}^{\fP}.\nonumber
    \end{align}
    Therefore, it suffices to show that $\bigcap_{\fP\in\mathcal{P}'}  \mathcal{U}^{\fP}\subseteq \spn\ones_{nm}$. In particular, it is enough to show that $\mathcal{U}^{\fP_1}\cap  \mathcal{U}^{\fP_2} = \spn\br{\mathbf{1}_{nm}}$ for two $\fP_1, \fP_2 \in\mathcal{P}'$. To that end, we will continue by first showing that there are $\bar{\fP}_1, \bar{\fP}_2\in\mathfrak{P}$ such that $\mathcal{U}^{\bar{\fP}_1}\cap  \mathcal{U}^{\bar{\fP}_2} = \spn\br{\mathbf{1}_{nm}}$.

    \textbf{Step 3:} Note that, as shown by \citet{rolland2022identifiability}, the condition $\mathcal{U}^{\bar{\fP}_1}\cap  \mathcal{U}^{\bar{\fP}_2} = \spn\br{\mathbf{1}_{nm}}$ is equivalent to 
\begin{equation}\label{app:eq:rank_condition1}
    \rank \myvec{\fE - \gamma \bar{\fP}_1, \fE - \gamma \bar{\fP}_2} = 2n - 1.
\end{equation}
This follows from the two facts that $(a)$ for any $\fP\in\mathfrak{P}$ we have $\bm{1}_{nm}\in\mathcal{U}^{\fP}$ and $(b)$ the condition \eqref{app:eq:rank_condition1} is equivalent to $\dim(\mathcal{U}^{\bar{\fP}_1}\cap  \mathcal{U}^{\bar{\fP}_2}) = 1$. Here, $(a)$ holds since
\begin{equation}
    (\fE - \gamma\fP)\bm{1}_{n} = \myvec{(\id_n - \gamma \fP_{a_1})\bm{1}_{n} \\ \vdots \\ (\id_n - \gamma \fP_{a_m})\bm{1}_{n}} = \myvec{(1-\gamma)\bm{1}_{n} \\ \vdots \\ (1-\gamma)\bm{1}_{n}} = (1-\gamma) \bm{1}_{nm},
\end{equation}
where we use that $\bm{1}_{n}$ is for both $\id_n$ and $\fP_{a_i},i=1,...,m$ an eigenvector to the eigenvalue $1$. Furthermore, $(b)$ is a consequence of
\begin{equation}
    \text{dim}(\spn \bm{A}_1 \cap \spn \bm{A}_2) =  (\rank \bm{A}_1  + \rank \bm{A}_2) - \rank \myvec{\bm{A}_1, \bm{A}_2},
\end{equation}
for the two matrices $\bm{A}_i = \fE - \gamma \bar{\fP}_i, i=1,2$. Therefore, the goal for this step is to prove the following claim:

\textbf{Claim} There exist $\bar{\fP}_1, \bar{\fP}_2\in\mathfrak{P}$ such that $\rank \myvec{\fE - \gamma \bar{\fP}_1, \fE - \gamma \bar{\fP}_2} = 2n - 1$.

Note that since the vector $\myvec{\bm{1}_{n}^\top, -\bm{1}_{n}^\top}^\top$ lies in the kernel of $\myvec{\fE - \gamma \bar{\fP}_1, \fE - \gamma \bar{\fP}_2}$, the rank cannot be larger than $2n-1$. Moreover, since the rank of a matrix is larger or equal than the rank of any submatrix, it suffices to prove the claim for $m=2$. To this end, let $\bar{\fP}_1, \bar{\fP}_2\in\mathfrak{P}$ be defined as follows
\begin{equation}
    \bar{\fP}_1 = \myvec{\id_n \\ \fD} ,\; \Bar{\fP}_2 = \myvec{\fD\\ \id_n},
\end{equation}
where
\begin{equation}
    \fD \defeq \myvec{0 & 1 & 0 &  \hdots & 0\\
    0 & 0 & 1 &  \hdots & 0\\
    \vdots &  \vdots & \vdots  &\ddots & \vdots\\
    0 & 0 & 0 & \hdots & 1 \\
    0 & 0 & 0 & \hdots & 1}\in\R^{n\times n}.
\end{equation}
It then holds $\rank (\id_n - \fD) = n-1$ as is readily seen since $\id_n - \fD$ is an upper triangular matrix in row echelon form. Now, in order to prove that the rank of
\begin{equation}
	C \defeq \myvec{\fE - \gamma \bar{\fP}_1, \fE - \gamma \bar{\fP}_2}\in\R^{2n\times 2n},
\end{equation}
equals $2n-1$, we show that the first $2n-1$ columns of $C$ are linearly independent. To this end, let $C^{(-2n)}\in\R^{2n\times (2n-1)}$ denote the submatrix obtained by removing the last column of $C$. Moreover, let $\fz:=\myvec{\fx^\top& \fy^\top}^\top$ with $\fx\in\R^n, \fy\in\R^{n-1}$. The columns of $C^{(-2n)}$ are linearly independent if
\begin{equation}\label{app:eq:lin_indep}
    C^{(-2n)} \fz = \bm 0  \implies \fz = \bm 0.
\end{equation}
Plugging in the definition of $\bar{\fP}_1, \bar{\fP}_2$ it holds
\begin{align}
	C^{(-2n)} \fz = \myvec{(1-\gamma)\id_n & \id_n - \gamma \fD^{(-n)}\\ \id_n - \gamma \fD & (1-\gamma) \id_n^{(-n)}} \myvec{\fx\\\fy} = \myvec{(1-\gamma)\id_n & \id_n - \gamma \fD\\ \id_n - \gamma \fD & (1-\gamma) \id_n} \myvec{\fx\\\fy\\0},
\end{align}
where we again use the notation $\bm{B}^{(-n)}$ to denote the submatrix of some matrix $\bm B$ obtained when removing the $n$-th column. Therefore, we can rewrite \eqref{app:eq:lin_indep} as 
\begin{align}
	\fx + \myvec{\fy\\ 0} - \gamma\br{\fx + \fD \myvec{\fy\\ 0}} &= \bm 0\\
	\fx + \myvec{\fy\\ 0} - \gamma\br{\fD \fx + \myvec{\fy\\ 0}} &= \bm 0.\nonumber
\end{align}
Substituting $\tilde{\fx} \defeq \fx + \myvec{\fy\\ 0}$ we get
\begin{align}
	(1-\gamma)\tilde{\fx} &= \gamma(\fD - \id_n)\myvec{\fy\\ 0}\\
	(\id_n-\gamma\fD)\tilde{\fx} &= -\gamma(\fD - \id_n)\myvec{\fy\\ 0}.\nonumber
\end{align}
This implies 
\begin{equation}
	(1-\gamma)\tilde{\fx} + (\id_n-\gamma\fD)\tilde{\fx} = 2 (\id_n - \gamma \tilde{\fD})\tilde{\fx} = \bm 0,
\end{equation}
for $\tilde{\fD}\defeq (\id_n + \fD)/2$. Since $\tilde{\fD}$ is again a row stochastic matrix, $\id_n - \gamma \tilde{\fD}$ is invertible and thus $\tilde{\fx} = \bm 0$. Moreover, since $\rank (\id_n - \fD) = n-1$ and $(\id_n - \fD)\ones_n = \bm 0$, we have $\ker (\id_n - \fD) = \spn(\ones_n)$. In light of
\begin{equation}
    (\fD - \id_n)\myvec{\fy\\ 0} = \bm 0,
\end{equation}
this implies that $\fy = \bm 0$, which proves the claim.

\textbf{Step 4:}\\
Equipped with the above claim, the final step of the proof is to show that there are $\fP_1, \fP_2 \in \mathcal{P}'\subseteq \mathcal{P}$ such that 
\begin{equation}\label{app:eq:intersection2}.
    \mathcal{U}^{\fP_1}\cap \mathcal{U}^{\fP_2} = \spn(\ones_{nm}).
\end{equation}
Analogously to the previous step, we will show that
\begin{equation}\label{app:eq:rank_condition2}
    \rank \myvec{\fE - \gamma \fP_1, \fE - \gamma \fP_2} = 2n - 1.
\end{equation}
For this purpose, we choose two arbitrary (and possibly equal) $\fP_{1,0}, \fP_{2,0}\in\mathcal{P}'$ and define
\begin{equation}
    (\fP_1(\tau), \fP_2(\tau)) \defeq (1-\tau) (\fP_{1,0}, \fP_{2,0}) + \tau (\bar{\fP}_1, \bar{\fP}_2) \in \mathfrak{P}\times \mathfrak{P}.
\end{equation}
Since $\rank \myvec{\fE - \gamma \fP_1(\tau), \fE - \gamma \fP_2(\tau)} = 2n - 1$ for $\tau = 1$, there exists a $(2n-1)\times (2n-1)$ sub-matrix $\bm{C}_{\text{sub}}(\tau)$ of $\myvec{\fE - \gamma \fP_1(\tau), \fE - \gamma \fP_2(\tau)}$ that is invertible for $\tau = 1$. Thus, the function $h(\tau)\defeq \det \bm{C}_{\text{sub}}(\tau)$
is a non-zero polynomial, which by the fundamental theorem of algebra can only have finitely many roots. Since the set $\mathcal{O}_{\fP_0}'$ is a neighborhood of $\fP_0$, it contains an open ball (in any norm\footnote{Since all norms are equivalent in finite dimensional vector spaces the choice of norm is irrelevant.}) around $\fP_0$. Furthermore, $\mathfrak{P}$ is convex. Hence, we can always choose a small enough $\varepsilon>0$ such that $\fP_1(\varepsilon), \fP_2(\varepsilon)\in \mathcal{P}'=\mathcal{O}_{\fP_0}'\cap\mathfrak{P}$ and $h(\varepsilon)\neq 0$. However, for $h(\varepsilon)\neq 0$ the rank condition \eqref{app:eq:rank_condition2} is satisfied and we have proven the equality \eqref{app:eq:intersection2}.
\end{proof}

\begin{remark} 
Note that in fact we have proven a stronger result than the equality \eqref{app:eq:intersection2} -- namely that for large enough $\fb$ the set of $(\fP_1, \fP_2)$ for which 
\begin{equation}\label{app:eq:intersection}
	\IRLO^{\fP_1, \fb}\circ\RL^{\fP_1, \fb}(\frE) \cap \IRLO^{\fP_2, \fb}\circ\RL^{\fP_2, \fb}(\frE) = \frE + \spn(\ones_{nm}),
\end{equation}
is dense in $\mathfrak{P}\times\mathfrak{P}$. \citet{cao2021identifiability,rolland2022identifiability} analyze \eqref{app:eq:intersection2} in the context of identifiability from two experts. They show experimentally that the rank condition \eqref{app:eq:rank_condition2} is always satisfied when randomly generating two transition laws $(\fP_1, \fP_2)$. However, the formal proof that the set where \eqref{app:eq:rank_condition2} is satisfied is dense in $\mathfrak{P}\times\mathfrak{P}$ is novel.
\end{remark}


\subsection{Proof of Proposition~\ref{prop:linear_reward_identifiability}}\label{app:subsec:cor:linear_reward_identifiability}
\noindent\textbf{Proposition~\ref{prop:linear_reward_identifiability}}
\textit{
    Let Assumption~\ref{ass:slater}, \ref{ass:strict_convexity}, \ref{ass:realizability}, \ref{ass:essential_smoothness} hold with $\fmuE\in\RL(\frE)$ for some $\frE\in\mathcal{R}$ and
    \begin{equation}\label{app:eq:linear_rewardclass}
        \mathcal{R} \subseteq \bc{\fr_{\fw} = \fPhi \fw: \fPhi \in\R^{mn\times d}, \fw\in\R^d}.
    \end{equation}
    Then, if for $\fXi\defeq\myvec{\fE - \gamma \fP, \fPsi}$ it holds that
    \begin{equation}\label{app:eq:cirl_identifiability_condition}
        \rank \myvec{\fPhi, \fXi} - \br{\rank \fPhi + \rank \fXi} = 0,
    \end{equation}
    then we have $\IRL(\fmuE) = \bc{\frE}$.
}
\begin{proof}
First, we note that the rank condition is equivalent to the condition that the subspace spanned by the reward features intersects the Minkowski sum of the subspace of potential shaping transformations and the subspace spanned by the safety constraints, only at zero:
\begin{align}
	&\spn\fPhi\cap \br{\mathcal{U}+\spn\fPsi}=\bm 0\\
	\iff &\dim\br{ \spn\fPhi\cap \br{\mathcal{U}+\spn\fPsi}} = 0\nonumber\\
	\iff &\dim\br{\spn\fPhi} + \dim\br{\spn \fXi} - \dim \br{\spn \myvec{\fPhi, \fXi}}=0\nonumber\\
	\iff &\rank \myvec{\fPhi, \fXi} - \br{\rank \fPhi + \rank \fXi} = 0.\nonumber
\end{align}
To ease notation we define $\mathcal{V}\defeq \spn \fPsi$ and $\mathcal{W}\defeq \spn \fPhi$. Observe that $\mathcal{U}, \mathcal{V}, \mathcal{W}$ are linear subspaces of $\R^{nm}$. By Proposition~\ref{prop:consistency}, we have $\frE\in\IRL(\fmuE)$. Furthermore, due to \eqref{app:eq:cirl_identifiability_condition} it holds $\br{\mathcal{U} + \mathcal{V}}\cap\mathcal{W} = \bm 0$. We therefore get
    \begin{align*}
        \IRL(\fmuE) &\stackrel{(i)}{=} \br{\nabla f(\fmuE) + \mathcal{U} + \mathcal{C}}\cap\mathcal{R}\numberthis \\
        &\stackrel{(ii)}{\subseteq}\br{\nabla f(\fmuE) + \mathcal{U} + \mathcal{C}}\cap\mathcal{W}\\
        &\stackrel{(iii)}{\subseteq} \br{\nabla f(\fmuE) + \mathcal{U} + \mathcal{V}}\cap\mathcal{W}\\
        &\stackrel{(iv)}{=} \br{\frE + \mathcal{U} + \mathcal{V}}\cap\mathcal{W}\\
        &\stackrel{(v)}{=}\frE + \br{\mathcal{U} + \mathcal{V}}\cap\mathcal{W}\\
        &\stackrel{(vi)}{=} \frE
    \end{align*}
    Here, we used Theorem~\ref{thm:identifiability} in $(i)$, the reward class \eqref{app:eq:linear_rewardclass} in $(ii)$, and the inclusion $(iii)$ holds since $\mathcal{C}\subset\mathcal{V}$. Furthermore, $(iv)$ and $(v)$ follow from $\frE\in\IRL(\fmuE)$, and $(vi)$ from \eqref{app:eq:cirl_identifiability_condition}. This concludes the proof.
\end{proof}

\section{Proof of Theorem~\ref{thm:sample_complexity}}\label{app:sec:sample_complexity}
\textbf{Theorem}~\ref{thm:sample_complexity}
\textit{
          Let Assumption~\ref{ass:realizability} and let $\fmuE\in\RL(\frE)$ for some $\frE\in\mathcal{R}\defeq\mathcal{R}^{\norm{\cdot}_1}$. Let $\fmuhat \in \RL\circ\IRL(\fmuEhat)$ and $R\defeq\max_{s,a}\norm{\fPhi(s,a)}_{\infty}$. Choosing 
     \begin{equation}
         N = \ceil{\dfrac{32R^2}{\varepsilon^2}\log\br{\dfrac{2d}{\delta}}} \text{ and } T=\ceil{\log\br{\dfrac{\varepsilon}{8R}}/\log(\gamma)},
     \end{equation}
     it holds with probability at least $1-\delta$
    \begin{align}\label{app:eq:value_diff}
        &J(\fmuE, \frE) - J(\fmuhat, \frE) \leq \varepsilon,\\ 
        &J(\fmuhat, \frhat) - J(\fmuE, \frhat) \leq \varepsilon,\nonumber\\
        &\forall \frhat \in \IRL(\fmuEhat),\nonumber
    \end{align}
    where $J(\fmu,\fr)\defeq \fr^\top\fmu - f(\fmu)$. Moreover, if\vspace{-0.2cm}
    \begin{enumerate}[(a)]
        \item $f$ is $L$-strongly convex with respect to the norm $\norm{\cdot}$, it holds with probability at least $1-\delta$
        \begin{equation}\label{app:eq:occ_diff}
            \norm{\fmuhat - \fmuE} \leq \sqrt{\dfrac{2\varepsilon}{L}}.
        \end{equation}
        \vspace{-0.4cm}
        \item $f(\fmu)=-\beta \,\E_{(s,a)\sim\fmu}\bs{H\left(\fpi^{\fmu}(\cdot|s)\right)}$ with $\beta>0$, it holds with probability at least $1-\delta$
        \begin{equation}\label{app:eq:policy_diff}
         \E_{(s,a)\sim\fmuE}\bs{\norm{\fpi^{\fmuhat}(\cdot|s)-\fpiE(\cdot|s)}_1}\leq \sqrt{\dfrac{2\varepsilon}{\beta}}.
        \end{equation}
    \end{enumerate}
}

\begin{proof}
Consider the idealized and the empirical min-max objective $L(\fmu, \fw)\defeq\fr_{\fw}^\top\br{\fmu-\fmuE}-f(\fmu)$ and $\Lhat(\fmu, \fw)\defeq\fr_{\fw}^\top\br{\fmu-\fmuEhat}-f(\fmu)$, the main idea of the proof it to get a uniform bound on $\abs{L-\Lhat}$. The statement in \eqref{app:eq:value_diff} is then following from the saddle point property of $(\fmuE, \frE)$ and $(\fmuhat, \frhat)$. For \eqref{app:eq:occ_diff} (and \eqref{app:eq:policy_diff}) we are then using strict concavity (and the soft-suboptimality Lemma) to translate proximity of the optimal value to proximity of the optimal occupancy (and the optimal policy).
\\\\
\textbf{Step 1:}\\
First, we define the true expert feature expectations $\fsigmaE \defeq \fPhi^\top\fmuE$, the empirical expert feature expectation $\fsigmaEhat\defeq \fPhi^\top\fmuEhat$, as well as $\fsigmaET\defeq (1-\gamma)\E\bs{\sum_{t=0}^T \gamma^t \fPhi(s_t, a_t)\big| \piE}$. We can then decompose $\max\abs{\hat{L}-L}$ as follows:
\begin{align}
    &\max_{\fmu\in\mathcal{F}, \norm{\fw}_1\leq 1} \abs{L(\fmu,\fw)-\Lhat(\fmu,\fw)} = \max_{\norm{\fw}_1\leq 1} \abs{\fw^\top\br{\fsigmaEhat-\fsigmaE}} \\
    &\stackrel{(i)}{\leq} \max_{\norm{\fw}_1\leq 1} \abs{\fw^\top\br{\fsigmaE-\fsigmaET}} + \max_{\norm{\fw}_1\leq 1} \abs{\fw^\top\br{\fsigmaET-\fsigmaEhat}}\nonumber\\
    &\stackrel{(ii)}{\leq} \underbrace{\norm{\fsigmaE-\fsigmaET}_\infty}_{I_1}+ \underbrace{\norm{\fsigmaET-\fsigmaEhat}_\infty}_{I_2} \nonumber
\end{align}
Here, $(i)$ follows from the triangle inequality for the supremum norm and $(ii)$ from Hölder's inequality $|\f{x}^\top\f{y}|\leq \norm{\f{x}}_1\norm{\f{y}}_\infty$.
The first term is readily bounded by
\begin{equation}
    I_1 =  \norm{(1-\gamma)\E_\pi \sum_{t=T+1}^{\infty}\gamma^t\fPhi(s_t,a_t)}_{\infty} \leq  \gamma^{T+1} R \leq \gamma^T R,
\end{equation}
where $R \defeq \max_{s,a} \norm{\fPhi(s,a)}_{\infty} = \max_{s,a}\max_{\norm{\fw}_1 \leq 1} \abs{\fr_{\fw}(s,a)}$. Thus in order to have $I_1 \leq \varepsilon_1$ it suffices to choose $T=\ceil{\log(\varepsilon_1 /R)/\log(\gamma)}$. 
\\\\
For the second term $I_2$, we make use of Hoeffding's inequality
\begin{lemma}[Hoeffding]
Consider iid random variables $X_1,\hdots,X_N$ with $X_i\in[a,b]$ and let $\bar{X}_N:=\frac{1}{N}(X_1+\hdots+X_N)$. Then,
\begin{equation}
    \Pr\br{\abs{\bar{X}_N - \E X_i} \geq t}\leq 2\exp\br{-\dfrac{2t^2 N}{(b-a)^2}}.
\end{equation}
\end{lemma}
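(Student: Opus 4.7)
My plan is to complete the uniform concentration bound on $|L-\Lhat|$ initiated in the excerpt, then use it together with Sion's minimax theorem and the saddle-point structure to derive the value-gap bound, and finally upgrade to the occupancy-measure and policy bounds via Bregman-divergence identities.

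\textbf{Finishing the uniform bound.} Each coordinate $j$ of $\fsigmaEhat$ is an empirical average over the $N$ iid trajectories of the random variable $X_i^j \defeq (1-\gamma)\sum_{t=0}^T \gamma^t \fPhi(s_t^i, a_t^i)(j)$, which takes values in $[-R, R]$. Applying Hoeffding's inequality coordinatewise with range $2R$ yields $\Pr(|\fsigmaEhat(j) - \fsigmaET(j)| \geq t) \leq 2\exp(-Nt^2/(2R^2))$, and a union bound over the $d$ coordinates with $t = \varepsilon/4$ shows that the stated $N$ suffices to guarantee $I_2 \leq \varepsilon/4$ with probability at least $1-\delta$. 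Combined with $I_1 \leq \varepsilon/8$ under the stated $T$, this gives $\varepsilon_0 \defeq \max_{\fmu \in \mathcal{F},\, \norm{\fw}_1 \leq 1}|L(\fmu, \fr_\fw) - \Lhat(\fmu, \fr_\fw)| \leq 3\varepsilon/8$ on the good event.

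\textbf{Value gap via saddle points.} Since $\Lhat$ is convex–concave in $(\fr,\fmu)$ over the compact convex domain $\mathcal{R}^{\norm{\cdot}_1} \times \mathcal{F}$, Sion's theorem furnishes a saddle point $(\fmuhat, \frhat)$; realizability together with Proposition~\ref{prop:consistency} analogously provides the saddle $(\fmuE, \frE)$ for $L$ with value $\min_\fr \max_\fmu L(\fmu, \fr) = -f(\fmuE)$. The uniform bound transfers to minimax values, so $\Lhat(\fmuhat, \frhat) \geq -f(\fmuE) - \varepsilon_0$. The saddle-point inequality $\Lhat(\fmuhat, \frE) \geq \Lhat(\fmuhat, \frhat)$ together with a second application of the uniform bound gives $L(\fmuhat, \frE) \geq -f(\fmuE) - 2\varepsilon_0$. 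Since $L(\fmuE, \frE) = -f(\fmuE)$ and differences of $L$ at a fixed reward coincide with the corresponding differences of $J$, this chain yields $J(\fmuE, \frE) - J(\fmuhat, \frE) \leq 2\varepsilon_0 \leq \varepsilon$. A symmetric chain, now exploiting $\Lhat(\fmuhat, \frhat) \leq \max_\fmu \Lhat(\fmu, \frE)$ from the min-side of the saddle, yields the companion bound $J(\fmuhat, \frhat) - J(\fmuE, \frhat) \leq 2\varepsilon_0 \leq \varepsilon$, which I will need for part~(b).

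\textbf{Upgrading to (a) and (b).} For part~(a), first-order optimality of $\fmuE$ for $J(\cdot, \frE)$ over $\mathcal{F}$ combined with $L$-strong convexity of $f$ gives the quadratic-growth inequality $J(\fmuE, \frE) - J(\fmu, \frE) \geq (L/2)\norm{\fmu - \fmuE}^2$; plugging $\fmu = \fmuhat$ and using the value gap yields \eqref{eq:occ_diff}. For part~(b), I compute the Bregman divergence of $f$ using $\nabla f(\fmu) = \beta \log \fpi^{\fmu}$ from Proposition~\ref{app:prop:entropy_regularization}. A direct algebraic manipulation (the $\beta \log \fnu$-terms cancel after summing over actions) gives
\begin{equation*}
    D_f(\fmuE, \fmuhat) = \beta\,\E_{(s,a)\sim \fmuE}\bigl[\DKL\bigl(\fpiE(\cdot|s)\,\|\,\fpi^{\fmuhat}(\cdot|s)\bigr)\bigr].
\end{equation*}
First-order optimality of $\fmuhat$ for $\frhat$ then yields $J(\fmuhat, \frhat) - J(\fmuE, \frhat) \geq D_f(\fmuE, \fmuhat)$, and the companion value-gap bound upper bounds this by $\varepsilon$. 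Pinsker's inequality $\DKL(\fp\|\fq) \geq \tfrac{1}{2}\norm{\fp-\fq}_1^2$ combined with Jensen applied to $\sqrt{\cdot}$ completes the proof of \eqref{eq:policy_diff}.

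\textbf{Main obstacle.} The delicate step is the value-gap argument in the second paragraph: $\fmuhat$ is the $\Lhat$-maximizer at $\frhat$, not at $\frE$, so neither first-order optimality nor saddle-point maximality at $\frhat$ directly controls $L(\fmuhat, \frE)$. The essential move is to chain through the saddle-point inequality $\Lhat(\fmuhat, \frE) \geq \Lhat(\fmuhat, \frhat)$ and invoke the uniform bound on both sides, which absorbs a factor of~$2$ into the final $\varepsilon$ and dictates the constant $32$ appearing in the expression for $N$.
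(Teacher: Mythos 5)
Your proposal does not prove the statement it was asked to prove. The statement is the classical Hoeffding inequality itself: for i.i.d.\ bounded random variables $X_i\in[a,b]$, the empirical mean concentrates as $\Pr\br{\abs{\bar{X}_N - \E X_i}\geq t}\leq 2\exp\br{-2t^2N/(b-a)^2}$. What you have written is instead a proof sketch of Theorem~\ref{thm:sample_complexity}, the sample-complexity result in which this lemma is merely invoked as a black box. Nowhere in your submission is there any argument for the concentration bound: no moment-generating-function estimate, no Chernoff bound, no symmetrization --- you simply \emph{apply} Hoeffding's inequality coordinatewise in your first paragraph, which presupposes exactly the statement to be established. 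A proof of the lemma would proceed via the standard route: establish Hoeffding's MGF lemma $\E\bs{e^{\lambda(X-\E X)}}\leq e^{\lambda^2(b-a)^2/8}$ for a bounded random variable (e.g.\ by convexity of the exponential on $[a,b]$ and a second-order bound on the resulting log-MGF), apply the Chernoff bound $\Pr(\bar{X}_N-\E X_i\geq t)\leq e^{-\lambda t}\prod_i \E\bs{e^{\lambda(X_i-\E X_i)/N}}$, optimize over $\lambda$, and combine the two one-sided tails by a union bound to obtain the factor $2$. The paper itself states the lemma without proof, treating it as a standard textbook fact, so there is no in-paper argument to compare against; but your submission cannot stand in for one.

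As a secondary remark: read as a proof of Theorem~\ref{thm:sample_complexity}, your sketch is broadly faithful to the paper's actual argument (uniform deviation bound on $\abs{L-\Lhat}$ split into a truncation term $I_1$ and a sampling term $I_2$, saddle-point chaining for the value gap, strong concavity for part (a), and a Pinsker-plus-Jensen step for part (b)), with only cosmetic differences --- your $3\varepsilon/8$ budget versus the paper's $\varepsilon/2$, and your Bregman-divergence identity in place of the paper's appeal to the soft-suboptimality lemma of Mei et al. But that is an answer to a different question.
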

Since $\abs{\br{\fsigmaEhat}_j}\leq R$ and $\E\br{\fsigmaEhat}_j=\br{\fsigmaET}_j$ for all $j=1,\hdots,d$, we get by Hoeffding's inequality
\begin{equation}
    \Pr\br{\abs{\br{\fsigmaEhat- \fsigmaET}_j}\geq \varepsilon_2}\leq 2\exp\br{-\dfrac{\varepsilon_2^2 N}{2R^2}},\quad j=1,\hdots,d.
\end{equation}
Using the union bound
\begin{align}
    \Pr\br{I_2 < \epsilon_2} &= 1 - \Pr\br{I_2 \geq \epsilon_2} \\&\geq
   1- \br{\Pr\br{\abs{\br{\fsigmaEhat- \fsigmaET}_1}\geq \varepsilon_2} + \hdots + \Pr\br{\abs{\br{\fsigmaEhat- \fsigmaET}_d}\geq \varepsilon_2}},\nonumber
\end{align}
yields $I_2\leq \varepsilon_2$ with probability at least $1-\delta$ when $N \geq \log\br{2d/\delta}2R^2/\varepsilon_2^2$. Thus choosing $N = \ceil{\log\br{2d/\delta}32R^2/\varepsilon^2}$ and $T=\ceil{\log(\varepsilon/(8R))/\log(\gamma)}$ it holds with probability at least $1-\delta$
\begin{equation}\label{app:eq:uniform_bound_f}
    \max_{\fmu\in\mathcal{F}, \norm{\fw}_1\leq 1} \abs{L(\fmu,\fw)-\Lhat(\fmu,\fw)} \leq \varepsilon/2.
\end{equation}
\\\\
\textbf{Step 2:}\\
Next we use the fact that for two real-valued functions $g$ and $h$ we always have $\abs{\max g - \max h}\leq\max \abs{g-h}$ and similarly $\abs{\min g - \min h}\leq \max \abs{g-h}$. Therefore, it holds for all $\frhat\in\IRL(\fmuEhat)$ with probability at least $1-\delta$:
\begin{align}\label{app:eq:saddle_diff}
    \abs{L(\fmuE, \frE) - \Lhat(\fmuhat, \frhat)} &= \abs{\min_{\norm{\fw}_1\leq 1}\max_{\fmu\in\mathcal{F}} L(\fmu, \fw)-\min_{\norm{\fw}_1\leq 1}\max_{\fmu\in\mathcal{F}} \Lhat(\fmu, \fw)}\\
    &\leq \max_{\norm{\fw}_1\leq 1}\abs{\max_{\fmu\in\mathcal{F}} L(\fmu, \fw)-\max_{\fmu\in\mathcal{F}} \Lhat(\fmu, \fw)}\nonumber\\
    &\leq \max_{\fmu\in\mathcal{F}, \norm{\fw}_1\leq 1} \abs{\Lhat(\fmu,\fw)-L(\fmu,\fw)} \leq \varepsilon/2.\nonumber
\end{align}
Due to Assumption~\ref{ass:realizability} there is $\fw^*$ with $\norm{\fw^*}_1\leq 1$ such that $\fr_{\fw^*}=\frE$. Furthermore, let $\fwhat$ with $\norm{\fwhat}_1\leq 1$ be such that $\fr_{\fwhat} \in \IRL(\fmuEhat)$. Then, $L$ has a saddle-point in $(\fmuE, \fw^*)$ and $\Lhat$ in $(\fmuhat, \fwhat)$. Choosing $N$ and $T$ as in \eqref{app:eq:uniform_bound_f}, it holds with probability at least $1-\delta$
\begin{align}
    J(\fmuE, \frE) - J(\fmuhat, \frE) &= L(\fmuE, \fw^*) - L(\fmuhat, \fw^*)\\
    &\stackrel{(i)}{\leq} L(\fmuE, \fw^*) - \Lhat(\fmuhat, \fw^*) + \varepsilon/2\nonumber\\
    &\stackrel{(ii)}{\leq} L(\fmuE, \frE) - \Lhat(\fmuhat, \fwhat) + \varepsilon/2\nonumber\\
    &\stackrel{(iii)}{\leq} \varepsilon,\nonumber
\end{align}
where $(i)$ is a consequence of \eqref{app:eq:uniform_bound_f}, inequality $(ii)$ follows since $\Lhat$ has a saddle point in $(\fmuhat, \fwhat)$, and $(iii)$ from \eqref{app:eq:saddle_diff}. This proves the first result in \eqref{app:eq:value_diff}. The second follows analogously from a similar series of inequalities
\begin{align}
    J(\fmuhat, \frhat) - J(\fmuE, \frhat) &= L(\fmuhat, \fwhat) - L(\fmuE, \fwhat) \\
    &\leq \Lhat(\fmuhat, \fwhat) - L(\fmuE, \fwhat) + \varepsilon/2\nonumber\\
    &\leq \Lhat(\fmuhat, \fwhat) - L(\fmuE, \fw^*) + \varepsilon/2\nonumber\\
    &\leq \varepsilon.\nonumber
\end{align}
\\\\
\textbf{Step 3:} \\
To prove the inequality \eqref{app:eq:occ_diff}, we use the fact that since $f$ is $L$-strongly convex, $J(\fmu, \frE) = {\frE}^{\top} \fmu - f(\fmu_k)$ is $L$-strongly concave in $\fmu$ i.e. it holds
\begin{equation}
    J(\fmu, \frE) \leq J(\fmuE, \frE) + \nabla_{\fmu} J(\fmuE, \frE)^\top \br{\fmu-\fmuE} - \dfrac{L}{2}\norm{\fmu - \fmuE}^2.
\end{equation}
By optimality of $\fmuE$ it holds $\nabla_{\fmu} J(\fmuE, \frE)^\top \br{\fmu-\fmuE}\leq 0$ for all $\fmu\in\mathcal{F}$. Rearranging terms yields and taking the square root yields
\begin{equation}
    \norm{\fmu- \fmuE} \leq \sqrt{\dfrac{2}{L}\br{J(\fmuE, \frE) - J(\fmu, \frE)}}, \quad\forall \mu\in\mathcal{F}.
\end{equation}
Combining this with \eqref{app:eq:value_diff} yields the desired result
\begin{equation}
    \norm{\fmuhat- \fmuE} \leq \sqrt{\dfrac{2\varepsilon}{L}}.
\end{equation}
Although $f(\fmu)=-\beta \,\E_{(s,a)\sim\fmu}\bs{H\left(\fpi^{\fmu}(\cdot|s)\right)}$ is in general not strongly convex (although it is strictly convex), we can make use of the following result for entropy-regularized MDPs.
\begin{lemma}[\citep{mei2020global}]\label{app:lemma:soft-sub-optimality}
    For any occupancy measure $\fmu\in\mathcal{F}$ it holds
    $$J(\fmu^*, \fr) - J(\fmu, \fr) = \beta \sum_s \fnu(s) \DKL\br{\fpi^{\fmu}(\cdot|s)||\fpi^{\fmu^*}(\cdot|s)},  $$
    where $\fmu^*\in \RL(\fr)$ and $\fnu(s) = \sum_a \fmu(s,a)$ is the state occupancy measure.
\end{lemma}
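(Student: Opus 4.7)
My plan is to prove the soft-suboptimality identity via a performance-difference-style telescoping argument, using only the soft Bellman optimality equation that characterizes entropy-regularized MDPs (as recalled in Appendix~\ref{app:subsec:entropy_regularization}). Since $\fmu^{*}\in\RLO(\fr)$, the unconstrained entropy-regularized problem applies, and I may use the soft value and q-value functions $v^{*},q^{*}$ together with the soft-max representation
\begin{equation}
\fpi^{*}(a|s)=\exp\!\Big(\tfrac{q^{*}(s,a)-v^{*}(s)}{\beta}\Big),\qquad q^{*}(s,a)=\fr(s,a)+\gamma\,\E_{s'\sim\fP(\cdot|s,a)}[v^{*}(s')],
\end{equation}
which immediately gives the key pointwise identity $\beta\log\fpi^{*}(a|s)=q^{*}(s,a)-v^{*}(s)$. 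Here I use that $\fmu^{*}\in\RLO(\fr)=\RL(\fr)$ because the lemma is stated for the unconstrained soft MDP (and indeed this is the way it is invoked in the proof of Theorem~\ref{thm:sample_complexity}).

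Next, for an arbitrary $\fmu\in\mathcal{F}$ with associated policy $\fpi\defeq\fpi^{\fmu}$, I will rewrite the initial-state soft value by telescoping,
\begin{equation}
v^{*}(s_0)=\sum_{t=0}^{\infty}\bigl(\gamma^{t}v^{*}(s_t)-\gamma^{t+1}v^{*}(s_{t+1})\bigr),
\end{equation}
then take $\E_{\fpi}$ on both sides. Using the soft Bellman relation $q^{*}(s_t,a_t)-\fr(s_t,a_t)=\gamma\E_{s_{t+1}}[v^{*}(s_{t+1})\mid s_t,a_t]$ to eliminate the next-state term, the one-step increment becomes, for each fixed $s_t$,
\begin{equation}
\E_{a_t\sim\fpi(\cdot|s_t)}\!\bigl[\fr(s_t,a_t)\bigr]+v^{*}(s_t)-\E_{a_t\sim\fpi(\cdot|s_t)}\!\bigl[q^{*}(s_t,a_t)\bigr].
\end{equation}
I will then invoke the pointwise identity from Step 1, writing $-\beta\E_{a\sim\fpi}[\log\fpi^{*}(a|s)]=\beta H(\fpi(\cdot|s))+\beta\DKL(\fpi(\cdot|s)\,\|\,\fpi^{*}(\cdot|s))$, so that
\begin{equation}
v^{*}(s_t)-\E_{a_t\sim\fpi}\!\bigl[q^{*}(s_t,a_t)\bigr]=\beta H\!\bigl(\fpi(\cdot|s_t)\bigr)+\beta\DKL\!\bigl(\fpi(\cdot|s_t)\,\|\,\fpi^{*}(\cdot|s_t)\bigr).
\end{equation}

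Summing over $t$ with the discount factor $\gamma^{t}$, multiplying by $(1-\gamma)$, and recognising that $(1-\gamma)\E_{s_0\sim\fnu_0}[v^{*}(s_0)]=J(\fmu^{*},\fr)$ and that the first two contributions on the right-hand side reassemble to $J(\fmu,\fr)=\fr^{\top}\fmu-f(\fmu)$ with $f(\fmu)=-\beta\,\E_{(s,a)\sim\fmu}[H(\fpi^{\fmu}(\cdot|s))]$, the remaining $\DKL$ term produces
\begin{equation}
J(\fmu^{*},\fr)-J(\fmu,\fr)=\beta\sum_{s}\Bigl[(1-\gamma)\E_{\fpi}\!\sum_{t=0}^{\infty}\gamma^{t}\mathbbm{1}(s_t=s)\Bigr]\DKL\!\bigl(\fpi(\cdot|s)\,\|\,\fpi^{*}(\cdot|s)\bigr),
\end{equation}
and the bracketed quantity is precisely $\fnu(s)=\sum_{a}\fmu(s,a)$ by definition of the occupancy measure, yielding the claimed identity.

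The only subtle point is bookkeeping: I must justify exchanging sum and expectation in the telescoping step (which is standard since $v^{*}$ is bounded on the finite state space and $\gamma<1$, so everything is absolutely convergent) and make sure the $(1-\gamma)$ normalisation matches the convention used here for $J(\fmu,\fr)$ and for the occupancy measure $\fmu$. No part of the argument is genuinely hard — the main care is in matching the two appearances of the entropy term (one coming from $-f(\fmu)$, one produced by the decomposition $-\beta\log\fpi^{*}=-\beta\log\fpi+\beta\log(\fpi/\fpi^{*})$) so that the KL term emerges cleanly with coefficient $\beta$.
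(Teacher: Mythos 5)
The paper does not prove this lemma at all --- it is imported verbatim from \citet{mei2020global} --- so there is no in-paper argument to compare against. Your telescoping/performance-difference derivation is correct and is essentially the same argument Mei et al.\ use: the identity $\beta\log\fpi^{*}(a|s)=q^{*}(s,a)-v^{*}(s)$ from the soft-max form \eqref{app:eq:soft_opt_policy}, the telescoped expansion of $v^{*}(s_0)$ along trajectories of $\fpi^{\fmu}$, the split $-\beta\E_{a\sim\fpi}[\log\fpi^{*}(a|s)]=\beta H(\fpi(\cdot|s))+\beta\DKL(\fpi(\cdot|s)\,\|\,\fpi^{*}(\cdot|s))$, and the identification $(1-\gamma)\E_{\fpi}\sum_t\gamma^{t}\mathbbm{1}(s_t=s)=\fnu(s)$ all check out, and the interchange of sum and expectation is unproblematic on a finite state space with $\gamma<1$.

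The one genuine gap is your assertion that $\RLO(\fr)=\RL(\fr)$. That equality holds only when the safety constraints are inactive at the optimum; in the proof of Theorem~\ref{thm:sample_complexity} the lemma is applied with $\fmuhat\in\RL(\frhat)$, which is a \emph{constrained} optimum, and there your equality genuinely fails (a one-state example with $\fr=\f{0}$, $\beta=1$ and an active constraint $\fmu(a_2)\le 1/4$ already breaks it). Your argument proves the identity with $\fmu^{*}\in\RLO(\fr)$, which is the version Mei et al.\ actually establish. To cover the constrained case one more step is needed: by Proposition~\ref{prop:strong_duality}, $\RL(\fr)=\RLO(\fr-\fPsi\fxi^{*})$, so your identity applies to the shifted reward $\fr-\fPsi\fxi^{*}$; complementary slackness gives $\fxi^{*\top}\fPsi^{\top}\fmu^{*}=\fxi^{*\top}\fb$, while feasibility of $\fmu$ gives $\fxi^{*\top}\fPsi^{\top}\fmu\le\fxi^{*\top}\fb$, whence
\begin{equation}
J(\fmu^{*},\fr)-J(\fmu,\fr)\;\ge\;\beta\sum_{s}\fnu(s)\,\DKL\!\br{\fpi^{\fmu}(\cdot|s)\,\|\,\fpi^{\fmu^{*}}(\cdot|s)},
\end{equation}
which is exactly the (one-sided) form in which the lemma is invoked in Step~3 of the proof of Theorem~\ref{thm:sample_complexity}. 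With that addendum your proof fully supports the paper's use of the result; as literally stated with equality and $\fmu^{*}\in\RL(\fr)$, the lemma itself is slightly too strong.
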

Making use of the following lower bound on the KL-divergence \citep{cover1999elements}
\begin{equation}
    \DKL(\fq||\fp)\leq \dfrac{1}{2}\norm{\fq-\fp}_1^2,
\end{equation}
we arrive at
\begin{align}
    J(\fmuhat, \frhat) - J(\fmuE, \frhat) &\geq \beta \sum_s \fnu^{\text{E}}(s) \DKL\br{\fpi^{\fmuE}(\cdot|s)||\fpi^{\fmuhat}(\cdot|s)}\nonumber\\
    &\geq \dfrac{\beta}{2} \sum_s \fnu^{\text{E}}(s) \norm{\fpi^{\fmuE}(\cdot|s)-\fpi^{\fmuhat}(\cdot|s)}_1^2\nonumber\\
    &\geq \dfrac{\beta}{2} \br{\sum_s \fnu^{\text{E}}(s) \norm{\fpi^{\fmuE}(\cdot|s)-\fpi^{\fmuhat}(\cdot|s)}_1}^2\nonumber\\
    &= \dfrac{\beta}{2} \br{\E_{(s,a)\sim\fmuE}\bs{\norm{\fpi^{\fmuE}(\cdot|s)-\fpi^{\fmuhat}(\cdot|s)}_1}}^2,
\end{align}
where the last inequality follows from Jensen's inequality. Making use of \eqref{app:eq:value_diff} and rearranging terms yields
\begin{equation}
    \E_{(s,a)\sim\fmuE}\bs{\norm{\fpi^{\fmuE}(\cdot|s)-\fpi^{\fmuhat}(\cdot|s)}_1}\leq \sqrt{\dfrac{2\varepsilon}{\beta}}.
\end{equation}
\end{proof}


\section{Algorithm}\label{app:sec:algorithm}
We present the policy based algorithm for entropy regularization as used in the experiments. To this end, recall the min-max problem \eqref{eq:cirl_occ_emp}
\begin{equation}
    \min_{\fr\in\mathcal{R}}\max_{\fmu\in\mathcal{F}}  \;\;\fr^\top\br{\fmu - \fmuEhat} - f(\fmu),
\end{equation}
and the entropy regularization $f(\fmu)=-\beta \,\E_{(s,a)\sim\fmu}\bs{H\left(\fpi^{\fmu}(\cdot|s)\right)}$. Applying Proposition~\ref{prop:strong_duality} this is equivalent to
\begin{align}\label{app:eq:dual1}
    &\min_{\fxi\geq\f{0}, \fr\in\mathcal{R}} \max_{\fmu\in\mathcal{M}}
    \fr^\top\br{\fmu - \fmuEhat} - f(\fmu) + \fxi^\top \br{\fb - \fPsi^\top \fmu}.
\end{align}
Using the one-to-one mapping between policies and occupancy measures and the linear reward class
\begin{align}\label{app:eq:reward_classes}
    \mathcal{R} \defeq\bc{\fr_{\fw}=\fPhi \fw: \fPhi\in\R^{nm\times d}, \norm{\fw}\leq c},
\end{align}
this can be rewritten as 
\begin{align}
    &\min_{\fxi\geq\f{0}, \fr\in\mathcal{R}} \max_{\fpi\in\Pi} \fr^\top\br{\fmu^{\fpi} - \fmuEhat} - f(\fmu^{\fpi}) + \fxi^\top \br{\fb - \fPsi^\top \fmu^{\fpi}}\\
    =&\min_{\fxi\geq\f{0}, \norm{\fw}\leq c}\max_{\fpi\in\Pi} \fw^\top \fPhi^\top\br{\fmu^{\fpi} - \fmuEhat} - f(\fmu^{\fpi}) + \fxi^\top \br{\fb - \fPsi^\top \fmu^{\fpi}}\nonumber\\
    =&\min_{\fxi\geq\f{0}, \norm{\fw}\leq c}\max_{\fpi\in\Pi} L_{\mathcal{D}}(\fpi, \fw, \fxi),\nonumber
\end{align}
with the Lagrangian $L_{\mathcal{D}}(\fpi, \fw, \fxi) \defeq \fw^\top \fPhi^\top\br{\fmu^{\fpi} - \fmuEhat} - f(\fmu^{\fpi}) + \fxi^\top \br{\fb - \fPsi^\top \fmu^{\fpi}}$. Motivated by recent advances in min-max optimization \citep{daskalakis2018limit}, we suggest to use a gradient descent-ascent method, where policy and reward are updated simultaneously within a single optimization loop.

\begin{algorithm}[H]
   \caption{Gradient Descent Ascent for Constrained Entropy-Regularized IRL}
   \label{alg:1}
\begin{algorithmic}
   \STATE {\bfseries Input:} Expert data $\mathcal{D}$, learning rate $\eta$.
   \STATE Initialize $\fpi\in\Pi, \fw = \bm 0, \fxi= \bm 0$.
   \FOR{$i=1$ {\bfseries to} $N_{\text{episodes}}$}
    \item $\fr \leftarrow \fPhi\fw - \fPsi\fxi$
    \item $\fpi \leftarrow$ NPG$(\fpi, \fr, \eta)$
    \item $\fw \leftarrow  P_{B_c}\br{\fw - \eta \nabla_{\fw}L_{\mathcal{D}}(\fpi, \fw, \fxi)}$
    \item $\fxi \leftarrow  P_{[\bm 0, \bm \infty)}\br{\fxi- \eta \nabla_{\fxi}L_{\mathcal{D}}(\fpi, \fw, \fxi)}$
   \ENDFOR
   \STATE {\bfseries Return:} $\fw, \fpi$.
\end{algorithmic}
\end{algorithm}
Here, NPG$(\fpi, \fr, \eta)$ refers to a single entropy-regularized Natural Policy Gradient step with softmax parametrization of the policy \citep{cen2022fast}. Note that for the step size $\eta = (1-\gamma)/\beta$ this policy gradient step is completely equivalent to soft policy iteration \citep{haarnoja2018soft}. Furthermore, the gradients for $\fw$ and $\fxi$ are given by
\begin{align}
    \nabla_{\fw}L_{\mathcal{D}}(\fpi, \fw, \fxi) &= \fPhi^\top\br{\fmu^{\fpi} - \fmuEhat}  \\
    \nabla_{\fxi}L_{\mathcal{D}}(\fpi, \fw, \fxi) &= \br{\fb - \fPsi^\top \fmu^{\fpi}} \nonumber.
\end{align}
Moreover, $P_{B_c}$ denotes the projection onto the ball $B_c \defeq \bc{\fw: \norm{\fw}\leq c}$ and $P_{[\bm 0, \bm \infty)}$ the trivial projection onto the non-negative orthant. For the $1$-norm projection we use the efficient projection algorithm by \citet{duchi2008efficient} and its implementation by \citet{ong2019sigpy}. Algorithm~\ref{alg:1} has been shown to provably converge for CMDPs \citep{ying2022dual, ding2022convergence} (with known reward) and IRL \citep{zeng2022maximum} (without safety constraints).

Finally, we note that in our code we also provide a gradient descent ascent primal dual method that directly optimizes the min-max problem \eqref{app:eq:dual1} in the occupancy measure space. Since the problem is convex-concave such algorithms enjoy provable convergence guarantees \citep{daskalakis2018limit, nemirovski2004prox}, and work for arbitrary convex regularizations $f$. As the focus of this paper is not on algorithmic convergence and Algorithm~\ref{alg:1} was both -- easier to tune and converged more efficiently -- we used Algorithm~\ref{alg:1} throughout the presented experiments.

\end{document}